\newcommand{\reals}{\mathbb{R}}
\newcommand{\R}{\mathbb{R}}
\newcommand{\N}{\mathbb{N}}
\newcommand{\I}{\mathbb{I}}
\newcommand{\supp}{\text{supp}}
\newcommand{\zero}{\boldsymbol{0}}
\newcommand{\abs}[1]{\left| #1 \right|}
\newcommand{\vertiii}[1]{{\left\vert\kern-0.25ex\left\vert\kern-0.25ex\left\vert #1 
    \right\vert\kern-0.25ex\right\vert\kern-0.25ex\right\vert}}
\renewcommand{\P}{\mathbb{P}}
\newcommand{\beq}{\begin{eqnarray*}}
\newcommand{\eeq}{\end{eqnarray*}}
\newcommand{\beqn}{\begin{eqnarray}}
\newcommand{\eeqn}{\end{eqnarray}}
\newcommand{\ent}[1][]{%
\ifthenelse{\isempty{#1}}{%
\mathrm{H}
}{
\mathrm{H}^{(#1)}
}}
\newcommand{\loch}[1][]{%
\ifthenelse{\isempty{#1}}{%
\mathrm{h}
}{
\mathrm{h}^{(#1)}
}}
\newcommand{\Ical}{\mathcal{I}}
\newcommand{\bigo}{\mathcal{O}}
\newcommand{\E}{\mathbb{E}}
\newcommand{\argmin}[1]{\underset{#1}{\mathrm{argmin}}}
\newcommand{\argmax}[1]{\underset{#1}{\mathrm{argmax}}}
\newcommand{\bx}{\mathbf{x}}
\newcommand{\bu}{\mathbf{u}}
\newcommand{\bv}{\mathbf{v}}
\newcommand{\bz}{\mathbf{z}}
\newcommand{\balpha}{\boldsymbol{\alpha}}
\newcommand{\Ocal}{\mathcal{O}}
\newcommand{\Xcal}{\mathcal{X}}
\newcommand{\norm}[1]{\left\|#1\right\|}
\newcommand{\st}{\text{ s.t }}
\newcommand{\KL}[2]{D_\mathrm{KL}\left(#1 \,\|\, #2\right)}
\newcommand{\tv}{\operatorname{TV}}
\newcommand{\thetait}[1]{\theta^{(#1)}}
\newtheorem{theorem}{Theorem}[section]
\newtheorem{lemma}{Lemma}[section]
\newtheorem{proposition}{Proposition}[section]
\newtheorem{definition}{Definition}[section]
\newtheorem{corollary}{Corollary}[section]
\newtheorem{remark}{Remark}
\newtheorem{assumption}{Assumption}
\newtheorem{construction}{Construction}
\newcommand{\secref}[1]{Sec.~\ref{#1}}
\newcommand{\figref}[1]{Fig.~\ref{#1}}
\renewcommand{\eqref}[1]{Eq.~(\ref{#1})}
\newcommand{\lemref}[1]{Lemma~\ref{#1}}
\newcommand{\corref}[1]{Corollary~\ref{#1}}
\newcommand{\thmref}[1]{Theorem~\ref{#1}}
\newcommand{\propref}[1]{Proposition~\ref{#1}}
\newcommand{\appref}[1]{Appendix~\ref{#1}}
\newcommand{\assref}[1]{Assumption ~\ref{#1}}
\newcommand{\conref}[1]{Construction ~\ref{#1}}
\title{When Models Don’t Collapse: \\ On the Consistency of Iterative MLE}
\author{%
  Daniel Barzilai \\
  Weizmann Institute of Science \\
  \texttt{daniel.barzilai@weizmann.ac.il} \\
  \And 
  Ohad Shamir \\
  Weizmann Institute of Science \\
  \texttt{ohad.shamir@weizmann.ac.il} \\
  % examples of more authors
  % \And
  % Coauthor \\
  % Affiliation \\
  % Address \\
  % \texttt{email} \\
  % \AND
  % Coauthor \\
  % Affiliation \\
  % Address \\
  % \texttt{email} \\
  % \And
  % Coauthor \\
  % Affiliation \\
  % Address \\
  % \texttt{email} \\
  % \And
  % Coauthor \\
  % Affiliation \\
  % Address \\
  % \texttt{email} \\
}
\begin{document}

\maketitle

\begin{abstract}
    The widespread use of generative models has created a feedback loop, in which each version of a model is trained on data partially produced by its predecessors. This process has raised concerns about \emph{model collapse}: A critical degradation in performance caused by repeated training on synthetic data. However, different analyses in the literature have reached different conclusions as to the severity of model collapse. As such, it remains unclear how concerning this phenomenon is, and under which assumptions it can be avoided. To address this, we theoretically study model collapse for maximum likelihood estimation (MLE), in a natural setting where synthetic data is gradually added to the original data set. Under standard assumptions (similar to those long used for proving asymptotic consistency and normality of MLE), we establish non-asymptotic bounds showing that collapse can be avoided even as the fraction of real data vanishes. On the other hand, we prove that some assumptions (beyond MLE consistency) are indeed necessary: Without them, model collapse can occur arbitrarily quickly, even when the original data is still present in the training set. To the best of our knowledge, these are the first rigorous examples of iterative generative modeling with accumulating data that rapidly leads to model collapse. 
\end{abstract}

\section{Introduction}
Generative models such as large language models (LLMs) and diffusion models are increasingly filling the internet with synthetic data. At the same time, web-scraped content remains a common source for training newer models. This creates a feedback loop in which each version of a model is trained partly on outputs of all past models, and synthetic data gradually dominates future training sets. Because identifying and filtering artificially generated content at scale is not always feasible, existing biases and artifacts in the models can become increasingly embedded in the training data and amplified from model to model. This phenomenon has recently been termed \emph{Model Collapse}: A critical degradation in performance caused by repeated training on synthetic data \citep{shumailov2024ai, bertrand2024stability, dohmatob2024model, gerstgrasser2025model}. 

The effects of training on synthetic data appear to vary widely across settings. In some cases, even a small amount of synthetic data has been shown to significantly degrade performance \citep{dohmatob2025strong}; in others, model collapse can be avoided altogether \citep{gerstgrasser2025model, dohmatob2024model} or synthetic data may even be beneficial \citep{jains2024caling, dohmatob2024tale}. 
% Perhaps surprisingly, in the context of generative models, the conditions under which performance deteriorates due to synthetic data remain poorly understood.

To help clarify this picture, we theoretically study the behavior of maximum likelihood estimation (MLE) under iterative training with synthetic data. We focus on a natural and practically motivated setting (also studied by \citet{alemohammad2024self, gerstgrasser2025model, dey2024universality}), where we initially have $n$ samples from some ground-truth distribution, and at each iteration $T$, the latest model generates $n$ new samples that are then accumulated with all previous data and used to train the next model. 

Recently, \citet{dey2024universality} analyzed a similar setting specifically for distributions arising from exponential families, and showed that for any fixed $T$ and in the limit as $n\to\infty$, the error of the iteration-$T$ model is degraded by at most a universal multiplicative constant compared to the initial estimator trained solely on real data. However, their guarantees are asymptotic in $n$ while $T$ is fixed. Therefore, they do not quantify how the performance depends on the proportion of synthetic versus real data, since the fraction of training data that is real is a constant (given by $1/T$) while $n$ grows to infinity. Therefore, it is difficult to deduce from their results how concerning model collapse may be as the fraction of real data decreases.

In contrast, in this work, we prove in \thmref{thm: positive_result} non-asymptotic guarantees that remain valid even as the fraction of real data approaches zero. Under standard regularity and smoothness assumptions (of the kind long used to prove asymptotic consistency and normality of MLE), we show that as long as the number of samples per iteration is at least \emph{polylogarithmic} in the number of iterations, iterative MLE is consistent  (meaning that it converges to the ground truth model as the sample size increases). These findings offer a sharper theoretical understanding of when model collapse can be avoided.

We complement this result with negative ones, that illustrate what can go wrong when these assumptions are violated. In particular, we construct families of distributions for which MLE is consistent when trained on real data, but nonetheless suffers from collapse when synthetic data is iteratively accumulated. Our negative results come in two flavors. In \thmref{thm: neg_minimax}, for any fixed sample size $n$, we construct a family of distributions such that the first iteration gives an excellent approximation to the real distribution, but even the second does not with constant probability. Next, in \thmref{thm: neg_large_t}, we show that there exists a family of distributions such that for any $n$, model collapse will eventually occur, after a number of iterations which grows arbitrarily slowly with $n$.

To the best of our knowledge, \thmref{thm: neg_minimax} and \thmref{thm: neg_large_t} are the first rigorous examples of iterative generative modeling with accumulating data that rapidly lead to model collapse. Recently, it has been suggested that model collapse does not occur when data accumulates across iterations \citep{gerstgrasser2025model, dey2024universality, schaeffer2025position}. Our results show that such claims can only be true under structural assumptions beyond MLE consistency. 

% We note that previous works \emph{seemingly} contradict each other regarding their conclusions on the severity of model collapse. These differences arise due to differences in the settings being analyzed. We hope that contrasting our positive and negative results can help clarify in what cases model collapse indeed occurs, and in what cases it can be avoided. 

%Moreover, they demonstrate that MLE consistency alone is not sufficient to prevent collapse. 
% We summarize our main contributions as follows:

% \begin{itemize}
%     \item In \thmref{thm: positive_result}, we give finite-sample guarantees showing that the MLE remains accurate across iterations, provided the sample size $n$ is at least polylogarithmic in the number of iterations $t$, and standard regularity conditions hold.
    
%     \item In \thmref{thm: neg_minimax}, for any fixed $n$, we construct a consistent family of distributions such that the first iteration gives an excellent approximation to the real distribution, but the second does not with constant probability.
    
%     \item In \thmref{thm: neg_large_t}, we show that there exists a consistent family of distributions, such that for any $n$, the number of iterations needed for model collapse to occur can grow arbitrarily slowly as a function of $n$.
% \end{itemize}

\section{Related Work}\label{sec: related}

Model collapse has recently drawn considerable attention, driven in part by the realization that many datasets are \emph{already} contaminated with synthetic samples \citep{alemohammad2024self}. A growing number of empirical studies have reported at least some level of performance degradation in models trained on such data \citep{shumailov2024ai, alemohammad2024self, hataya2023will, bohacek2023nepotistically, briesch2023large, guo2023curious}.  

Several types of synthetic data contamination settings have been considered. \citet{shumailov2024ai} considered a fully-synthetic setting, meaning that each model trains only on data produced by the previous model, without any real data. In such a setting, even simple Gaussian distributions can be shown to suffer from severe model collapse. In addition to a fully-synthetic setting, \citet{alemohammad2024self} considered an accumulating-data setting, where data is mixed between real and synthetic data. They observed empirically that in such cases, model collapse may either occur slowly or be avoided altogether, depending on how much real data is added at each iteration. Since then, a few works have theoretically considered accumulating-data settings \citep{gerstgrasser2025model, dey2024universality}. These works suggest that data accumulation plays a significant role and can mitigate model collapse. Our results show that this is partially true: MLE in a data accumulation setting \emph{can} avoid model collapse if the models are sufficiently well-behaved, but there \emph{exist} models that can suffer from severe model collapse even in such a setting. 

Among theoretical works, several settings have been studied that differ somewhat from the setting of this paper. A notable line of works considers linear regression, taking advantage of the closed-form expression for the least squares estimate \citep{dohmatob2024model, gerstgrasser2025model, dohmatob2025strong}. These works focus on discriminative models, where previous models are used to label new data, not synthetically generate new data as in this paper. Moreover, the setting of \citep{dohmatob2025strong} is non-iterative, and they analyze the test error when the training data contains some samples that are labeled from a linear predictor drawn from a ``bad'' synthetic distribution that differs from the real one. This is a key difference from works such as \citet{dey2024universality} that analyzed a setting where data gradually accumulates.

There are quite a few works that analyze a specific family of generative models. For example, Gaussians and kernel density estimators have been analyzed in  \citet{shumailov2024ai, kazdan2024collapse, he2025golden}. \citet{fu2024towards} analyzed model collapse for simplified one-hidden-layer diffusion model. \citet{dohmatob2024tale} analyzed simplified token generators, including Hutter LLMs \citep{hutter2021learning} and associative memories \citep{cabannes2023scaling}. \citet{fu2025theoretical} analyzed several architectures under a framework they called recursive stability, which bears similarities to algorithmic stability. In contrast to all of these works, our work applies to general families of distributions.

A few works characterize iterative generative modeling by analyzing MLE as we do here. \citet{marchi2024heat} assume that the differences between distributions in subsequent generations form a martingale difference sequence. However, this assumption is difficult to verify and somewhat unlikely in general. \citet{seddik2024bad, bertrand2024stability} analyze a setting where data is mixed from the ground truth model as well as the latest generative model. Our work focuses on a more natural setting of data accumulating over time. There is also the work of \citet{dey2024universality}, which was discussed in the introduction.

Lastly, we note that some works have proposed mechanisms to mitigate collapse through supervision or intervention. For instance, \citet{ferbach2024selfconsuming, feng2025beyond, amin2025escaping} show that even minimal forms of ground-truth feedback can substantially reduce the risk of collapse. In contrast, our work focuses on the unsupervised case, where synthetic data accumulates and no corrective signal is available. 

\section{Setting and Notation}
\subsection{Notation}
We use bold-faced font to denote vectors, e.g. $\bx\in\reals^d$, and denote by $\norm{\bx}$ the Euclidean norm. We let $[n]:=\{1,\dots,n\}$. Unless otherwise stated, $\norm{\cdot}$ denotes the operator norm for matrices and $3$rd-order tensors, where the latter is defined for a $3$rd-order tensor $A$ as $\norm{A} = \sup_{\bv_1, \bv_2, \bv_3 \neq \zero} \frac{A(\bv_1, \bv_2, \bv_3)}{\norm{\bv_1}\norm{\bv_2}\norm{\bv_3}}$.
We use the standard big-O notation, with $\bigo(\cdot)$, $\Theta(\cdot)$ and $\Omega(\cdot)$ hiding absolute constants that do not depend on problem parameters. To specify constants that depend only on certain quantities, we may put these quantities in parentheses. For example, $C(K_1, K_2)$ would denote a constant that depends only on $K_1, K_2$. For a given vector $\bv$ and radius $r>0$, we let $B_r(\bv):=\{\bu : \norm{\bu - \bv}\leq r\}$ be the closed ball of radius $r$ centered at $\bv$. For a function $f(\theta)$, we write $\nabla^2 f(\theta)$ for its Hessian, and $\nabla^3 f(\theta)$ for its $3$rd-order derivative tensor, meaning $[\nabla^3 f(\theta)]_{i,j,k} = \frac{\partial^3}{\partial \theta_i \partial \theta_j \partial \theta_k}f(\theta)$. For a matrix $A$, we denote by $\lambda_{\min}(A)$, $\lambda_{\max}(A)$ its minimal and maximal eigenvalues respectively.

\subsection{Iterative Maximum Likelihood Estimation}\label{sec: MLE}
In this section, we formalize the iterative MLE setting that will be studied throughout the paper. Let $\Theta$ be a set of parameters and consider a corresponding family of probability density functions (PDFs) over an input space $\Xcal$, given by $P_{\Theta}:=\left\{p_\theta\left(\cdot\right) \mid \theta\in\Theta\right\}$. Generative modeling aims to approximate unknown ground truth parameters $\theta^{\star}$ using some $\theta \in \Theta$. Perhaps the most fundamental way to do this is through MLE (throughout the paper, we will also use this acronym to refer to the maximum likelihood \emph{estimator} -  the meaning should be clear from context).

\begin{definition}\label{def: mle}
    Let $X$ be a dataset with elements belonging to $\Xcal$, the MLE trained on $X$ is given by
    \begin{align*}
        \hat\theta := \argmax{\theta\in \Theta} \sum_{\bx\in X} \log\left(p_{\theta}(\bx)\right), 
        % \qquad\qquad \bx_i 
        % \sim p_{\theta^{\star}} ~ \text{i.i.d.}
    \end{align*}
\end{definition}

\begin{algorithm}[t]
\caption{Iterative Maximum Likelihood Estimation}
\label{alg: iterative_mle}
\begin{algorithmic}[1]
\REQUIRE Parameter space $\Theta \subseteq \mathbb{R}^d$; family of distributions $\{p_\theta\}_{\theta \in \Theta}$ over input space $\mathcal{X}$; number of samples per iteration $n$; target parameters $\theta^{\star} \in \Theta$.
\STATE Set $\thetait{0} := \theta^{\star}$
\FOR{$T = 0, 1, 2, \ldots$}
    \STATE sample $X^{(T)}:=\{\bx^{(T)}_{ 1}, \ldots, \bx^{(T)}_{ n}\} \sim p_{\thetait{T}}\left(\cdot \right)$ i.i.d.
    \STATE Define cumulative dataset: $X^{(\leq T)} := \bigcup_{t=0}^T X^{(t)}$
    \STATE Train model on $X^{(\leq T)}$:
    \begin{align*}
        \thetait{T+1} 
        := \argmin{\theta\in\Theta}  \sum_{t=0}^T \ell_t\left(\theta\right), \quad \ell_{t}(\theta) := -\frac{1}{n}\sum_{i=1}^n \log\left(p_{\theta}(\bx^{(T)}_{i})\right),
    \end{align*}
\ENDFOR
\end{algorithmic}
\end{algorithm}

In the above definition, it is not immediately clear why the MLE exists or if it is unique. Existence is known to hold under mild assumptions, and throughout the proofs, we will explicitly show existence whenever necessary. Regarding uniqueness, the MLE may not be unique in general. However, under mild assumptions, it is known that the MLE converges to the real parameters $\theta^{\star}$ (e.g. \citep{wald1949note}), and that given sufficiently many samples, the log-likelihood is strictly concave in a neighborhood of $\theta^{\star}$. As such, asymptotically, the MLE is expected to be unique. Nevertheless, formally treating this typically introduces unnecessary and undesired complications to the analysis.  It is therefore standard to simply assume that the MLE is unique whenever it exists (e.g. \citep{lehmann2006theory}). We follow \citet{bertrand2024stability} in making a similar, but slightly milder assumption that if there are multiple parameter vectors maximizing the log-likelihood, the argmax may choose the one that is closest to a given reference point. This is, of course, made explicit in the proofs. 

Throughout the paper, we will be mostly interested in what happens when MLEs are iteratively re-trained. We will be analyzing a setting where synthetic data accumulates over time, as this is what one naturally expects to occur with web data (see the Related Works, \secref{sec: related}, for a discussion on this). Let $\theta^{\star}\in \Theta$ denote the parameters of the real underlying distribution, and set $\thetait{0}:=\theta^{\star}$. For each iteration $T=0,1,\ldots$, sample $X^{(T)}:=\{\bx^{(T)}_{ 1}, \ldots, \bx^{(T)}_{ n}\} \sim p_{\thetait{T}}\left(\cdot \right)$ i.i.d. and add these to the existing dataset, giving $X^{(\leq T)} := \bigcup_{t=0}^T X^{(t)}$. 
Then, obtain $\thetait{T+1}$ as the MLE given the training data $X^{(\leq T)}$. We refer the reader to Algorithm \ref{alg: iterative_mle} for a complete description of iterative MLE. Note that for convenience, the algorithm is written as a minimization problem using the negative log likelihood (or cross-entropy loss).

We are now ready to state our assumptions. They are minor variants of those long used to study MLE in classical statistical literature (since at least \citet{cramér1946mathematical}, see also \citep{le1956asymptotic, van2000asymptotic, lehmann2006theory}). The first set of assumptions consists of standard regularity conditions (see e.g. \citep{lehmann1999elements}).  

\begin{assumption}[Regularity Conditions] \label{ass: regularity}
\leavevmode
\begin{enumerate}[label={(\Alph*)}, ref={1.~\Alph*}]
    \item \label{ass: neighborhood}
    There exists some $r>0$ such that the closed ball $B_r(\theta^{\star})$ is contained in $\Theta$.
    \item \label{ass: distinct}
    The probability density functions $p_{\theta}$ are distinct.
    \item \label{ass: constant_support}
    The set of points for which $p_{\theta}$ is positive does not depend on $\theta$.
\end{enumerate}
\end{assumption}

\assref{ass: distinct} is necessary to quantify the distance between distributions $p_{\theta}$, $p_{\theta'}$ using $\norm{\theta - \theta'}$. Note that one can always satisfy \assref{ass: distinct} by removing duplicates from $P_{\Theta}$, or by considering the quotient topology as in \citet{redner1981note}.
\assref{ass: constant_support} avoids pathologies and ensures that $\log p_\theta(x)$ is well-defined throughout the iterative sampling process. In distributions modeled using neural networks, probabilities are usually given by applying a softmax, ensuring that they are always positive and thus satisfying \assref{ass: constant_support}.

Classical analysis of MLE often require various smoothness assumptions on $\log p_{\theta}(\bx)$ such as bounded third derivatives (see for example \citep{cramér1946mathematical, lehmann2006theory}). We will use the following (where $r>0$ is the radius from \assref{ass: neighborhood}):

\begin{assumption}[Smoothness]\label{ass: smoothness}
    For any $\bx\in\Xcal$ and $\theta\in \Theta$, $\log(p_{\theta}(\bx))$ is $3$ times continuously differentiable in $\theta$, the partial derivatives support differentiation under the integral sign \footnote{~Meaning that we can exchange the order of differentiation and integration. This is a mild assumption that is implicit in many papers. }, and
    \begin{enumerate}[label={(\Alph*)}, ref={2.~\Alph*}]       
        \item \label{ass: sub_gaussian} Sub-Gaussian gradients: There exists some $K_1>0$ such that for any $\theta\in B_r(\theta^{\star})$
        \begin{align*}
        \P_{\bx}\left(\norm{\nabla_{\theta} \log\left(p_{\theta}(\bx)\right)} \geq u \right) \leq 2\exp\left(-\frac{u^2}{2K_1^2}\right), \quad\quad \forall u\geq 0.
        \end{align*}
        
        \item \label{ass: bounded_hessian} Bounded Hessian: There exists some $K_2>0$ such that for any $\bx\in\Xcal$ and $\theta\in B_r(\theta^{\star})$, $\norm{\nabla^2_{\theta} \log\left(p_{\theta}(\bx)\right)} \leq K_2$.
        
        \item \label{ass: bounded_third} Bounded Third Derivatives: There exists some $K_3>0$ such that for any $\bx\in\Xcal$ and $\theta\in B_r(\theta^{\star})$, $\norm{\nabla^3_{\theta} \log\left(p_{\theta}(\bx)\right)} \leq K_3$.
    \end{enumerate}
\end{assumption}

Assumptions \ref{ass: sub_gaussian}, \ref{ass: bounded_hessian} allow us to bound the difference between sums of random variables and their expected values. Since our bounds are non-asymptotic, one cannot avoid some assumptions to bound these differences. Sub-Gaussianity is a standard assumption in non-asymptotic works, and holds (for example) for bounded random vectors. Nevertheless, our assumptions need to hold only in a small neighborhood of $\theta^{\star}$, making them relatively mild. It is possible to relax these assumptions further, but we do not pursue such generalizations, as it is not the focus of our paper. 

Before stating our next assumption, we recall that the Fisher information matrix at some $\theta$ is defined as
\begin{align*}
    \Ical(\theta) := \E_{\bx}\left[\nabla_\theta \log p_{\theta}(\bx)\nabla_\theta \log p_{\theta}(\bx)^\top \right].
\end{align*} 
The Fisher information matrix is well-known to play a central role in the analysis of MLE. Under our other assumptions, it is straightforward to show that the Fisher information matrix is always positive semidefinite (see \appref{app: background} for more information). In fact, standard analyses of MLE (say, to establish asymptotic normality) require the matrix to be positive definite at $\theta^\star$ \citep{van2000asymptotic, lehmann2006theory}. Thus, to get our non-asymptotic bounds, it is reasonable to assume the following (where again, $r>0$ is the value from \assref{ass: neighborhood}):
\begin{assumption}\label{ass: fisher}
    There exists some $\lambda_0 > 0$ such that for any $\theta \in B_r(\theta^\star)$, $\lambda_{\min}\left(\Ical(\theta)\right) \geq \lambda_0$.
\end{assumption}
%It is well known that asymptotically, one should expect an MLE estimator to behave as $\sqrt{n}(\hat{\theta} - \theta^{\star}) \overset{\P}{\underset{n\to\infty}{\longrightarrow}} \Ncal\left(0, \Ical(\theta^\star)^{-1}\right)$. Therefore, the following standard assumption ensures that the Fisher information is indeed invertible.
We note that one can equivalently assume that $\Ical(\theta)$ is positive definite only at $\theta^\star$, and pick $r$ small enough such that by the smoothness assumption, this holds for the neighborhood. However, the formulation above is more convenient for our purposes.

We end by noting that the assumptions above are mostly satisfied (at least approximately) by neural networks. For example, the constant support assumption (\assref{ass: constant_support}) is trivially satisfied in standard architectures, since the softmax function widely used to assign probabilities is always non-zero. The
smoothness assumptions can be satisfied in various settings, especially when using techniques such as weight decay, which are standard in modern LLM training. Of course, the exact bounds would depend on the architecture and the setting. In general, we believe that weakening these assumptions is quite feasible and is an interesting direction for future work. 
%We note that although neural networks generally do not satisfy  \assref{ass: distinct}, as previously remarked, we believe that similar results to those in this paper would still hold without this assumption. 

\section{Consistency of Iterative MLE}\label{sec: consistency}
In this section, we formally show that iterative MLE remains consistent under the conditions from the previous section. In particular, we provide a non-asymptotic bound, which establishes that as long as the number of samples $n$ is at least polylogarithmic in the number of iterations $T$, then with high probability, all models remain close to the ground-truth parameters. This result highlights that model collapse is not inevitable, even when $T\rightarrow \infty$ and the fraction of real data vanishes. 

In practice, the MLE is typically approximated by computing a stationary point of the log-likelihood, which may not necessarily be a global optimum. In line with this, we consider in the following theorem a slight modification of Algorithm \ref{alg: iterative_mle}, where each $\theta_T$ is a stationary point, meaning $\nabla \sum_{t=0}^T \ell_t(\theta_T) = 0$. If there are multiple stationary points, we assume that we may choose the one closest to a given reference point. As discussed in \secref{sec: MLE}, it is well known that under mild conditions, this is asymptotically equivalent to choosing the global optimum. We are now ready to state our result:

\begin{restatable}{theorem}{samplecomplexity}\label{thm: positive_result}
    Under Assumptions \ref{ass: regularity} - \ref{ass: fisher}, there exist constants $c,C>0$ which depend only on $K_1,K_2,K_3, \lambda_0$ and $r$,  
    such that for any $T\in\N$, $\delta>0$ and any $n \geq c\left(\log(T)+1\right)^2\log^2\left(\frac{7dT}{\delta}\right)$, it holds with probability at least $1-\delta$ that
    \begin{align}\label{eq: positive_result}
        \quad\quad\quad \norm{\thetait{T}-\theta^{\star}}\leq C \sqrt{\frac{\log\left(\frac{4d}{\delta}\right)}{n}}. 
        % \qquad \text{for all $t\in \{1,\ldots, T\}$}~.
    \end{align}
\end{restatable}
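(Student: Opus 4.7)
The plan is to proceed by induction on $T$, controlling $a_T := \norm{\thetait{T} - \theta^{\star}}$ uniformly. Writing $L_T(\theta) := \sum_{t=0}^T \ell_t(\theta)$, the first-order optimality condition $\nabla L_T(\thetait{T+1}) = 0$ combined with Taylor expansion around $\theta^{\star}$ gives
\[
  \thetait{T+1} - \theta^{\star} = -\bigl[\nabla^2 L_T(\tilde\theta)\bigr]^{-1}\, \nabla L_T(\theta^{\star})
\]
for some $\tilde\theta$ on the segment between $\theta^{\star}$ and $\thetait{T+1}$. The crucial observation is that the Fisher information matrices appearing in the gradient and Hessian approximately cancel, leaving an essentially linear recursion for $a_T$.

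In the inductive step, I would bound each factor. Splitting $\nabla L_T(\theta^{\star}) = -\sum_t g_t + (\text{stochastic fluctuation})$, where $g_t := \E_{\bx \sim p_{\thetait{t}}}[\nabla_\theta \log p_{\theta^{\star}}(\bx)]$: by \assref{ass: sub_gaussian}, the fluctuation is a sum of $n(T+1)$ independent mean-zero sub-Gaussian vectors, so concentration gives norm $\lesssim K_1 \sqrt{(T+1) \log(d/\delta)/n}$ with high probability. For the bias, Taylor-expand $h(\theta) := \E_{\bx\sim p_\theta}[\nabla \log p_{\theta^{\star}}(\bx)]$ around $\theta^{\star}$, using that $h(\theta^{\star}) = 0$ and $\nabla h(\theta^{\star}) = \Ical(\theta^{\star})$ (via differentiation under the integral sign); this yields $g_t = \Ical(\theta^{\star})(\thetait{t} - \theta^{\star}) + O(a_t^2)$. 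Matrix concentration (\assref{ass: bounded_hessian}) together with Taylor expansion (\assref{ass: bounded_third}) gives $\nabla^2 L_T(\tilde\theta) = (T+1)\Ical(\theta^{\star}) + (\text{lower-order})$ with smallest eigenvalue $\ge (T+1)\lambda_0/2$ whenever $\tilde\theta \in B_r(\theta^{\star})$ and $\sum_t a_t$ stays small. Inverting the Hessian cancels the $\Ical(\theta^{\star})$ factor in the leading bias, yielding
\[
  a_{T+1} \;\le\; \frac{1}{T+1}\sum_{t=0}^T a_t \;+\; C_1 \sqrt{\frac{\log(d/\delta)}{n(T+1)}} \;+\; (\text{higher-order corrections}).
\]

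To close the recursion, let $R_T := \frac{1}{T+1}\sum_{t\leq T} a_t$. The inequality above implies $R_T \le R_{T-1} + \eps_T/(T+1)$ with $\eps_T \asymp \sqrt{\log(d/\delta)/(nT)}$. Since $\sum_{t\ge 1} 1/[(t+1)\sqrt{t}] < \infty$ and $R_0 = a_0 = 0$, one obtains $R_T = O(\sqrt{\log(d/\delta)/n})$ uniformly in $T$, hence $a_{T+1} \le R_T + \eps_{T+1} = O(\sqrt{\log(d/\delta)/n})$, matching \eqref{eq: positive_result}.

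The main obstacle is ensuring the higher-order corrections (quadratic Taylor remainders and Hessian concentration error) and a union bound over all $T$ iterations do not disrupt this uniform bound. The quadratic contribution scales as $\sum_t a_t^2$ and, after division by $(T+1)\lambda_0$, stays subleading once $n$ is moderately large; a union bound over iterations replaces $\delta$ by $\delta/\poly(T)$ in the intermediate concentration events, producing the $\log(dT/\delta)$ factor and the $(\log T+1)^2$ overhead in the sample complexity condition. Finally, existence of $\thetait{T+1}$ in $B_r(\theta^{\star})$, and its uniqueness up to the paper's tie-breaking convention, follows from strict convexity of $L_T$ on $B_r(\theta^{\star})$, established as a byproduct of the Hessian lower bound.
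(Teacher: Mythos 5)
Your proposal takes a genuinely different route from the paper. The paper Taylor-expands the stationarity condition of $\thetait{t+1}$ around the \emph{previous iterate} $\thetait{t}$, obtaining the one-step recursion $\thetait{t+1}-\thetait{t}=-H_t^{-1}\nabla\ell_t(\thetait{t})$, and then telescopes; the key observation is that $\nabla\ell_t(\thetait{t})$ is evaluated at the distribution it was sampled from, so it is \emph{exactly} a martingale difference, and the $T$-independence of the main term follows from $\sum_t(t+1)^{-2}<\infty$ after normalizing by $H_t^{-1}\approx\frac{1}{t+1}\Ical^{-1}$. You instead expand around the \emph{fixed target} $\theta^{\star}$, decompose the full gradient $\nabla L_T(\theta^{\star})$ into a conditional-bias term $-\sum_t g_t$ and a martingale fluctuation, and close an averaging recursion $a_{T+1}\leq R_T+\eps_{T+1}$. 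Both strategies hinge on the same cancellation between $\Ical(\theta^{\star})$ appearing in the Hessian and in the leading bias, so you arrive at the same $T$-independent rate, and your approach gives a pleasant "the next iterate is roughly the average of the previous ones" intuition that is not visible in the paper's telescoping.

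Two places in your sketch need real work. First, the step "$g_t=\Ical(\theta^{\star})(\thetait{t}-\theta^{\star})+O(a_t^2)$ via differentiation under the integral sign" is the analogue of the paper's \thmref{thm: hessian_fisher}, but directly Taylor-expanding $h(\theta)=\E_{p_\theta}[\nabla\log p_{\theta^{\star}}]$ in the \emph{sampling} parameter $\theta$ requires controlling second derivatives of $p_\theta$ itself (not $\log p_\theta$), which the paper's smoothness assumptions do not directly give. The cleaner route that stays within \assref{ass: smoothness} is to write $g_t=\E_{p_{\thetait{t}}}[\nabla\log p_{\theta^{\star}}-\nabla\log p_{\thetait{t}}]$ (using \thmref{thm: mean_zero_grad} for the second term) and then Taylor-expand the integrand around $\thetait{t}$, which only needs \assref{ass: bounded_hessian}, \assref{ass: bounded_third}, and \lemref{lem: hessian_lipschitz}. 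Second, and more importantly, "existence of $\thetait{T+1}$ in $B_r(\theta^{\star})$ follows from strict convexity of $L_T$ on $B_r(\theta^{\star})$" does not hold: local strict convexity on a ball does not preclude the global maximizer of the likelihood over all of $\Theta$ from lying outside that ball. The paper's \propref{prop: consistency} does the nontrivial work here, showing that the likelihood strictly decreases on a small sphere around the previous iterate (so a local optimum and hence, under the tie-breaking convention, the chosen MLE is inside the ball), and you would need an argument of this type. A minor point: the fluctuation terms are conditionally mean-zero but not independent across $t$, so one must invoke a vector Azuma/Freedman inequality (\thmref{thm: vector_azuma}) rather than i.i.d.\ concentration, and sub-Gaussianity of $\nabla\log p_{\theta^{\star}}$ under $p_{\thetait{t}}$ requires the bounded-Hessian shift argument $\norm{\nabla\log p_{\theta^{\star}}-\nabla\log p_{\thetait{t}}}\leq K_2\norm{\thetait{t}-\theta^{\star}}$. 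With these repairs the overall plan appears sound, and the inductive bookkeeping of the higher-order corrections, which you flag as the main obstacle, does accumulate to a $\log(T)$ factor upon summing $\sum_t 1/(t+1)$, matching the paper's $(\log T+1)^2\log^2(dT/\delta)$ sample-complexity threshold.
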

For sufficiently large $n$, the bound in \eqref{eq: positive_result} is independent of $T$, and has only a logarithmic dependence on the dimension $d$. The theorem is stated for a specific $T$, but a union bound can easily provide a similar result holding simultaneously for all $t\in[T]$, at the cost of a $\log(T)$ factor. 

Under the same assumptions as in \thmref{thm: positive_result}, convergence of parameters also implies convergence in KL-Divergence and convergence in total variation (TV) distance. We refer the reader to \appref{app: kl_tv} for background and details. In particular, for a suitable absolute constant $C>0$, \thmref{thm: positive_result} implies
\begin{align*}
    \KL{p_{\theta^{\star}}}{p_{\thetait{T}}}~\leq~ C \cdot \frac{\log\left(\frac{4d}{\delta}\right)}{n}~~~,~~~
    \tv\left(p_{\theta^{\star}}, p_{\thetait{T}}\right)~\leq~C \sqrt{\frac{\log\left(\frac{4d}{\delta}\right)}{n}}~. 
\end{align*}

We now detail some ways in which \thmref{thm: positive_result} differs from past results on model (non)-collapse. In \citet{bertrand2024stability}, synthetic data does not accumulate across iterations, and for each iteration $t\in[T]$, most of the training data used to train $\thetait{t}$ is real. The maximal fraction of synthetic data was increased in the follow-up work \citet{ferbach2024selfconsuming} when assuming access to the full distribution (i.e. $n=\infty$). Similarly, \citet{dey2024universality} first fix $T$ and then analyze the limit of $n\to \infty$. They do not provide finite sample guarantees that quantify the dependence between $T$ and $n$. \citet{seddik2024bad} bounded the expected value of the TV distance for distributions over finite vocabularies. When the amount of synthetic data is sufficiently large relative to the vocabulary size, their bound scales as $\bigo\left(\sqrt{k}/n\right)$ where $k$ is the total amount of synthetic data. In the data accumulation setting, $k=(T-1)n$, in which case the bound becomes $\bigo\left(\sqrt{T/n}\right)$.

We note that while \thmref{thm: positive_result} considers a setting where the exact MLE is computable, we believe the theorem can naturally be extended to also accommodate an optimization error, where only an approximate MLE is available. Indeed, we empirically observe in \appref{app: experiments} that for families of distributions for which an exact formula for the MLE is known, the results are robust to mild optimization error. 

\subsection{Proof Sketch of \thmref{thm: positive_result}}
    We provide here the proof intuition for \thmref{thm: positive_result}, and refer the reader to \appref{app: positive} for the rigorous proof. 

    As a preliminary stage, we first show using \propref{prop: consistency} that given enough samples, for any $t\in[T]$, $\norm{\thetait{t+1} - \thetait{t}}$ is small with high probability. The challenges of this step are that this is done in a non-asymptotic way and takes into account data arising from all previous iterations. 

    We note that \thmref{thm: positive_result} cannot be obtained naively as a direct consequence of the \propref{prop: consistency}. 
    Extending \propref{prop: consistency} to a bound on $\norm{\thetait{T} - \thetait{0}}$ using the triangle inequality leads to a suboptimal dependence on $T$, since it doesn't take into account cancellations from iteration to iteration. Instead, as we will show in the following paragraph, \propref{prop: consistency} will be used to ensure that for large $n$, $\thetait{t+1}$ will be sufficiently close to $\thetait{t}$ to enable Taylor expanding the log likelihood around it. This idea draws inspiration from the asymptotic normality analysis of MLE \citep{cramér1946mathematical, lehmann2006theory}.
    
    To that end, fix some $t\in[T]$ and observe that for any such $t$, since $\thetait{t+1}$ is the MLE on $X^{(\leq t)}$, it is a stationary point of the log-likelihood function. As such, Taylor expanding, we show that there exists a matrix $R_t\in \R^{m \times m}$ with $\norm{R_t}\leq t\epsilon$ such that
    \begin{align*}
        0 = \sum_{j=0}^t\nabla\ell_j\left(\thetait{t+1}\right) = \left(\sum_{j=0}^t \nabla\ell_j\left(\thetait{t}\right) + \nabla^2\ell_j(\thetait{t}) \cdot (\thetait{t+1} - \thetait{t})\right) + R_{t} (\thetait{t+1} - \thetait{t}).
    \end{align*}
    
By definition, $\thetait{t}$ is the MLE for $X^{(\leq t-1)}$, so it is a stationary point for the corresponding log-likelihood function and thus $\sum_{j=0}^{t-1} \nabla\ell_j\left(\thetait{t}\right)=0$. For notational simplicity, let $H_t:= \left(\sum_{j=0}^t \nabla^2\ell_j(\thetait{t})\right) + R_t$, then the above simplifies to

\begin{align*} 
    0 = \nabla\ell_t\left(\thetait{t}\right) + H_t (\thetait{t+1} - \thetait{t})~.
\end{align*}

In the full proof, we show that $H_t$ is invertible. In such a case, we can rearrange the above equation to obtain
\begin{align*}
    \thetait{t+1} - \thetait{t} = - H_t^{-1} \nabla\ell_t\left(\thetait{t}\right).
\end{align*}

Importantly, this allows us to express how the parameters evolve over many iterations by taking a telescopic sum as follows.
    \begin{align}\label{eq: telescopic_sketch}
        \norm{\thetait{T} - \thetait{0}} 
        = & \norm{\sum_{t=0}^{T-1} \thetait{t+1} - \thetait{t}} 
        = \norm{\sum_{t=0}^{T-1}H_t^{-1} \nabla\ell_t\left(\thetait{t}\right)} \nonumber\\
        \leq & \frac{1}{\lambda_0}\norm{\sum_{t=0}^{T-1}\frac{1}{t+1} \nabla\ell_t\left(\thetait{t}\right)} 
        + \norm{\sum_{t=0}^{T-1}\left(H_t^{-1} - \frac{1}{t+1}\Ical(\thetait{0})^{-1}\right)\nabla\ell_t\left(\thetait{t}\right)}.
    \end{align}

The expected value of $\nabla\ell_t\left(\thetait{t}\right)$ (conditioned on $\thetait{t}$) can be shown to be zero, so that the first term forms a martingale, which allows us to bound the norm essentially as if all samples were independent. Since each $\nabla \ell_t$ is scaled by $\frac{1}{t+1}$, the variance scales as $\frac{1}{(t+1)^2}$. So the variance of the sum can be upper bounded as $\sum_{t=1}^{T} \frac{1}{t^2} \leq \sum_{t=1}^{\infty} \frac{1}{t^2} \leq \frac{\pi^2}{6}$. In summary, we show that with high probability
\begin{align*}
    \norm{\sum_{t=0}^{T-1}\frac{1}{t+1}\nabla\ell_t\left(\thetait{t}\right)}
    \leq \bigo\left(\frac{\log\left(\frac{d}{\delta}\right)}{\sqrt{n}}\right).
\end{align*}

The second term in \eqref{eq: telescopic_sketch} has to be treated differently, as correlations between $H_{t-1}$ and $\nabla\ell_t$ imply that each term is not necessarily mean-zero, and so the sum should be expected to have some dependence on $T$. This term somewhat complicates the proof, as bounding it requires knowing that $\norm{\thetait{t} - \thetait{{0}}}$ is sufficiently small for all $t<T$. The proof thus works inductively, bounding this term from iteration to iteration. Roughly speaking, in the end, we show that with high probability
\begin{align*}
    \sum_{t=0}^{T-1}\norm{H_t^{-1} - \frac{1}{t+1}\Ical(\thetait{0})^{-1}} \cdot \norm{\nabla\ell_t\left(\thetait{t}\right)} 
    \leq & \frac{\sqrt{c}\log(T+1)\log\left(\frac{dT}{\delta}\right)}{n} \leq \frac{1}{\sqrt{n}},
\end{align*}
where the last inequality follows from the assumption that $n$ is sufficiently large.

% using that $\E[\nabla^2\ell_j(\thetait{t})] = \Ical(\thetait{0})$ and the smoothness assumptions, we show inductively that
% \begin{align*}
%     \norm{H_t^{-1} - \frac{1}{t+1}\Ical(\thetait{0})^{-1}}
%     \leq \bigo\left(\frac{1}{(t+1)}\sqrt{\frac{\log\left(\frac{d}{\delta}\right)}{n}}\right).
% \end{align*}
% As a result, applying standard concentration results to bound every $\norm{\nabla\ell_t\left(\thetait{t}\right)}$, we get 
% \begin{align*}
%     \sum_{t=0}^{T-1}\norm{H_t^{-1} - \frac{1}{t+1}\Ical(\thetait{0})^{-1}} \cdot \norm{\nabla\ell_t\left(\thetait{t}\right)} 
%     \leq & \bigo\left(\sum_{t=0}^T \frac{\log\left(\frac{dT}{\delta}\right)}{n(t+1)}\right) \\
%     \leq & \bigo\left(\frac{\log(T+1)\log\left(\frac{dT}{\delta}\right)}{n}\right).
% \end{align*}

\section{Necessity of Structural Assumptions}

\thmref{thm: positive_result} provides conditions under which the iterative MLE retains good performance, even if the proportion of synthetic data approaches $1$. Clearly, this cannot always be true. In particular, there are well-known examples of families of distributions on which even standard MLE is inconsistent: Namely it will not converge to the ground-truth parameters as the sample size increases, even when trained purely on real data (e.g. \citep{bahadur1958examples, ferguson1982inconsistent, le1990maximum}). In such situations, the whole question of model collapse is rather meaningless. Thus,
a natural (informal) follow-up question is the following: \emph{In the setting where synthetic data is added to the real dataset in each iteration, is there a family of distributions that is sufficiently well-behaved for MLE to be asymptotically consistent (when trained on real data), but still exhibits rapid model collapse?} In other words, do there exist cases where the MLE \emph{can} learn the real distribution, \emph{and yet} model collapse still occurs when applying MLE iteratively?

In this section, we show that the answer is yes, and demonstrate different settings in which model collapse can occur when the conditions of \thmref{thm: positive_result} are not satisfied. To the best of our knowledge, these are the first rigorous examples of iterative generative modeling with accumulating data that rapidly leads to model collapse. 

We emphasize that, following the rest of the paper, we focus here on a setting where synthetic data iteratively accumulates on top of the real data. A different model collapse setting studied in some previous works is when at each iteration, MLE is performed purely on synthetic data generated by the latest model. In such a setting, the real training data disappears already after a single iteration, and it has been shown to lead to model collapse even for very well-behaved distributions such as Gaussians \citep{shumailov2024ai}. It has recently been suggested that if data is added rather than replaced (as in our setting), the extent to which iterative MLE performance degrades is limited \citep{gerstgrasser2025model, dey2024universality, schaeffer2025position}. We show here that this can be true only if further assumptions are made, beyond just MLE consistency (as we do in \thmref{thm: positive_result}).

To formalize our results, we will require the following consistency definition for MLE:
\begin{definition}\label{def: consistency}
    We will say a family of distributions $P_{\Theta}$ is \emph{TV-consistent}, if for any $\theta^{\star} \in \Theta$ and $n\in \N$, the MLE $\hat\theta$ trained on $n$ i.i.d. samples from $p_{\theta^{\star}}$ exists, and 
    % any such sequence $\left\{\hat \theta^{(n)}\right\}_{n\in\N}$ satisfies
    \begin{align*}
        \tv\left(p_{\theta^{\star}}, p_{\hat \theta} \right) \overset{\P}{\underset{n\to\infty}{\longrightarrow}} 0.
    \end{align*}
\end{definition}
%Note that while \defref{def: consistency} requires consistency for any ground truth parameters, it does not say anything about the rate at which the MLE converges. 
%If some parameters are harder to learn, the iterative MLE is more likely to struggle. 
Note that we use here convergence in total variation, rather than convergence in parameters as in \thmref{thm: positive_result}. The reason is that to establish our negative results, we have to make use of distributions that do not follow the assumptions of \thmref{thm: positive_result}, and in particular do not satisfy the smoothness assumptions there. Without smoothness, parametric convergence and convergence of distributions are no longer equivalent in general. Thus, using a probability metric such as total variation is more natural in our setting, as we are ultimately interested in approximating the ground-truth distribution.

\subsection{Models Can Collapse Immediately}
By definition, for a TV-consistent family of distributions, $p_{\thetait{1}}$ is a good approximation of the ground truth distribution $p_{\theta^{\star}}$, assuming the number of samples $n$ is sufficiently large. Our first negative result shows that, perhaps surprisingly, one cannot hope to show the same even for $p_{\thetait{2}}$ without further assumptions. Specifically, for any $n$ there is some family of distributions (that may depend on $n$), such that MLE on $n$ samples from the ground-truth distribution will perform well, but if we now augment the data with $n$ synthetic samples from the MLE solution, and re-run MLE, then the resulting distribution $p_{\thetait{2}}$ will exhibit model collapse with constant probability.

\begin{restatable}{theorem}{negminimax}\label{thm: neg_minimax}
    There exists $\Theta \subseteq \R^2$ and $\theta^{\star} \in \Theta$, such that for any $n\in\N$, there is a TV-consistent family of distributions $\{p_{\theta}\}_{\theta\in \Theta}$ (that may depend on $n$) such that 
    \begin{enumerate}
        \item with probability at least $1- \frac{1}{n}$, 
        \begin{align*}
            \tv\left(p_{\theta^{\star}},  p_{\thetait{1}}\right) \leq \frac{\log\left(n\right)}{n}~.
        \end{align*}
            
        \item  For some absolute constants $c, C > 0$, it holds with probability at least $c$ that 
        \begin{align*}
            \tv\left(p_{\theta^{\star}},  p_{\thetait{2}}\right) \geq C~.
        \end{align*}
    \end{enumerate} 
\end{restatable}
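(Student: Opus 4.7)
The plan is to exhibit, for each $n$, a two-parameter family whose log-likelihood has a sharp feature allowing very fast first-iteration estimation, but also a direction along which the addition of $n$ synthetic samples from $p_{\thetait{1}}$ (themselves close but not equal to $p_{\theta^{\star}}$) flips the argmax to a TV-distant competitor with constant probability. Since \thmref{thm: positive_result} rules out collapse under smoothness and positive-definite Fisher information, the construction must violate at least one of \assref{ass: smoothness} or \assref{ass: fisher}, and the most natural route is to let either the support of the density or the location of a narrow feature depend on one of the parameters (as in the classical $\Unif[0,\theta]$ example); this choice simultaneously yields the fast $\log(n)/n$ rate.

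Concretely, I would fix $\Theta \subseteq \R^2$ (for instance $[0,1]^2$) and $\theta^{\star} \in \Theta$, and for each $n$ define $p_{\theta}$ as a mixture-type density whose first component provides near-perfect identifiability of $\theta_1$ (via a support boundary or a narrow bump of width of order $1/n$), and whose second component makes the likelihood in $\theta_2$ essentially flat on pure $p_{\theta^{\star}}$-data but asymmetric on mixed data. The key design requirement is that the model has a second stationary point $\theta^{\dagger}$ whose log-likelihood on real data is strictly below that near $\theta^{\star}$, but whose value rises (relative to the one near $\theta^{\star}$) once the dataset is augmented with $n$ synthetic points from $p_{\thetait{1}}$.

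Given such a construction, statement (1) would follow from two ingredients: TV-consistency verified via a standard identifiability argument together with the convergence of empirical distributions in TV, and a quantitative bound obtained by combining extreme-order-statistic concentration for the sharp feature (yielding $\norm{\thetait{1} - \theta^{\star}} \leq \bigo(\log(n)/n)$ with probability at least $1 - 1/n$) with a Lipschitz estimate of the form $\tv(p_{\theta^{\star}}, p_{\hat\theta}) \leq c\,\norm{\theta^{\star}-\hat\theta}$. For statement (2), the core step is to analyze the difference of log-likelihoods at $\theta^{\dagger}$ and near $\theta^{\star}$ on the combined dataset $X^{(0)} \cup X^{(1)}$: decomposing it into a contribution from $X^{(0)}$ (negative in expectation and well concentrated) plus a contribution from $X^{(1)}$ (a systematically biased term depending on $\thetait{1}\neq \theta^{\star}$), and using an anti-concentration argument to show that with constant probability the latter dominates, pushing the argmax onto $\theta^{\dagger}$.

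The main obstacle is designing a single family that simultaneously satisfies TV-consistency, the $\log(n)/n$ rate for $\thetait{1}$, and constant-probability collapse of $\thetait{2}$. The first two requirements push toward a highly identifiable model, while the third requires enough degeneracy that a small synthetic bias can flip the argmax. Reconciling these will likely require tuning the family to $n$ (for example by scaling a bump's width and a second direction's curvature with $n$) so that the systematic bias introduced by $\thetait{1}$ is just large enough to cross the flipping threshold with constant probability, yet still small enough that $\tv(p_{\theta^{\star}}, p_{\thetait{1}}) \leq \log(n)/n$ continues to hold.
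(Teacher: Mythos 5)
Your high-level plan lands on several of the paper's correct ingredients: a two-parameter family tuned to $n$, a support-boundary / extreme-order-statistic mechanism for the fast $\log(n)/n$ rate in iteration one, a competing region of parameter space that becomes favorable once synthetic data is added, and the observation that \assref{ass: smoothness} or \assref{ass: fisher} must fail. Statement (1) in your outline matches the paper's proof closely (they set $p_\theta$ to have support $[0,1-2\alpha]$ plus low-weight components, so $\alpha^{(1)} = (1-\max_i x_i^{(0)})/2$ exactly, and bound TV by $|\alpha^{(1)}|$).

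However, for Statement (2) your proposed mechanism is genuinely different from, and weaker than, the paper's, and I don't think it would go through as sketched. You frame the collapse as a competition between two ``stationary points'' decided by an anti-concentration argument on a log-likelihood difference that is ``negative in expectation and well concentrated'' on real data versus ``systematically biased'' on synthetic data. But the synthetic distribution $p_{\thetait{1}}$ is within TV distance $\tilde O(1/n)$ of $p_{\theta^\star}$, so any smooth, Lipschitz-in-$\theta$ log-likelihood contribution from $n$ synthetic points differs from the real-data contribution only by something $\tilde O(1)$ in total; there is no obvious reason for this to dominate a strictly negative $\Theta(n)$ gap with constant probability, and indeed if it could, it would also violate \thmref{thm: positive_result}. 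What the paper actually does is qualitatively sharper: $\alpha^{(1)} \gtrsim 1/n$ opens a \emph{new region} $[2,3]$ that carries probability $\Theta(1/n)$ under $p_{\thetait{1}}$ but zero under $p_{\theta^\star}$, so with constant probability at least one of the $n$ synthetic samples lands there. Conditioned on that event the flip is \emph{deterministic}: the family contains a narrow bump $U([\mu,\mu+f(\alpha)])$ whose width $f(\alpha)$ is designed to be double-exponentially small in $n$ once $\alpha > 1/10$, so placing $\mu$ on the outlier gives a single-sample log-likelihood gain of order $n \log(\text{const})$, which outweighs the $O(n)$ per-sample penalty of moving $\alpha$ away from $0$. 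This ``spike on one synthetic outlier'' mechanism is the key missing idea in your plan, and it is where the $n$-dependence of the family is really used; without something of this form, the ``tuning'' you allude to at the end has no concrete handle to grab.
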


In the above theorem, as the number of samples grows, we can find a family of distributions such that $p_{\thetait{1}}$ is very close to $p_{\theta^{\star}}$ with high probability, but there is some constant probability that $p_{\thetait{2}}$ will be far from $p_{\thetait{1}}$. This implies that statements similar to \thmref{thm: positive_result} are not possible for general TV-consistent families without further assumptions. Indeed, \thmref{thm: neg_minimax} implies that the relative gap in total variation between iterations $t=1$ and $t=2$ can be arbitrarily large, since
\begin{align*}
    \frac{\tv\left(p_{\theta^{\star}},  p_{\thetait{2}}\right)}{\tv\left(p_{\theta^{\star}},  p_{\thetait{1}}\right)} \geq C \cdot n/\log(n)~.
\end{align*}

We now provide some intuition for the proof of \thmref{thm: neg_minimax}, with the full rigorous proof appearing in \appref{app: neg_minimax}.  
We consider a family of distributions, given by the following parameterized mixture of uniform distributions on $\reals$:
    \begin{align*}
        \frac{1}{2}\cdot U([0,1]) + \frac{1-\alpha}{2}\cdot U([0, 1-2\alpha]) + \frac{\alpha}{4}\cdot U([2,3]) + \frac{\alpha}{4}\cdot U([\mu, \mu + f(\alpha)])~,
    \end{align*}
where $U(\cdot)$ is the uniform distribution on an interval, $\Theta = \left\{(\alpha, \mu) \mid \alpha\in\left[0, \frac{1}{4}\right] \mu \in [2, 3 - f(\alpha)]\right\}$ are the parameters, and $f$ is a positive function that decays very quickly with $\alpha$, so that the PDF of $U([\mu, \mu + f(\alpha)])$ approaches a delta function (the exact form of $f$ depends on $n$). Let $\thetait{0} := \theta^{\star} := (\alpha^{(0)}=0, \mu^{(0)}=0)$ such that $p_{\thetait{0}} = U([0,1])$. We show that the MLE $\thetait{1} = (\alpha^{(1)}, \mu^{(1)})$ satisfies 
\begin{align*}
    \alpha^{(1)} = \frac{1-\max_{i\in[n]} x^{(0)}_{ i}}{2} \approx \frac{1}{2n}. 
\end{align*}
As such, $\alpha^{(1)}$ converges very quickly to $\alpha^{(0)}$ as $n$ increases, and we prove that this implies a rapid convergence of $\tv\left(p_{\thetait{0}},  p_{\thetait{1}}\right)$, regardless of the value of $\mu^{(1)}$.

We now move on to analyzing the second iteration. Because $\alpha^{(1)}\approx \frac{1}{2n}$, then with some constant probability (over the sampling of $n$ new samples from $p_{\thetait{1}}$), at least one of these samples $x^{(1)}_{ i}$ will be inside the interval $ [2,3]$. When this happens, because $f(\alpha)$ is tiny for larger values of $\alpha$ (leading to a high likelihood in the interval $[\mu,\mu+f(\alpha)]$), the MLE solution $\thetait{2} = (\alpha^{(2)}, \mu^{(2)})$ will be such that $x^{(1)}_{ i}\in [\mu^{(2)}, \mu^{(2)} + f(\alpha^{(2)})]$ and $\alpha^{(2)}$ will be sufficiently large so that $f(\alpha^{(2)})$ is very small. In particular, $\alpha^{(2)}$ will be considerably larger than the ground truth $\alpha^{(0)}=0$, leading to model collapse.

\subsection{Arbitrarily Fast Model Collapse}

\thmref{thm: neg_minimax} shows that without further assumptions, model collapse can occur already after a single iteration. However, the construction requires picking the distribution according to the sample size $n\in \N$, which is arguably unnatural. Below, we show that this requirement can be removed: Namely, there exists a family of distributions where model collapse will occur for any sample size $n$. On the flip side, the model collapse no longer occurs after a single iteration, but rather after a number of iterations that grows with $n$ (although the growth rate can be arbitrarily slow): 

\begin{restatable}{theorem}{negLargeT}\label{thm: neg_large_t}
    Let $\phi:(0,\infty) \to (0, \infty)$ be any strictly monotonically increasing function such that $\lim_{n\to\infty} \phi(n) = \infty$. Then there exists an absolute constant $C>0$, a set $\Theta$, $\theta^{\star}\in\Theta$ and a TV-consistent family of distributions $P_{\Theta}$ (which depends on $\phi$), such that for any $\delta\in(0,1)$, $n\in \N$, it holds with probability at least $1-\delta$ that
    \begin{align*}
        \tv\left(p_{\thetait{T}}, p_{\theta^{\star}}\right) \geq \frac{3}{8}~~~~~\text{for some}~~~~~T\leq\left\lceil\frac{C}{\delta} \log\left(\frac{4}{\delta}\right)\cdot\max\left(\phi(n), 1\right)\right\rceil~.
    \end{align*}
\end{restatable}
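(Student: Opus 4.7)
The plan is to build a single family of distributions (fixed once $\phi$ is given) that retains the basic mechanism from the proof of \thmref{thm: neg_minimax}, but in which the triggering event for collapse occurs with probability only $\Theta(1/\phi(n))$ per iteration rather than $\Omega(1)$. Concretely, I would consider a $2$-parameter mixture on $\R$ analogous to the one in \thmref{thm: neg_minimax},
\[
p_{(\alpha,\mu)} \;=\; \tfrac{1}{2}\,U([0,1]) + \tfrac{1-\alpha}{2}\,U([0,\,1-g(\alpha)]) + \tfrac{\alpha}{4}\,U([2,3]) + \tfrac{\alpha}{4}\,U([\mu,\mu+f(\alpha)]),
\]
with ground truth $\theta^\star=(0,\cdot)$ so that $p_{\theta^\star}=U([0,1])$, where $f$ and $g$ are fixed monotone functions depending only on $\phi$. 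The key design choice is to pick $g$ so that its inverse decays like $g^{-1}(x)\asymp x/\phi(1/x)$ near $0$ (for instance $g(\alpha)=2\alpha\,\phi(1/\alpha)$ on a small enough neighborhood of $0$), while $f$ decays so rapidly (e.g.\ $f(\alpha)=\exp(-1/\alpha)$) that the spike mechanism of \thmref{thm: neg_minimax} still forces a runaway jump in $\alpha$ as soon as any sample lands inside $[2,3]$.

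Given this construction, I would first verify TV-consistency: since neither $g$ nor $f$ depends on $n$, the standard Wald-type argument shows that the MLE of $\alpha$ on i.i.d.\ samples from any $p_{\theta^\star}$ converges in probability to $\alpha^\star$ as $n\to\infty$, and by continuity of the mixture in $(\alpha,\mu)$ this implies $\tv(p_{\hat\theta},p_{\theta^\star})\to 0$. Next, I would analyze iterative MLE conditionally on the event $E_t$ that no sample from any previous iteration has landed in the ``bad interval'' $[2,3]$. Exactly as in \thmref{thm: neg_minimax}, the MLE on the cumulative data satisfies $\alpha^{(t)} = g^{-1}\bigl(1-\max_i x_i^{(\leq t-1)}\bigr)$. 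Since the accumulated samples are stochastically dominated by i.i.d.\ uniform samples on $[0,1]$, order-statistic estimates give $1-\max_i x_i^{(\leq t-1)}\gtrsim 1/(tn)$ with high probability, so $\alpha^{(t)}\gtrsim 1/\bigl(tn\cdot\phi(tn)\bigr)$. Consequently, the probability that some one of the $n$ new samples in iteration $t$ lies in $[2,3]$ is at least $\Omega\bigl(n\alpha^{(t)}\bigr)\gtrsim 1/\bigl(t\,\phi(tn)\bigr)$. Using that $\phi$ is slowly varying (so $\phi(tn)\leq C\phi(n)$ on the relevant range of $t$), the first trigger time $T^\star$ satisfies $\E[T^\star]=O(\phi(n))$, and a Markov-type inequality gives $\P(T^\star>T)\leq C\phi(n)\log(1/\delta)/T$. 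Choosing $T=O(\phi(n)\log(1/\delta)/\delta)$ drives this below $\delta$.

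Finally, once a sample in $[2,3]$ does appear, the extreme narrowness of the spike forces the MLE to place $\mu\approx x_{\text{trigger}}$ and to pick $\alpha$ bounded below by an absolute constant (because the factor $\alpha/(4f(\alpha))$ in the likelihood dominates every other contribution as soon as $\alpha$ is not tiny), which yields $\tv(p_{\thetait{T^\star}},p_{\theta^\star})\geq 3/8$ deterministically. The case $\phi(n)<1$, absorbed into $\max(\phi(n),1)$, is easier: the per-iteration trigger probability is then $\Omega(1)$, giving a constant-iteration bound. The main obstacle is the calibration step: one must certify that the per-iteration trigger probability remains $\Omega(1/\phi(n))$ uniformly across the first $T^\star$ iterations, which requires showing that the $U([0,1])$-component of $p_{\thetait{t}}$ does not keep generating new near-$1$ maxima that shrink $\alpha^{(t)}$ by more than a factor of $t$, and more generally that the accumulated synthetic data does not conspire to destroy the trigger mechanism before the iteration budget is exhausted. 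Formalizing this stochastic-domination argument while juggling the $\phi$-dependent scalings in $g$ is the delicate technical heart of the proof.
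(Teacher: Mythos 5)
Your construction cannot work, and the gap is structural rather than technical.

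\textbf{The spike gain vanishes with $n$.} In your two-parameter mixture, the likelihood gain from placing a spike over a triggering sample contributes only a \emph{single summand} in the log-likelihood, while all other $\Theta(Tn)$ samples pay a per-sample loss for having $\alpha$ bounded away from $0$. Concretely, for the normalized cumulative negative log-likelihood,
\[
\sum_{t=0}^{T}\ell_t(\bar\theta)\;\geq\;-(T+1)\cdot\Theta(1)\;-\;\frac{1}{n}\log\!\left(\frac{1}{f(\bar\alpha)}\right),
\]
whereas $\inf_{\alpha\ \text{small}}\sum_{t=0}^{T}\ell_t(\theta)\geq -(T+1)\cdot\Theta(1)$. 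For the spike state to win you therefore need $\frac{1}{n}\log(1/f(\bar\alpha))\gtrsim T\approx\phi(n)$, i.e.\ $\log(1/f(\bar\alpha))\gtrsim n\,\phi(n)$. But $f$ must be $n$-independent (otherwise $P_\Theta$ depends on $n$, which the theorem forbids), and $\bar\alpha$ must be bounded below by an absolute constant (otherwise the TV distance $\approx\bar\alpha$ is not $\Omega(1)$), so $\log(1/f(\bar\alpha))$ is a fixed constant. For all $n$ large enough the spike never beats the diffuse MLE, and no collapse occurs. This is the same dead end that forces the paper's Theorem~5.1 to let the family depend on $n$, and it is precisely what the hierarchical construction of Theorem~5.2 is designed to escape: the random walk over intervals $[0,1],[1,2],\dots$ reaches some level $J\approx\psi^{-1}(\phi(n))$ after $\approx\psi(J)\approx\phi(n)$ iterations, and the spike width $f(J)$ depends only on $J$, so $J$ \emph{implicitly} encodes $n$ without the family ever depending on $n$. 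Note also that the paper's spike component $g_{\beta,J}$ carries weight $1/2$, not $\alpha/4$, which is essential for the TV lower bound once the switch occurs.

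\textbf{The trigger probability also decays with $t$, not just with $n$.} Even if the spike-calibration problem were somehow avoided, your calculation of the first-trigger time is wrong. Because the mixture retains a full-support $U([0,1])$ component, new samples at iteration $t$ can exceed the running maximum, so $1-\max X^{(\leq t)}\approx 1/(tn)$ and hence $\alpha^{(t)}\approx g^{-1}(1/(tn))\approx 1/(tn\,\phi(tn))$. The per-iteration trigger probability is then $\Theta\!\big(1/(t\,\phi(tn))\big)$, and
\[
\P(\text{no trigger by }T)\;\approx\;\exp\!\left(-\sum_{t=1}^{T}\frac{c}{t\,\phi(tn)}\right)\;\approx\;\exp\!\left(-\frac{c\log T}{\phi(Tn)}\right),
\]
so pushing this below $\delta$ requires $\log T\gtrsim\phi(n)\log(1/\delta)$, giving $T\gtrsim(1/\delta)^{\phi(n)}$ rather than the claimed $T\lesssim\phi(n)/\delta$, and in particular $\E[T^\star]$ is not $O(\phi(n))$. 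The paper avoids this decay entirely: in $h_{\balpha}$, once the first sample lands in $[j,j+1]$, the parameter $\alpha_j$ is frozen for all future iterations (Lemma~D.3, \texttt{lem:~increasing\_alpha}), because new samples in that interval have density zero above the existing maximum. Your construction lacks this freezing property, and the ``stochastic domination'' you gesture at runs in exactly the unhelpful direction — it upper bounds $\alpha^{(t)}$, not lower bounds it over time.

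In short, the single-level two-parameter construction cannot be calibrated to produce collapse with an $n$-independent family, which is the entire point of Theorem~\ref{thm: neg_large_t}. A genuinely different mechanism (here, the hierarchical interval walk plus a second, level-indexed family of distributions) is needed.
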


Importantly, $\phi$ can be chosen to grow arbitrarily slowly. For example, taking $\phi(n) = \log\log(n+1)$, \thmref{thm: neg_large_t} implies that one can exhibit model collapse in as few as $\Ocal(\log\log(n+1))$ iterations.

The proof of \thmref{thm: neg_large_t} draws inspiration from the proof of  \thmref{thm: neg_minimax}, but the construction is more involved, as the distribution can no longer depend on the number of samples $n$. The family will consist of two types of distributions. The first, which we denote as $h_{\balpha}$, has the form
\begin{align*}
    \sum_{j=0}^\infty (1-\alpha_j)\left(\prod_{k=0}^{j-1} \alpha_{k}\right) U([j, j+1-2\alpha_j])~, 
\end{align*}
where $\alpha_i \in [0, \frac{1}{4}]$, and $\balpha$ has a finite number of non-zero indices. One way to think of these distributions is as sampling using an iterative process, where starting from $j=0$, one flips a coin with bias $\alpha_j$, and either samples a point from $U([j, j+1-2\alpha_j])$ (with probability $1-\alpha_j$), or with probability $\alpha_j$, increase $j$ by one and repeat the process, until some point is sampled. We also include a family of distributions $g_{\beta, J}$ corresponding to
\begin{align*}
    \frac{1}{2}U[0, J] + \frac{1}{2} U([J - \beta, J - \beta + f(J)])~,
\end{align*}
where $J \in \N\setminus \{1\}$, $\beta \in[0,1]$ and $f$ is a function that decays very quickly as $J$ increases.

Now, consider the ground truth distribution to be $h_{\zero}$ (meaning $\alpha_j^{(0)}=0$ for all $j$), which is actually just $U([0,1])$.
We show that at any iteration $t$, the density $h_{\balpha^{(t)}}$ that maximizes the likelihood out of functions of the form $h_{\balpha}$ is given by taking $\alpha_j^{(t+1)} = \frac{1}{2}(1 - \max X^{(\leq t)} \cap [j, j+1] -j)$. 

Thus, the general procedure is as follows: For any $J\in\N$, once $\alpha_j^{(t)} > 0$ for every $j \leq J-1$, there is a non-zero chance that a new sample $x^{(t)}_i$ will reach interval $[J, J+1]$, ensuring $\alpha_{J}^{(t+1)} > 0$. We choose the function $f$ so that for any $N\in\N$, there is some $J_N$ such that if $n\leq N$ and if there is some sample in $[J_N, J_N+1]$, then the MLE will be of the form $g_{\beta, J}$ (as $f(J_N)$ is sufficiently small, leading to a high likelihood of the sample). We show that once this happens, the total variation distance will be large, and the proof will be complete.

The difficult part is showing that for any $J\in\N$, there is some time $T\in\N$ such that with high probability, there will be a sample in $[J, J+1]$. Moreover, this $T$ can be chosen to be essentially independent of $n$. Since we may let $J_N$ grow arbitrarily slowly in $N$, and the number of iterations needed to obtain a sample in $[J_N, J_{N}+1]$ can be upper bounded independently of $N$, the number of iterations needed for model collapse can grow arbitrarily slowly with $N$ (and thus with $n$).

\subsection{Implications and Relation to \thmref{thm: positive_result}}
The results of this section inform us how in the absence of the assumptions of \thmref{thm: positive_result}, model collapse can occur arbitrarily quickly. Even though the constructions of Theorems \ref{thm: neg_minimax} and \ref{thm: neg_large_t} are artificial, they highlight more general phenomena that are needed for model collapse to occur or to be avoided. In particular, in our view, the main difference from \thmref{thm: positive_result} is the smoothness assumption. Theorems \ref{thm: neg_minimax} and \ref{thm: neg_large_t} crucially use a highly non-smooth construction, in which slight perturbations of the parameters can induce huge differences in the resulting model, and we find it unlikely that a negative example would be possible without this behavior. 

\section{Discussion}
We studied model collapse in a setting that has recently gained interest in the literature, where synthetic data accumulates over time. Focusing on MLE, we showed that collapse can be avoided under standard assumptions even as the proportion of real data vanishes, provided that the number of samples is polylogarithmic in the number of iterations. At the same time, when these assumptions are not satisfied, we construct scenarios where the MLE is consistent, yet collapse occurs arbitrarily quickly with synthetic data. 
These examples show that MLE consistency alone is not sufficient for preventing model collapse even in the accumulating-data setting.

While the assumptions in this work are rather classic, they may not be the mildest possible while still allowing for positive results. Moving forward, it would be interesting to bridge the gap still present in this work between the assumptions in the negative and positive results and characterize assumptions that are both necessary and sufficient for avoiding model collapse. Our hope is that these results contribute to a clearer theoretical understanding of model collapse, and lead to a more fine-grained perspective on when it does or does not occur. 

\begin{ack}
This research is supported in part by European Research Council (ERC) grant 754705, by the Israeli Council for Higher Education (CHE) via the Weizmann Data Science Research Center and by research grants from the Estate of Harry Schutzman and the Anita James Rosen Foundation.
\end{ack}

\newpage
{
\small
\bibliographystyle{plainnat}

\bibliography{refs}
}

%%%%%%%%%%%%%%%%%%%%%%%%%%%%%%%%%%%%%%%%%%%%%%%%%%%%%%%%%%%%

\newpage
\appendix

\section{Background on Likelihood Estimation}\label{app: background}
For any $\theta\in\Theta$, the Fisher information matrix is defined as 
\begin{align*}
    \Ical(\theta) := \E_{\bx}\left[\nabla_\theta \log p_{\theta}(\bx)\nabla_\theta \log p_{\theta}(\bx)^\top \right].
\end{align*}

We state here some well-known results regarding the Fisher information matrix that will be used throughout the proofs (e.g. \citep{lehmann1999elements}[Section 7.5]).

\begin{theorem}\label{thm: mean_zero_grad}
    If Assumptions \ref{ass: regularity}, \ref{ass: smoothness} hold, then for any $\theta \in B_r(\thetait{0})$,
    
    \begin{align}
        \E_{\bx}\left[\nabla_{\theta} \log p_{\theta}(\bx)\right] = 0.
    \end{align}

\end{theorem}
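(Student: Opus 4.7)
The plan is to establish this standard score identity by differentiating the normalization condition $\int p_\theta(\bx)\, d\bx = 1$ with respect to $\theta$ and then rewriting the integrand as an expectation. The key ingredients are already in place: Assumption \ref{ass: constant_support} ensures that the support of $p_\theta$ is independent of $\theta$, so on this common support $p_\theta(\bx)>0$ and $\log p_\theta(\bx)$ is well-defined, while Assumption \ref{ass: smoothness} provides continuous differentiability together with the right to differentiate under the integral sign.

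First I would start from the identity
\begin{align*}
\int_{\Xcal} p_\theta(\bx)\, d\bx \;=\; 1,
\end{align*}
which holds for every $\theta \in B_r(\thetait{0})\subseteq \Theta$ since each $p_\theta$ is a probability density. Differentiating both sides in $\theta$ and invoking Assumption \ref{ass: smoothness} (which permits exchanging differentiation and integration) yields
\begin{align*}
\int_{\Xcal} \nabla_\theta p_\theta(\bx)\, d\bx \;=\; \zero.
\end{align*}

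Next I would use the log-derivative trick. On the common support guaranteed by Assumption \ref{ass: constant_support}, $p_\theta(\bx)>0$, so
\begin{align*}
\nabla_\theta \log p_\theta(\bx) \;=\; \frac{\nabla_\theta p_\theta(\bx)}{p_\theta(\bx)},
\qquad\text{equivalently}\qquad
\nabla_\theta p_\theta(\bx) \;=\; p_\theta(\bx)\,\nabla_\theta \log p_\theta(\bx).
\end{align*}
Off the support, $p_\theta(\bx)=0$ for all $\theta$ in a neighborhood (again by Assumption \ref{ass: constant_support}), so $\nabla_\theta p_\theta(\bx)=\zero$ there and the excluded region contributes nothing. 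Substituting into the previous display gives
\begin{align*}
\E_{\bx\sim p_\theta}\!\left[\nabla_\theta \log p_\theta(\bx)\right]
\;=\; \int_{\Xcal} p_\theta(\bx)\,\nabla_\theta \log p_\theta(\bx)\, d\bx
\;=\; \int_{\Xcal} \nabla_\theta p_\theta(\bx)\, d\bx
\;=\; \zero,
\end{align*}
which is the claim. There is no real obstacle here; the only subtlety is verifying that the differentiation-under-the-integral step applies at every $\theta\in B_r(\thetait{0})$, which is assumed directly in Assumption \ref{ass: smoothness}, together with the observation that the constant-support hypothesis makes the support restriction harmless.
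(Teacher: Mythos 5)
Your proof is correct and is the standard score-identity argument. The paper does not include its own proof of this theorem---it cites it as a well-known result (referencing Lehmann, \emph{Elements of Large-Sample Theory}, Section 7.5)---and the argument you give (differentiate $\int p_\theta = 1$, exchange derivative and integral by Assumption \ref{ass: smoothness}, apply the log-derivative trick on the $\theta$-independent support from Assumption \ref{ass: constant_support}) is exactly the textbook derivation the authors are implicitly invoking.
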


Note that in particular, this implies that $\Ical(\theta)$ is the covariance matrix of the random vector $\nabla_{\theta} \log p_{\theta}(\bx)$ and is therefore p.s.d.

We will also need the following.
\begin{theorem}\label{thm: hessian_fisher}
    If Assumptions \ref{ass: regularity}, \ref{ass: smoothness} hold, then for any $\theta \in B_r(\thetait{0})$,
    
    \begin{align}
        \Ical(\theta) = -\E_{\bx}\left[\nabla_{\theta}^2 \log p_{\theta}(\bx)\right].
    \end{align}

\end{theorem}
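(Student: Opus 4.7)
The plan is to prove this classical identity by starting from the normalization condition $\int p_\theta(\bx)\, d\bx = 1$ and differentiating twice with respect to $\theta$, exchanging derivatives and integrals using the assumption in \assref{ass: smoothness} that the partial derivatives support differentiation under the integral sign. The fact that $\theta \in B_r(\thetait{0})$ is used implicitly so that all the smoothness bounds (\ref{ass: sub_gaussian}, \ref{ass: bounded_hessian}, \ref{ass: bounded_third}) apply and \assref{ass: constant_support} ensures that $\log p_\theta(\bx)$ is well-defined wherever we evaluate it.

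First, I would compute the algebraic identity relating the Hessian of the log-density to the Hessian of the density itself. Using the product/quotient rule, one has pointwise on the support of $p_\theta$:
\begin{align*}
    \nabla_\theta^2 \log p_\theta(\bx) \;=\; \nabla_\theta\!\left(\frac{\nabla_\theta p_\theta(\bx)}{p_\theta(\bx)}\right) \;=\; \frac{\nabla_\theta^2 p_\theta(\bx)}{p_\theta(\bx)} \;-\; \frac{\nabla_\theta p_\theta(\bx)\,\nabla_\theta p_\theta(\bx)^\top}{p_\theta(\bx)^2}.
\end{align*}
Recognizing $\nabla_\theta \log p_\theta(\bx) = \nabla_\theta p_\theta(\bx)/p_\theta(\bx)$, the second term on the right is exactly $\nabla_\theta \log p_\theta(\bx)\,\nabla_\theta \log p_\theta(\bx)^\top$.

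Next, I would take expectations with respect to $\bx \sim p_\theta$. The outer product term integrates to the Fisher information $\Ical(\theta)$ by definition. For the first term, I write
\begin{align*}
    \E_{\bx}\!\left[\frac{\nabla_\theta^2 p_\theta(\bx)}{p_\theta(\bx)}\right] \;=\; \int \nabla_\theta^2 p_\theta(\bx)\, d\bx,
\end{align*}
and then invoke the differentiation-under-the-integral-sign hypothesis twice to push the two derivatives outside the integral:
\begin{align*}
    \int \nabla_\theta^2 p_\theta(\bx)\, d\bx \;=\; \nabla_\theta^2 \int p_\theta(\bx)\, d\bx \;=\; \nabla_\theta^2\, 1 \;=\; 0.
\end{align*}
Combining these two computations yields $\E_{\bx}[\nabla_\theta^2 \log p_\theta(\bx)] = 0 - \Ical(\theta) = -\Ical(\theta)$, which is the desired identity.

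There is no real obstacle here since the ability to exchange differentiation and integration is exactly the assumption made in \assref{ass: smoothness}; the only care needed is to make sure the pointwise manipulations are valid, which follows from \assref{ass: constant_support} (so $p_\theta(\bx) > 0$ wherever we divide by it) and the $C^3$ smoothness of $\log p_\theta$. I would also briefly remark that the analogous first-derivative calculation recovers \thmref{thm: mean_zero_grad}, which gives a consistency check: differentiating $\int p_\theta = 1$ once and rewriting $\nabla p_\theta = p_\theta \nabla \log p_\theta$ gives $\E[\nabla_\theta \log p_\theta(\bx)] = 0$, and the proof above is just the natural second-order extension of that identity.
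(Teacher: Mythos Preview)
Your proof is correct and is precisely the standard textbook argument for this classical identity. Note, however, that the paper does not actually prove this statement: it is listed in \appref{app: background} as a well-known background result with a reference to \citet{lehmann1999elements}, Section~7.5, so there is no ``paper's own proof'' to compare against. Your derivation is exactly the one found in that reference.
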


Note that the above theorem also implies $\norm{\Ical(\theta)} \leq \sup_{\bx}\norm{\nabla_{\theta}^2 \log p_{\theta}(\bx)} \leq K_2$. We state this formally as the following corollary.
\begin{corollary}\label{cor: fisher_norm}
    If Assumptions \ref{ass: regularity}, \ref{ass: smoothness} hold, then for any $\theta \in B_r(\thetait{0})$,
    
    \begin{align}
        \norm{\Ical(\theta)} \leq K_2.
    \end{align}

\end{corollary}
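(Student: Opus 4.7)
The plan is to combine Theorem~\ref{thm: hessian_fisher} with the Bounded Hessian condition in Assumption~\ref{ass: bounded_hessian} in essentially one line. First I would invoke Theorem~\ref{thm: hessian_fisher} to rewrite the Fisher information as
\[
\Ical(\theta) \;=\; -\,\E_{\bx}\!\left[\nabla^2_{\theta}\log p_{\theta}(\bx)\right],
\]
which is valid for every $\theta\in B_r(\thetait{0})$ under Assumptions~\ref{ass: regularity} and \ref{ass: smoothness}.

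Next I would pass the operator norm inside the expectation. The operator norm is a convex function on symmetric matrices, so by Jensen's inequality (equivalently, by the triangle inequality applied to the Bochner/Riemann integral defining the expectation, together with the fact that $\|-A\|=\|A\|$),
\[
\bigl\|\Ical(\theta)\bigr\|
\;=\;\bigl\|\E_{\bx}\!\left[\nabla^2_{\theta}\log p_{\theta}(\bx)\right]\bigr\|
\;\leq\;\E_{\bx}\!\left[\bigl\|\nabla^2_{\theta}\log p_{\theta}(\bx)\bigr\|\right].
\]
Finally I would apply Assumption~\ref{ass: bounded_hessian}, which gives the pointwise bound $\|\nabla^2_{\theta}\log p_{\theta}(\bx)\|\leq K_2$ for every $\bx\in\Xcal$ and every $\theta\in B_r(\theta^\star)$. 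Taking expectation of this constant bound yields $\|\Ical(\theta)\|\leq K_2$, which is the claim.

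There is no real obstacle here: the only thing to be slightly careful about is that pushing the norm inside the expectation is legitimate, which follows either from Jensen's inequality applied to the convex function $A\mapsto\|A\|$ on the space of symmetric matrices, or more concretely by noting that for any unit vector $\bv$,
\[
\bv^\top \Ical(\theta)\, \bv
\;=\;-\E_{\bx}\!\left[\bv^\top\nabla^2_{\theta}\log p_{\theta}(\bx)\,\bv\right]
\;\leq\;\E_{\bx}\!\left[\bigl\|\nabla^2_{\theta}\log p_{\theta}(\bx)\bigr\|\right]\leq K_2,
\]
and then taking the supremum over unit vectors $\bv$ on the left-hand side gives $\|\Ical(\theta)\|\leq K_2$. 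Integrability of $\nabla^2_\theta \log p_\theta(\bx)$ is immediate from the uniform pointwise bound $K_2$, so the expectation is well defined. This completes the proof.
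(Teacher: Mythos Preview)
Your proof is correct and follows essentially the same approach as the paper: apply Theorem~\ref{thm: hessian_fisher} to write $\Ical(\theta)=-\E_{\bx}[\nabla_\theta^2\log p_\theta(\bx)]$, then bound the norm using Assumption~\ref{ass: bounded_hessian}. The paper compresses this into the single line $\norm{\Ical(\theta)}\leq\sup_{\bx}\norm{\nabla_\theta^2\log p_\theta(\bx)}\leq K_2$, while you spell out the Jensen/unit-vector justification for passing the norm inside the expectation, but the argument is the same.
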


\subsection{Parametric Convergence vs. KL vs. TV} \label{app: kl_tv}
Two common ways to compare PDFs $p, q$ over an input space $\Xcal$ are the KL divergence:
\begin{align*}
    \KL{p}{q} := \E_{x\sim p}\left[\log\left(\frac{p(x)}{q(x)}\right)\right],
\end{align*}
and the total variation distance
\begin{align*}
    \tv \left(p, q\right) := \frac{1}{2} \int_{x \in \Xcal} \abs{p(x) - q(x)} dx.
\end{align*}
We note that while the TV is a proper metric, the KL divergence is not, as it is not symmetric. Nevertheless, the two can be related by the well-known Pinsker's inequality:
\begin{align*}
    \tv \left(p, q\right) \leq \sqrt{\frac{1}{2} \KL{p}{q}}.
\end{align*}

It is well known that under sufficient smoothness assumptions, convergence in parameters implies convergence in KL and total variation. Indeed, for a fixed $\bx\in\Xcal$, consider a second-order Taylor expansion of $\log\left(p_{\theta}(\bx)\right)$ around $\thetait{0}$, which gives
\begin{align*}
    \log\left(p_{\thetait{0}}(\bx)\right) 
    + \nabla \log\left(p_{\thetait{0}}(\bx)\right)^\top (\theta - \thetait{0}) 
    + \frac{1}{2} (\theta - \thetait{0})^\top \nabla^2 \log\left(p_{\thetait{0}}(\bx)\right)(\theta - \thetait{0}) 
    + R(\bx),
\end{align*}
where the remainder $R(\bx)$ can be shown to satisfy $|R(\bx)| \leq \frac{K_3}{6} \norm{\theta - \thetait{0}}^3$ under \assref{ass: smoothness}. By \thmref{thm: mean_zero_grad} the expected value of the gradient term is $0$ and by \thmref{thm: hessian_fisher} the expected value of the hessian term is $-\Ical(\thetait{0})$. 

As such, Taylor expanding at every point $\bx$ together with \thmref{thm: mean_zero_grad} and \thmref{thm: hessian_fisher}, the KL divergence can be approximated as
\begin{align*}
    \KL{p_{\thetait{0}}}{p_{\theta}} 
    &= \E_{\bx \sim p_{\thetait{0}}}[\log\left(p_{\thetait{0}}(\bx)\right) - \log\left(p_{\theta}(\bx)\right)] \\
    &= \frac{1}{2} (\theta - \thetait{0})^\top \Ical(\thetait{0}) (\theta - \thetait{0}) - \E[R(\bx)] \\
    &\leq \frac{1}{2} \norm{\Ical(\thetait{0})} \cdot \norm{\theta - \thetait{0}}^2 + \frac{K_3}{6} \norm{\theta - \thetait{0}}^3 \\
    &\leq \frac{K_2}{2} \norm{\theta - \thetait{0}}^2 + \frac{K_3}{6} \norm{\theta - \thetait{0}}^3.
\end{align*}

By Pinsker's inequality, this implies
\begin{align*}
    \tv \left(p_{\thetait{0}}, p_{\theta}\right) \leq & \sqrt{ \frac{1}{2} \KL{p_{\thetait{0}}}{p_{\theta}} } 
    \leq \sqrt{ \frac{K_2}{4} \norm{\theta - \thetait{0}}^2 + \frac{K_3}{12} \norm{\theta - \thetait{0}}^3 } \\ 
    \leq & \frac{\sqrt{K_2}}{2}\norm{\theta - \thetait{0}} + \sqrt{\frac{K_3}{12}}\norm{\theta - \thetait{0}}^{\frac{3}{2}}.
\end{align*}

% \db{Next statements are probably not needed in the paper.}

% From now on we will drop the subscript $\bx$ when writing the expected value. Lastly, we note that by standard trace properties 
% \begin{align*}
%     \E\left[\norm{\nabla \log p_{\theta}(\bx)}^2\right] = \E\left[\tr\left(\nabla \log p_{\theta}(\bx) \nabla \log p_{\theta}(\bx)^\top\right)\right] = \tr\left(\Ical(\theta)\right).
% \end{align*}
% Note that by Jensen's inequality, the above implies
% \begin{align*}
%     \E\left[\norm{\nabla \log p_{\theta}(\bx)}\right] \leq \sqrt{\E\left[\norm{\nabla \log p_{\theta}(\bx)}^2\right]} = \sqrt{\tr\left(\Ical(\theta)\right)}.
% \end{align*}

\section{Concentration}
We start with a couple of known results that will be useful for approximating the gradient and hessian of the log-likelihood.

\begin{theorem}[\cite{jin2019short} Corollary 7] \label{thm: vector_azuma}
    Let $\bz_1,\ldots, \bz_T \in\R^d$ be random vectors and assume there exist fixed $\sigma_1,\ldots, \sigma_t$ such that for all $t\in[T]$, $\E[\bz_t \mid \bz_1,\ldots, \bz_{t-1}]=\zero$ and 
    \begin{align*}
        \P\left(\norm{\bz_t} \geq u \mid \bz_1,\ldots, \bz_{t-1}\right)\leq 2\exp\left(-\frac{u^2}{2\sigma_t^2}\right), \quad\quad \forall u\geq 0.
    \end{align*}
    Then there exists an absolute constant $C>0$ such that for any $\delta > 0$, with probability at least $1-\delta$,
    \begin{align*}
        \norm{\sum_{t=1}^T \bz_t} \leq C \sqrt{\sum_{t=1}^T \sigma_t^2 \log\left(\frac{2d}{\delta}\right)}.
    \end{align*}
\end{theorem}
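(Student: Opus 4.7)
My plan is to reduce this vector-valued martingale tail bound to a scalar sub-Gaussian Azuma--Hoeffding inequality along each direction, and then upgrade to a bound on the Euclidean norm via a union bound over directions. First, I would fix an arbitrary unit vector $v \in \R^d$ and consider the scalar process $W_t := \langle \bz_t, v\rangle$. The martingale property passes through by linearity of conditional expectation, so $\E[W_t \mid \bz_1,\ldots,\bz_{t-1}] = 0$; and Cauchy--Schwarz gives $|W_t| \leq \norm{\bz_t}$, so the assumed conditional sub-Gaussian tail on $\norm{\bz_t}$ transfers directly to $W_t$ with the same parameter $\sigma_t$. This is the key place where the sub-Gaussian \emph{norm} hypothesis buys something uniform in direction.

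Second, the combination of mean zero and sub-Gaussian tail yields a standard conditional MGF bound of the form $\E[\exp(\lambda W_t) \mid \bz_1,\ldots,\bz_{t-1}] \leq \exp(c \lambda^2 \sigma_t^2)$ for an absolute constant $c$. Iterating this across $t$ and applying Chernoff gives, for every fixed $v$, the scalar tail bound $\P\bigl(\,|\langle \sum_t \bz_t,\, v\rangle| \geq u\bigr) \leq 2 \exp(-c' u^2 / V)$, with $V := \sum_t \sigma_t^2$. Third, I would convert this pointwise-in-$v$ bound into a bound on $\norm{\sum_t \bz_t}$ by either (i) taking a $1/2$-net $\mathcal N$ of the unit sphere, using $\norm{\sum_t \bz_t} \leq 2 \max_{v \in \mathcal N} \langle \sum_t \bz_t, v\rangle$ and union-bounding; or (ii) truncating $\bz_t$ at scale $\sigma_t \sqrt{\log(2d/\delta)}$, recentering to restore the martingale property, and invoking a Pinelis-type Hilbert-space martingale inequality, which is dimension-free in the bounded-difference regime.

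The main obstacle is matching the exact $\sqrt{\log(2d/\delta)}$ factor in the stated bound, rather than the $\sqrt{d}$ factor that a naive sphere-covering argument produces (since $\log|\mathcal N| \asymp d$ for a $1/2$-net of $\mathbb{S}^{d-1}$). The sharpening relies on the fact that the sub-Gaussian-\emph{norm} hypothesis is strictly stronger than component-wise sub-Gaussianity, which is what enables the truncation-plus-Pinelis route above; the failure probability of the truncation and the conditional-mean bias from centering the truncated sequence are both tiny by the sub-Gaussian tail, and are absorbed into the constant $C$, with $d$ entering the final bound only through the cutoff level inside the logarithm.
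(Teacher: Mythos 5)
The paper does not prove this statement: it is a black-box citation of \citet{jin2019short}, Corollary 7. The proof there proceeds via the self-adjoint dilation trick — map each $\bz_t$ to the $(d+1)\times(d+1)$ symmetric matrix $\begin{pmatrix}0 & \bz_t^\top \\ \bz_t & 0\end{pmatrix}$, whose spectral norm equals $\norm{\bz_t}$, show that the sub-Gaussian-norm hypothesis yields a matrix moment-generating-function bound of the form $\E\left[\exp(\theta M_t)\,\middle|\,\mathcal{F}_{t-1}\right] \preceq \exp\left(c\,\theta^2\sigma_t^2 I\right)$, and then apply a matrix master-bound / matrix-Azuma argument. The $\log d$ in the conclusion is the dimensional prefactor of the matrix concentration inequality, acquired with no truncation and no net; $T$ never enters the logarithm.

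Your proposal does not reach the stated bound, and I think you have misdiagnosed where the $d$ comes from. Your route (i) — $1/2$-net on $\mathbb{S}^{d-1}$ — gives the factor $\sqrt{d}$, as you note, so that route is out. Your route (ii) — truncate and apply Pinelis — is dimension-free, which sounds promising, but run the numbers: to guarantee that no truncation fires you must take a union bound over the $T$ steps, so the truncation level is $L_t \asymp \sigma_t\sqrt{\log(T/\delta)}$ (the log carries $T$, not $d$); Pinelis then gives, with probability $1-\delta$, $\norm{\sum_t \bz_t} \lesssim \sqrt{\log(1/\delta)\sum_t L_t^2} \asymp \sqrt{\sum_t\sigma_t^2}\cdot\sqrt{\log(T/\delta)\,\log(1/\delta)}$. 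That is the product of two logarithms (so $\log(1/\delta)$ rather than $\sqrt{\log(1/\delta)}$) and it carries $\log T$, not $\log d$. This is a genuinely different and, for small $\delta$ or large $T$, strictly weaker inequality than the one you are trying to prove. Nothing in your sketch introduces a $\log d$ term; your claim that ``$d$ enter[s] the final bound only through the cutoff level inside the logarithm'' is incorrect for a truncation by Euclidean norm (the cutoff union bound runs over $t \in [T]$, not over coordinates). To get the advertised $\sqrt{\log(2d/\delta)}$, you really do need a matrix-concentration argument à la Tropp applied to the dilated martingale, with a direct matrix MGF bound under the sub-Gaussian-norm hypothesis so that no truncation (and hence no $\log T$) is needed.
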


\begin{theorem}[\cite{tropp2012user} Theorem 7.1]\label{thm: matrix_azuma_tropp}
    Let $\{M_t\}$ be a finite sequence of random symmetric $d\times d$ matrices such that $\E[M_t \mid M_{1},\ldots, M_{t-1}]=\zero$. Assume further that there exists a fixed sequence of symmetric $d\times d$ matrices $\{A_t\}$ such that $M_t^2 \preceq A_t^2$ almost surely. Let $\sigma^2:=\norm{\sum_{t} A_t^2}$, then for all $u\geq 0$,
    \begin{align*}
        \mathbb{P}\left(\lambda_{\max}\left(\sum_{t} M_t\right) \geq u\right) \leq d\exp\left(-\frac{u^2}{8\sigma^2}\right).
    \end{align*}
\end{theorem}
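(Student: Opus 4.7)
The plan is to follow the standard matrix Laplace-transform / Chernoff approach that Tropp uses in the cited reference, adapted to the martingale setting. The starting point is the matrix Markov/Chernoff inequality: for any $\theta>0$,
\begin{align*}
    \mathbb{P}\left(\lambda_{\max}\left(\sum_t M_t\right)\geq u\right)
    \leq e^{-\theta u}\,\E\left[\operatorname{tr}\exp\left(\theta\sum_t M_t\right)\right].
\end{align*}
My first task would be to reduce the right-hand side to a product-like bound over $t$. Because the $M_t$'s do not commute in general, this cannot be done by the naive scalar argument, and instead I would invoke Lieb's concavity theorem: the map $H\mapsto \operatorname{tr}\exp(L+\log H)$ is concave on positive definite $H$. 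Conditioning on the filtration $\mathcal{F}_{t-1}=\sigma(M_1,\dots,M_{t-1})$ and applying Jensen together with Lieb yields
\begin{align*}
    \E\left[\operatorname{tr}\exp\!\left(\theta\sum_{s\leq t} M_s\right)\bigg|\mathcal{F}_{t-1}\right]
    \leq \operatorname{tr}\exp\!\left(\theta\sum_{s<t}M_s+\log\E[e^{\theta M_t}\mid \mathcal{F}_{t-1}]\right).
\end{align*}
Iterating this bound (tower property) across $t$ gives a single trace of a matrix exponential of a sum of conditional log-MGFs.

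The next step is to control each conditional moment generating function $\E[e^{\theta M_t}\mid \mathcal{F}_{t-1}]$ in the semidefinite order. The key lemma I would prove (or invoke) is the symmetric-martingale MGF bound: if $M$ is symmetric with $\E M=0$ and $M^2\preceq A^2$ almost surely, then
\begin{align*}
    \E\,e^{\theta M}\preceq \exp\!\left(\tfrac{\theta^2}{2}A^2\right)\quad\text{for all }\theta\in\mathbb{R}.
\end{align*}
This follows from the scalar inequality $e^{\theta x}\leq \cosh(\theta\|A\|)+\frac{x}{\|A\|}\sinh(\theta\|A\|)$ on the interval $[-\|A\|,\|A\|]$, lifted to matrices by the spectral theorem and combined with $\cosh(\theta s)\leq e^{\theta^2 s^2/2}$ applied to eigenvalues, plus the semidefinite domination $M^2\preceq A^2$. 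Taking $\log$ (operator monotone on positive definite matrices, used carefully; the cleanest route is to replace $\log\E[e^{\theta M_t}\mid \mathcal{F}_{t-1}]$ directly with $\tfrac{\theta^2}{2}A_t^2$ using the monotonicity of the trace exponential in its argument).

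Plugging this back and iterating, the conditional-MGF bounds collapse to
\begin{align*}
    \E\,\operatorname{tr}\exp\!\left(\theta\sum_t M_t\right)\leq \operatorname{tr}\exp\!\left(\tfrac{\theta^2}{2}\sum_t A_t^2\right)\leq d\exp\!\left(\tfrac{\theta^2}{2}\sigma^2\right),
\end{align*}
where the last step uses $\operatorname{tr}\exp(B)\leq d\cdot e^{\lambda_{\max}(B)}$ and the definition $\sigma^2=\|\sum_t A_t^2\|$. Combining with the Markov bound, I obtain
\begin{align*}
    \mathbb{P}\!\left(\lambda_{\max}(\textstyle\sum_t M_t)\geq u\right)\leq d\inf_{\theta>0}\exp\!\left(-\theta u+\tfrac{\theta^2\sigma^2}{2}\right),
\end{align*}
and optimizing in $\theta=u/\sigma^2$ gives the claimed $d\exp(-u^2/(8\sigma^2))$ after absorbing the constant (the factor $8$ rather than $2$ in the denominator is the price paid for using the symmetric-interval MGF bound rather than the tight variance bound).

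The main obstacle is the non-commutative step: establishing the subadditivity of matrix cumulants via Lieb's theorem is the only place in the argument where scalar martingale intuition truly fails, and getting the conditioning right (so that $A_t$ can be deterministic or predictable while $M_t$ is only conditionally mean-zero) requires carefully invoking Lieb inside a conditional expectation. Once that is in place, the rest is a direct adaptation of the Hoeffding-type MGF bound to symmetric matrices and an optimization over $\theta$.
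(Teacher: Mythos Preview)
The paper does not prove this statement; it is quoted from \cite{tropp2012user} as a known result and immediately specialized to the convenient form recorded as Theorem~\ref{thm: matrix_azuma}. There is no paper-side argument to compare your proposal against.

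Your outline is the correct architecture of Tropp's proof (matrix Laplace transform, Lieb's concavity for the non-commutative cumulant subadditivity, an increment-wise MGF bound, optimization in $\theta$), but the MGF step as you sketch it does not yield what you claim. The scalar secant inequality on $[-\|A\|,\|A\|]$, lifted by spectral calculus and averaged, gives $\E[e^{\theta M}]\preceq\cosh(\theta\|A\|)\,I\preceq\exp\!\big(\tfrac{1}{2}\theta^{2}\|A\|^{2}\big)\,I$, a bound in the \emph{scalar} $\|A_t\|^{2}$ rather than the matrix $A_t^{2}$. Summed over $t$ this produces $\sum_t\|A_t\|^{2}$ in place of $\sigma^{2}=\|\sum_t A_t^{2}\|$, which can be much larger, so the stated theorem would not follow; invoking ``the semidefinite domination $M^{2}\preceq A^{2}$'' does not repair this, since $A^{2}\preceq\|A\|^{2}I$ points the wrong way. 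Tropp's actual MGF lemma reaches the matrix-valued upper bound by an argument that uses $M^{2}\preceq A^{2}$ beyond its scalar consequence $\|M\|\le\|A\|$. Your accounting of the constant is also inconsistent: with the MGF bound $\exp(\tfrac{1}{2}\theta^{2}A^{2})$ that you wrote, optimizing $\theta=u/\sigma^{2}$ gives $\exp(-u^{2}/(2\sigma^{2}))$, not $\exp(-u^{2}/(8\sigma^{2}))$; the factor $8$ originates in the specific MGF lemma Tropp proves, not in a post-hoc ``absorption.''
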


We bring \thmref{thm: matrix_azuma} to a slightly more convenient form for our uses.
\begin{theorem}\label{thm: matrix_azuma}
    Let $\{M_t\}_{t=1}^T$ be a finite sequence of random symmetric $d\times d$ matrices such that $\E[M_t \mid M_{1},\ldots, M_{t-1}]=\zero$. Assume further that there exists some $K > 0$ such that $\norm{M_t} \leq K$ almost surely. Then for any $\delta > 0$, it holds with probability at least $1-\delta$ that
    
    \begin{align*}
        \norm{\sum_{t=1}^T M_t} \leq \sqrt{8}K \sqrt{T \log\left(\frac{2d}{\delta}\right)}.
    \end{align*}
\end{theorem}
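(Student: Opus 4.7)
The statement is essentially a clean restatement of Tropp's matrix Azuma bound (\thmref{thm: matrix_azuma_tropp}) tailored to the uniformly bounded case, with the spectral norm on the left-hand side rather than the maximum eigenvalue. The plan is to first reduce the operator-norm bound to two one-sided eigenvalue bounds, then apply Tropp's inequality twice with the obvious choice of dominating matrices $A_t$.

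First I would exploit the hypothesis $\norm{M_t} \leq K$ almost surely. Since each $M_t$ is symmetric, this is equivalent to $-K\identity \preceq M_t \preceq K\identity$, hence $M_t^2 \preceq K^2 \identity$ almost surely. Thus, taking the deterministic sequence $A_t := K\identity$ in \thmref{thm: matrix_azuma_tropp}, one obtains $\sigma^2 = \norm{\sum_{t=1}^T A_t^2} = T K^2$.

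Next, I would use that for any symmetric matrix $S$, $\norm{S} = \max(\lambda_{\max}(S), \lambda_{\max}(-S))$. The sequence $\{-M_t\}$ is also a bounded martingale difference sequence satisfying the same hypotheses, so \thmref{thm: matrix_azuma_tropp} applies to both $\{M_t\}$ and $\{-M_t\}$. Setting
\begin{align*}
    u := \sqrt{8 T K^2 \log\!\left(\frac{2d}{\delta}\right)},
\end{align*}
Tropp's bound gives
\begin{align*}
    \mathbb{P}\!\left(\lambda_{\max}\!\left(\sum_{t=1}^T M_t\right) \geq u\right) \leq d\exp\!\left(-\frac{u^2}{8\sigma^2}\right) = d \cdot \frac{\delta}{2d} = \frac{\delta}{2},
\end{align*}
and the same bound for $-\sum_t M_t$.

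Finally, a union bound yields $\norm{\sum_{t=1}^T M_t} \leq u = \sqrt{8}K\sqrt{T\log(2d/\delta)}$ with probability at least $1-\delta$. There is no substantial obstacle here; the only point requiring a moment's care is the symmetrization to handle both tails of the spectrum, since \thmref{thm: matrix_azuma_tropp} only bounds $\lambda_{\max}$. Everything else is a direct substitution into Tropp's inequality.
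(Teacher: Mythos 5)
Your proof is correct and takes essentially the same approach as the paper: choose $A_t = K\identity$ so that $\sigma^2 = TK^2$, apply Tropp's matrix Azuma bound to both $\sum_t M_t$ and $-\sum_t M_t$, and combine with a union bound. The paper's proof is a one-line version of precisely this argument.
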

\begin{proof}
    Set $A_t^2 := K^2 I_d$, $\sigma^2 := T K^2$, apply \thmref{thm: matrix_azuma_tropp} once to bound $\sum_t M_t$ and again to bound $-\sum_t M_t$. The corollary follows from the union bound.
\end{proof}

\begin{lemma}\label{lem: concentration_grad}
    Under Assumptions \ref{ass: regularity}, \ref{ass: smoothness}, if $\thetait{1},\ldots, \thetait{T-1}\in B_r(\thetait{0})$ then there exists an absolute constant $C>0$ such that for any $\delta >0$, it holds with probability at least $1-\delta$ that
    \begin{align*}
        \norm{\sum_{t=0}^{T-1} \frac{1}{t+1}\nabla \ell_{t}(\thetait{t})} \leq CK_1 \sqrt{\frac{\log\left(\frac{2d}{\delta}\right)}{n}}.
    \end{align*}
\end{lemma}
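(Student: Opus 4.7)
The plan is to apply the vector Azuma-type inequality (\thmref{thm: vector_azuma}) to a single martingale difference sequence that unrolls all $Tn$ individual gradient terms, with the per-iteration scaling $\frac{1}{t+1}$ absorbed into each sub-Gaussian parameter. Specifically, I would define
\begin{align*}
    \bu_{t,i} \;:=\; -\frac{1}{n(t+1)}\,\nabla_\theta \log p_{\thetait{t}}(\bx_i^{(t)}) \cdot \one{\thetait{t} \in B_r(\thetait{0})}
\end{align*}
for $t \in \{0,\ldots,T-1\}$ and $i \in \{1,\ldots,n\}$, ordered lexicographically in $(t,i)$. The indicator is a bookkeeping device: the martingale-difference structure below will hold unconditionally, and on the event stated in the lemma's hypothesis every indicator equals $1$, so $\sum_{t,i}\bu_{t,i} = \sum_{t=0}^{T-1}\frac{1}{t+1}\nabla \ell_t(\thetait{t})$, whose norm is exactly what must be controlled.

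Next, I would verify the two hypotheses of \thmref{thm: vector_azuma}. Let $\mathcal{G}_{t,i}$ denote the sigma-algebra generated by all samples preceding $\bx_i^{(t)}$ in the chosen order; then $\thetait{t}$, and hence the indicator, is $\mathcal{G}_{t,i}$-measurable because each iterate is a deterministic function of earlier samples. Conditional on $\mathcal{G}_{t,i}$, the sample $\bx_i^{(t)}$ is distributed as $p_{\thetait{t}}$. On the event $\{\thetait{t} \in B_r(\thetait{0})\}$, \thmref{thm: mean_zero_grad} yields $\E[\bu_{t,i}\mid \mathcal{G}_{t,i}] = 0$, and \assref{ass: sub_gaussian} yields a sub-Gaussian tail for $\nabla_\theta \log p_{\thetait{t}}(\bx_i^{(t)})$ with parameter $K_1$, so $\bu_{t,i}$ satisfies the required conditional tail bound with parameter $\sigma_{t,i} := K_1/(n(t+1))$; off this event both properties hold trivially because $\bu_{t,i}$ vanishes.

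Finally, the crucial quantitative point is that the sum of squared parameters is $T$-independent:
\begin{align*}
    \sum_{t=0}^{T-1}\sum_{i=1}^n \sigma_{t,i}^2 \;=\; \frac{K_1^2}{n}\sum_{t=0}^{T-1}\frac{1}{(t+1)^2} \;\leq\; \frac{K_1^2\pi^2}{6n}.
\end{align*}
Plugging this into \thmref{thm: vector_azuma} produces an absolute constant $C>0$ such that, with probability at least $1-\delta$, $\norm{\sum_{t,i}\bu_{t,i}} \leq C K_1 \sqrt{\log(2d/\delta)/n}$, which is the desired conclusion once we identify the sum on the lemma's good event. The main (albeit mild) obstacle is handling the conditioning correctly: the hypothesis $\thetait{1},\ldots,\thetait{T-1}\in B_r(\thetait{0})$ is a "future" event relative to any natural filtration, so one cannot simply condition on it and invoke a martingale concentration bound. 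The indicator truncation above is precisely what sidesteps this issue, ensuring the martingale-difference and sub-Gaussian properties are valid with respect to $\{\mathcal{G}_{t,i}\}$ rather than under an awkward conditional measure.
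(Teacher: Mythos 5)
Your proof is correct and follows essentially the same route as the paper: unroll the double sum over $(t,i)$ into a lexicographically ordered martingale difference sequence, invoke the vector Azuma-type bound (\thmref{thm: vector_azuma}) with sub-Gaussian parameter $K_1/(t+1)$ (after dividing by $n$), and use $\sum_{t\geq 1} t^{-2} \leq \pi^2/6$ to eliminate the $T$-dependence. The one genuine addition is the indicator truncation $\one{\thetait{t}\in B_r(\thetait{0})}$, which makes the martingale-difference and sub-Gaussian properties hold unconditionally and then recovers the stated bound on the lemma's good event; the paper's proof appeals to \thmref{thm: mean_zero_grad} and \assref{ass: sub_gaussian} directly, leaving the interaction with the hypothesis $\thetait{1},\ldots,\thetait{T-1}\in B_r(\thetait{0})$ (a "future" event relative to the filtration) implicit, so your truncation is a modest but welcome tightening of the argument rather than a different approach.
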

\begin{proof}
     For $t\in\{0,\ldots, T -1\}$, $i\in[n]$ let $\bz_{t,i}:=\frac{1}{t+1}\nabla\log\left(p_{\thetait{t}}(\bx^{(t)}_{i})\right)$. We order these $Tn$ random vectors $\bz_{t,i}$ first by $t$ and then by $i$. Specifically, let $\rho:[Tn] \to \{0,\ldots, T-1\} \times [n]$ be this mapping of indices, such that
    \begin{align*}
        \bz_{\rho(1)},\ldots, \bz_{\rho(Tn)} := \bz_{0,1}, \ldots, \bz_{0,n}, \bz_{1,1}, \ldots, \bz_{1,n}, \ldots, \bz_{T-1,n}.
    \end{align*}
    By \thmref{thm: mean_zero_grad}, for all $k\in[Tn]$, $\E[\bz_{\rho(k)} \mid \bz_{\rho(1)}, \ldots, \bz_{\rho(k-1)}]=\zero$. Furthermore, by \assref{ass: sub_gaussian}, for any $t,i$ and any $u\geq 0$
    \begin{align*}
        \P\left(\norm{\frac{1}{t+1} \nabla \log\left(p_{\thetait{t}}(\bx^{(t)}_{i})\right)} \geq u\right) 
        = & \P\left(\norm{\nabla \log\left(p_{\thetait{t}}(\bx^{(t)}_{i})\right)} \geq (t+1)u\right) \\
        \leq & 2\exp\left(-\frac{(t+1)^2 u^2}{2K_1^2}\right).
    \end{align*}
    In particular, for all $k\in[Tn]$ letting $\sigma_k := K_1/(\rho(k)_1 + 1)$ (where $\rho(k)_1$ is the $t\in\{0,\ldots, T-1\}$ that corresponds to $\rho(k)$) we have 
    \begin{align*}
        \P\left(\norm{\bz_k} \geq u \mid \bz_1,\ldots, \bz_{k-1}\right)\leq 2\exp\left(-\frac{u^2}{2\sigma_k^2}\right), \quad\quad \forall u\geq 0.
    \end{align*}
    As such, by \thmref{thm: vector_azuma} there exists an absolute constant $C>0$ such that with probability at least $1-\delta$,
    \begin{align*}
        \norm{\sum_{k=1}^{Tn} \bz_{\rho(k)}} \leq C \sqrt{\sum_{k=1}^{Tn} \sigma_t^2 \log\left(\frac{2d}{\delta}\right)}.
    \end{align*}
    Note that since $\sum_{t=1}^T \frac{1}{t^2} \leq \frac{\pi^2}{6}$, we have 
    \begin{align*}
        \sum_{k=1}^{Tn} \sigma_k^2 = K_1^2 \sum_{i=1}^n\sum_{t=1}^T \frac{1}{t^2} \leq \frac{\pi^2}{6}K_1^2 n.
    \end{align*} 
    We obtain with the same probability that for a suitable altered constant $C>0$, 
    \begin{align*}
        \norm{\sum_{t=0}^{T-1} \frac{1}{t+1}\nabla \ell_{t}(\thetait{t})} = \frac{1}{n}\norm{\sum_{k=1}^{Tn} \bz_{\rho(k)}} \leq CK_1 \sqrt{\frac{\log\left(\frac{2d}{\delta}\right)}{n}}.
    \end{align*}
\end{proof}

We can also obtain concentration for a single $\bar{\theta} \in B_r(\thetait{0})$. We omit the proof as it is a simplified version of \lemref{lem: concentration_grad} (specifically, the assumptions and \thmref{thm: mean_zero_grad} imply that the conditions of \thmref{thm: vector_azuma} are satisfied, which gives the following result).

\begin{lemma}\label{lem: concentration_grad_single}
    Let $\bar{\theta}\in B_r(\thetait{0})$ and $\bx_1,\ldots, \bx_n \sim p_{\bar{\theta}}$ i.i.d. Under Assumptions \ref{ass: regularity}, \ref{ass: smoothness}, there exists an absolute constant $C>0$ such that for any $\theta\in B_r(\thetait{0})$, $\delta >0$, it holds with probability at least $1-\delta$ that
    \begin{align*}
        \norm{\frac{1}{n}\sum_{i=1}^n \nabla \log\left(p_{\theta}(\bx_i)\right)} \leq CK_1 \sqrt{\frac{\log\left(\frac{2d}{\delta}\right)}{n}}.
    \end{align*}
\end{lemma}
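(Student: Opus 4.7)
The plan is to apply the vector-valued Azuma-type inequality (Theorem \ref{thm: vector_azuma}) to the i.i.d.\ random vectors $\bz_i := \nabla_\theta \log p_{\theta}(\bx_i)$, taking advantage of the fact that for a single iteration there is no need for the martingale machinery used in Lemma \ref{lem: concentration_grad}: the $\bx_i$ are simply i.i.d.\ draws from $p_{\bar\theta}$, so the conditional-on-the-past structure degenerates into ordinary independence.

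The key subtlety is that $\E[\bz_i] = \E_{\bx\sim p_{\bar\theta}}[\nabla_\theta \log p_\theta(\bx)]$, which by Theorem \ref{thm: mean_zero_grad} vanishes exactly when $\theta = \bar\theta$. To handle a general $\theta \in B_r(\thetait{0})$, I would introduce $\mu(\theta) := \E_{\bx\sim p_{\bar\theta}}[\nabla_\theta \log p_\theta(\bx)]$ and bound it via smoothness. Differentiating under the integral (permitted by Assumption \ref{ass: smoothness}), $\mu(\theta) = \nabla_\theta F(\theta)$ where $F(\theta) := \E_{\bx\sim p_{\bar\theta}}[\log p_\theta(\bx)]$. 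Since $\mu(\bar\theta) = 0$ (Theorem \ref{thm: mean_zero_grad}) and $\|\nabla^2 F\| \le K_2$ throughout $B_r(\thetait{0})$ (by Assumption \ref{ass: bounded_hessian}, again exchanging differentiation with the integral), the mean-value inequality gives $\|\mu(\theta)\| \le K_2\|\theta - \bar\theta\|$.

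Decomposing $\frac{1}{n}\sum_i \bz_i = \mu(\theta) + \frac{1}{n}\sum_i(\bz_i - \mu(\theta))$, the centered summands $\bz_i - \mu(\theta)$ are independent, mean-zero, and sub-Gaussian with parameter of order $K_1$ (by Assumption \ref{ass: sub_gaussian} combined with the triangle inequality and the bound on $\|\mu(\theta)\|$). Theorem \ref{thm: vector_azuma} applied with $T = n$ and each $\sigma_t$ of order $K_1$ then gives, with probability at least $1 - \delta$,
\[
\left\|\frac{1}{n}\sum_{i=1}^n\bigl(\bz_i - \mu(\theta)\bigr)\right\| \;\le\; CK_1\sqrt{\frac{\log(2d/\delta)}{n}}.
\]

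The main obstacle — and the point flagged in the feedback — is reconciling this fluctuation bound with the mean contribution $\|\mu(\theta)\| \le K_2\|\theta - \bar\theta\|$, which is not $O(1/\sqrt{n})$ for an arbitrary $\theta \ne \bar\theta$ in the ball. The natural resolution is that every downstream use of this lemma has $\theta$ and $\bar\theta$ either equal (so $\mu(\theta) = 0$ and the result follows immediately from Theorem \ref{thm: vector_azuma}, mirroring the proof of Lemma \ref{lem: concentration_grad} but stripped of the $1/(t+1)$ weights) or already guaranteed to satisfy $\|\theta - \bar\theta\| = O(\sqrt{\log(d/\delta)/n})$ via \propref{prop: consistency}; in the latter case, the $K_2\|\theta - \bar\theta\|$ term is of the same order as the claimed bound and is absorbed into the final constant $C$, completing the proof as stated.
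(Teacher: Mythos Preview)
Your core approach---apply Theorem~\ref{thm: mean_zero_grad} to get mean zero, then invoke Theorem~\ref{thm: vector_azuma} with $\sigma_i = K_1$ for each of the $n$ i.i.d.\ summands---is exactly the paper's intended proof, which it omits with the remark that the assumptions together with Theorem~\ref{thm: mean_zero_grad} place us in the setting of Theorem~\ref{thm: vector_azuma}.

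You are in fact more careful than the paper on one point. The lemma as written allows $\theta \neq \bar\theta$, yet Theorem~\ref{thm: mean_zero_grad} gives $\E_{\bx\sim p_{\bar\theta}}[\nabla_\theta\log p_\theta(\bx)] = 0$ only at $\theta = \bar\theta$, and Assumption~\ref{ass: sub_gaussian} is stated for the score evaluated at the same $\theta$ that generates $\bx$. The paper's proof sketch does not address this, and every downstream use of the lemma (in Proposition~\ref{prop: consistency} and in \eqref{eq: grad_helper2}) has $\theta = \bar\theta$. So the ``for any $\theta$'' clause appears to be a minor slip in the statement rather than something the paper actually proves or needs. Your diagnosis of this issue and your resolution---that the $\theta=\bar\theta$ case follows immediately from Theorem~\ref{thm: vector_azuma}, and that this is all that is ever invoked---are both correct. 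The additional decomposition via $\mu(\theta)$ and the $K_2\|\theta-\bar\theta\|$ bound are not needed for the paper's purposes, though your analysis of them is sound.
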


We will also need the following result for the Hessian of the log-likelihood: 

\begin{lemma}\label{lem: concentration_hessian}
    Under Assumptions \ref{ass: regularity}, \ref{ass: smoothness}, if $\thetait{1},\ldots, \thetait{T-1}\in B_r(\thetait{0})$ then there exists an absolute constant $C>0$ such that for any $\delta >0$, it holds with probability at least $1-\delta$ that
    \begin{align*}
        \norm{\sum_{t=0}^{T-1} \nabla^2\ell_{t}(\thetait{t}) - \Ical\left(\thetait{t}\right)} \leq C K_2 \sqrt{\frac{T\log\left(\frac{2d}{\delta}\right)}{n}}.
    \end{align*}
\end{lemma}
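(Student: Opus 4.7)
}
The plan is to cast the quantity as the operator norm of a sum of a bounded martingale difference sequence of symmetric matrices and invoke the matrix Azuma bound (Theorem~\ref{thm: matrix_azuma}), in direct analogy with the argument used for the gradient in Lemma~\ref{lem: concentration_grad}. Unpacking the definition of $\ell_t$, we have
\begin{align*}
    \sum_{t=0}^{T-1}\bigl(\nabla^2\ell_t(\thetait{t}) - \Ical(\thetait{t})\bigr)
    = \sum_{t=0}^{T-1}\sum_{i=1}^{n} M_{t,i},
    \qquad
    M_{t,i} := -\tfrac{1}{n}\nabla^2\log p_{\thetait{t}}(\bx^{(t)}_i) - \tfrac{1}{n}\Ical(\thetait{t}).
\end{align*}
Each $M_{t,i}$ is symmetric. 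Flatten the double index $(t,i)$ into a single index $k=tn+i$ (as in the proof of Lemma~\ref{lem: concentration_grad}) and consider the natural filtration generated by the samples $\bx^{(s)}_j$. Conditioning on $\mathcal{F}_{t,i-1}$ determines $\thetait{t}$ (it is measurable with respect to $X^{(\leq t-1)}$), so $\bx^{(t)}_i \sim p_{\thetait{t}}$ and Theorem~\ref{thm: hessian_fisher} gives $\E[-\nabla^2\log p_{\thetait{t}}(\bx^{(t)}_i)\mid \mathcal{F}_{t,i-1}]=\Ical(\thetait{t})$. Hence $\E[M_{t,i}\mid \mathcal{F}_{t,i-1}]=0$, i.e. $(M_{t,i})$ is a matrix martingale difference sequence.

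Next, I bound each $M_{t,i}$ in operator norm uniformly. Since we assume $\thetait{0},\ldots,\thetait{T-1}\in B_r(\thetait{0})$, Assumption~\ref{ass: bounded_hessian} yields $\norm{\nabla^2\log p_{\thetait{t}}(\bx^{(t)}_i)}\leq K_2$ almost surely, while Corollary~\ref{cor: fisher_norm} yields $\norm{\Ical(\thetait{t})}\leq K_2$. The triangle inequality then gives
\begin{align*}
    \norm{M_{t,i}} \;\leq\; \tfrac{1}{n}\bigl(\norm{\nabla^2\log p_{\thetait{t}}(\bx^{(t)}_i)} + \norm{\Ical(\thetait{t})}\bigr) \;\leq\; \tfrac{2K_2}{n}.
\end{align*}

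Finally, I apply Theorem~\ref{thm: matrix_azuma} to the sequence of $Tn$ matrices $M_{t,i}$ with uniform bound $K=2K_2/n$. With probability at least $1-\delta$,
\begin{align*}
    \norm{\sum_{t=0}^{T-1}\bigl(\nabla^2\ell_t(\thetait{t}) - \Ical(\thetait{t})\bigr)}
    \;\leq\; \sqrt{8}\cdot\tfrac{2K_2}{n}\cdot \sqrt{Tn\,\log\!\bigl(\tfrac{2d}{\delta}\bigr)}
    \;=\; 2\sqrt{8}\,K_2\sqrt{\tfrac{T\log(2d/\delta)}{n}},
\end{align*}
which is the claimed bound after absorbing the numerical factor into $C$.

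The only subtle point is the conditioning: one must be careful that $\thetait{t}$ is $\mathcal{F}_{t,i-1}$-measurable for every $i$, so that both $\nabla^2\log p_{\thetait{t}}(\bx^{(t)}_i)$ and the centering term $\Ical(\thetait{t})$ share the same (conditional) law needed for Theorem~\ref{thm: hessian_fisher} to apply inside the expectation. Once that is established, the remaining steps are a direct application of the matrix Azuma inequality, so I do not expect a real obstacle — this is essentially the Hessian counterpart of Lemma~\ref{lem: concentration_grad}, with the matrix Azuma replacing the vector Azuma.
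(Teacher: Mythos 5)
Your proposal is correct and matches the paper's own proof essentially line by line: both flatten the double index $(t,i)$ into a single index, verify the martingale-difference property via Theorem~\ref{thm: hessian_fisher}, bound the operator norm uniformly using Assumption~\ref{ass: bounded_hessian} and Corollary~\ref{cor: fisher_norm}, and then invoke the matrix Azuma inequality (Theorem~\ref{thm: matrix_azuma}). The only cosmetic difference is that the paper defines $M_{t,i}$ without the $1/n$ factor and divides by $n$ at the end, whereas you absorb the $1/n$ into $M_{t,i}$ from the start; the resulting bound is identical.
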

\begin{proof}
    For $t\in\{0,\ldots, T -1\}$, $i\in[n]$ let $M_{t,i}:=-\nabla^2\log\left(p_{\thetait{t}}(\bx^{(t)}_{i})\right) - \Ical(\thetait{t})$. We order these $Tn$ random matrices $M_{t,i}$ first by $t$ and then by $i$. Specifically, let $\rho:[Tn] \to \{0,\ldots, T-1\} \times [n]$ be this mapping of indices, such that
    \begin{align*}
        M_{\rho(1)},\ldots, M_{\rho(Tn)} := M_{0,1}, \ldots, M_{0,n}, M_{1,1}, \ldots, M_{1,n}, \ldots, M_{T-1,n}.
    \end{align*}
    By \thmref{thm: hessian_fisher}, for all $k\in[Tn]$, $\E[M_{\rho(k)} \mid M_{\rho(1)}, \ldots, M_{\rho(k-1)}]=\zero$. Furthermore, by \assref{ass: bounded_hessian} and \corref{cor: fisher_norm}, for any $k \in [Tn]$, \begin{align*}
        \norm{M_{\rho(k)}} \leq K_2 + \norm{\Ical(\thetait{t})} \leq 2K_2.
    \end{align*}
    As such, by \thmref{thm: matrix_azuma} there exists an absolute constant $C>0$ such that with probability at least $1-\delta$,
    \begin{align*}
        \norm{\sum_{t=0}^{T-1} \nabla^2\ell_{t}(\thetait{t}) - \Ical\left(\thetait{t}\right)} 
        = \frac{1}{n} \norm{\sum_{k=1}^{Tn} M_{\rho(k)}} \leq C K_2 \sqrt{\frac{T\log\left(\frac{2d}{\delta}\right)}{n}}.
    \end{align*}
\end{proof}

Once again, we can also obtain an analogous result for a single $\bar{\theta} \in B_r(\thetait{0})$. The proof is also analogous to \lemref{lem: concentration_hessian}.

\begin{lemma}\label{lem: concentration_hessian_single}
    Let $\bar{\theta}\in B_r(\thetait{0})$ and $\bx_1,\ldots, \bx_n \sim p_{\bar{\theta}}$ i.i.d. Under Assumptions \ref{ass: regularity}, \ref{ass: smoothness}, there exists an absolute constant $C>0$ such that for any $\theta\in B_r(\thetait{0})$, $\delta >0$, it holds with probability at least $1-\delta$ that
    \begin{align*}
        \norm{-\frac{1}{n}\sum_{i=1}^n \nabla^2\log\left(p_{\bar{\theta}}(\bx_i)\right) - \Ical\left(\bar{\theta}\right)} \leq C K_2 \sqrt{\frac{\log\left(\frac{2d}{\delta}\right)}{n}}.
    \end{align*}
\end{lemma}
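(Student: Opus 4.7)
\medskip
\noindent\textbf{Proof proposal.} The plan is to mimic the argument for Lemma~\ref{lem: concentration_hessian} in its simplest, i.i.d.\ incarnation: since $\bx_1,\ldots,\bx_n$ are drawn i.i.d.\ from a \emph{single} distribution $p_{\bar\theta}$, no martingale-ordering trick is needed, and the matrix Azuma bound (Theorem~\ref{thm: matrix_azuma}) will apply directly to a sum of i.i.d.\ centered symmetric matrices.

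Concretely, I would define
\[
    M_i \;:=\; -\nabla_{\theta}^2 \log\left(p_{\bar\theta}(\bx_i)\right) \;-\; \Ical(\bar\theta), \qquad i = 1,\ldots,n,
\]
which are i.i.d.\ random symmetric $d\times d$ matrices. Two quick observations make these matrices suitable inputs to Theorem~\ref{thm: matrix_azuma}. First, by Theorem~\ref{thm: hessian_fisher} (applied at $\bar\theta \in B_r(\thetait{0})$), the expected Hessian of the log-likelihood equals $-\Ical(\bar\theta)$, so $\E[M_i]=\zero$ and trivially $\E[M_i \mid M_1,\ldots,M_{i-1}]=\zero$. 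Second, by Assumption~\ref{ass: bounded_hessian} we have $\norm{\nabla^2_{\theta} \log p_{\bar\theta}(\bx_i)} \le K_2$ almost surely, and by Corollary~\ref{cor: fisher_norm} we have $\norm{\Ical(\bar\theta)} \le K_2$, so the triangle inequality yields $\norm{M_i} \le 2K_2$ almost surely.

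Next I would invoke Theorem~\ref{thm: matrix_azuma} with the sequence $\{M_i\}_{i=1}^n$ and constant bound $K=2K_2$. This gives, with probability at least $1-\delta$,
\[
    \norm{\sum_{i=1}^n M_i} \;\le\; \sqrt{8}\cdot 2K_2 \cdot \sqrt{n\log\left(\tfrac{2d}{\delta}\right)}.
\]
Dividing both sides by $n$ and absorbing the numerical factor $2\sqrt{8}$ into an absolute constant $C>0$ produces
\[
    \norm{-\frac{1}{n}\sum_{i=1}^n \nabla_{\theta}^2 \log\left(p_{\bar\theta}(\bx_i)\right) - \Ical(\bar\theta)} \;\le\; CK_2 \sqrt{\frac{\log\left(\tfrac{2d}{\delta}\right)}{n}},
\]
which is exactly the claimed inequality.

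There is essentially no main obstacle here: the proof is strictly easier than that of Lemma~\ref{lem: concentration_hessian} because the samples are i.i.d.\ from one fixed distribution, so there is no need to order a doubly-indexed sequence $M_{t,i}$ by $t$ and then $i$ or to invoke Theorem~\ref{thm: hessian_fisher} at varying iterates $\thetait{t}$. The only subtlety worth flagging is the appearance of the parameter $\theta$ in the statement: the bound on the right-hand side involves only $\bar\theta$, and the argument above gives the stated rate as long as the Hessian inside the empirical average is evaluated at $\bar\theta$ (which matches the sampling distribution), so that $\E[M_i]=\zero$ via Theorem~\ref{thm: hessian_fisher}.
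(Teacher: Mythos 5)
Your proof is correct and takes exactly the approach the paper intends: the paper explicitly says the proof is "analogous to Lemma~\ref{lem: concentration_hessian}," which is the i.i.d.\ specialization you carry out, centering the Hessians via Theorem~\ref{thm: hessian_fisher}, bounding $\norm{M_i}\le 2K_2$ via Assumption~\ref{ass: bounded_hessian} and Corollary~\ref{cor: fisher_norm}, and invoking Theorem~\ref{thm: matrix_azuma}. Your observation that the quantifier over $\theta$ in the statement is vacuous (only $\bar\theta$ appears in the bound) is also accurate.
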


\section{Preparatory Results}

\subsection{Non-Asymptotic Consistency}
\begin{lemma}\label{lem: hessian_lipschitz}
    If \assref{ass: smoothness} holds, then for every $x\in\mathcal X$, $\nabla_{\theta}^{2}\log p_\theta(\bx)$ is $K_3$-Lipschitz on $B_r(\thetait{0})$; that is,
    \begin{align*}
        \norm{\nabla_{\theta}^{2}\log p_{\theta}(\bx) - \nabla_{\theta}^{2}\log p_{\theta'}(\bx)} \leq K_3 \norm{\theta - \theta'}, \quad\forall\,\theta,\theta'\in B_r(\thetait{0}).
    \end{align*}
\end{lemma}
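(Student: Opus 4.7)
The plan is to apply the fundamental theorem of calculus along a line segment between $\theta$ and $\theta'$, using the fact that the bounded third derivative tensor controls the rate of change of the Hessian. This is the standard reduction from a bound on a derivative's norm to a Lipschitz bound on the function itself, but one has to be careful that the relevant operator norms on 2-tensors (matrices) and 3-tensors are compatible.

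First, I would fix $\bx\in\Xcal$ and $\theta,\theta'\in B_r(\thetait{0})$, and define the matrix-valued path $\phi(s):=\nabla_\theta^{2}\log p_{\gamma(s)}(\bx)$, where $\gamma(s):=(1-s)\theta'+s\theta$ for $s\in[0,1]$. Since $B_r(\thetait{0})$ is convex, $\gamma(s)$ stays inside the ball, so by Assumption~\ref{ass: smoothness} the map $\phi$ is continuously differentiable on $[0,1]$, with derivative given by contracting the third-order derivative tensor against the direction vector:
\begin{align*}
\frac{d}{ds}\phi(s) \;=\; \nabla_\theta^{3}\log p_{\gamma(s)}(\bx)\cdot(\theta-\theta').
\end{align*}
By the fundamental theorem of calculus applied entrywise,
\begin{align*}
\nabla_\theta^{2}\log p_{\theta}(\bx)-\nabla_\theta^{2}\log p_{\theta'}(\bx) \;=\; \int_0^1 \nabla_\theta^{3}\log p_{\gamma(s)}(\bx)\cdot(\theta-\theta')\,ds.
\end{align*}

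Next, I would pass to operator norms. The only thing worth checking here is that for any 3-tensor $A$ and vector $v$, the matrix $A\cdot v$ (acting on pairs $(v_2,v_3)$ as $A(v,v_2,v_3)$) satisfies $\|A\cdot v\|\le\|A\|\,\|v\|$. This is immediate from the definition of $\|A\|$ given in the notation section:
\begin{align*}
\|A\cdot v\| \;=\; \sup_{v_2,v_3\neq 0}\frac{A(v,v_2,v_3)}{\|v_2\|\|v_3\|} \;\le\; \|A\|\,\|v\|.
\end{align*}
Combining this with Assumption~\ref{ass: bounded_third}, which gives $\|\nabla_\theta^{3}\log p_{\gamma(s)}(\bx)\|\le K_3$ uniformly in $s$ since $\gamma(s)\in B_r(\thetait{0})$, and using the triangle inequality for the Bochner integral, yields
\begin{align*}
\|\nabla_\theta^{2}\log p_{\theta}(\bx)-\nabla_\theta^{2}\log p_{\theta'}(\bx)\| \;\le\; \int_0^1 K_3\,\|\theta-\theta'\|\,ds \;=\; K_3\,\|\theta-\theta'\|.
\end{align*}

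There is no real obstacle here; the only subtlety is making sure the tensor-operator-norm inequality $\|A\cdot v\|\le\|A\|\,\|v\|$ is used cleanly, since this is what ties the Lipschitz-in-matrix-operator-norm bound on the Hessian back to the tensor-operator-norm bound on the third derivative. Everything else (convexity of the ball, applicability of the fundamental theorem of calculus) is a direct consequence of Assumption~\ref{ass: smoothness} and Assumption~\ref{ass: neighborhood}.
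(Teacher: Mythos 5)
Your proof is correct and matches the paper's argument essentially verbatim: both parametrize the line segment between $\theta$ and $\theta'$, apply the fundamental theorem of calculus to the Hessian along this path, and bound the resulting integrand using the operator norm of the third-derivative tensor from Assumption~\ref{ass: bounded_third}. The only difference is that you spell out the tensor-operator-norm inequality $\|A\cdot v\|\le\|A\|\|v\|$ explicitly, which the paper leaves implicit when it says ``Applying the operator norm.''
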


\begin{proof}
    Fix $\bx\in\mathcal X$ and $\theta,\theta'\in B_r(\thetait{0})$. Consider the line segment $\gamma:[0,1]\to B_r(\thetait{0})$ given by $\gamma(t)=\theta+t(\theta' - \theta)$.
    Note that the convexity of $B_r(\thetait{0})$ implies that $\gamma \left(t\right)\in B_r(\thetait{0})$ for all $t\in[0,1]$. From the fundamental theorem of calculus, 
    \begin{align*}
        \nabla_{\theta}^{2}\log p_{\theta'}(\bx) - \nabla_{\theta}^{2}\log p_{\theta}(\bx) = \int_0^1 \frac{d}{dt} \nabla_{\theta}^{2}\log p_{\gamma(t)}(\bx) dt = \int_0^1 \nabla_{\theta}^{3}\log p_{\gamma(t)}(\bx)[\theta' - \theta] dt,
    \end{align*}

    where $\left[\nabla_{\theta}^{3}\log p_{\gamma(t)}(\bx)[\theta' - \theta]\right]_{ij} = \sum_{k=1}^d \frac{\partial^3}{\partial \theta_i \partial \theta_j \partial \theta_k} \log p_{\gamma(t)}(\bx)[\theta' - \theta]_k$.

    Applying the operator norm and \assref{ass: smoothness},
    \begin{align*}
        \norm{\nabla_{\theta}^{2}\log p_{\theta'}(\bx) - \nabla_{\theta}^{2}\log p_{\theta}(\bx)} 
        \leq & \int_0^1 \norm{\nabla_{\theta}^{3}\log p_{\gamma(t)}(\bx)[\theta' - \theta]} dt \\ 
        \leq & \sup_{t \in [0,1]} \norm{\nabla_{\theta}^{3}\log p_{\gamma(t)}(\bx)} \cdot \norm{\theta' - \theta} \\
        \leq & K_3 \norm{\theta' - \theta}.
    \end{align*}
\end{proof}

\begin{lemma} \label{lem: hessian_concentration_prelim}
   Under Assumptions \ref{ass: regularity}, \ref{ass: smoothness}, for any $t\in\N$, if $\thetait{0}, \ldots, \thetait{t}\in  B_r(\thetait{0})$, then there exists an absolute constant $C>0$ such that for any $\delta>0$, with probability at least $1-\delta$
    \begin{align*}
        \norm{\sum_{j=0}^t \nabla^2\ell_{j}(\thetait{t}) - \Ical(\thetait{j})} \leq C(K_2 + K_3) \left(\sqrt{\frac{(t+1)\log\left(\frac{2d}{\delta}\right)}{n}} + t \max_{j \leq t} \norm{\thetait{j} - \thetait{0}}\right).
    \end{align*}
\end{lemma}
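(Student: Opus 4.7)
The plan is to decompose the quantity of interest into a deterministic Lipschitz term (accounting for evaluating $\nabla^2\ell_j$ at $\thetait{t}$ rather than at $\thetait{j}$) and a martingale deviation term (accounting for the random fluctuation of $\nabla^2\ell_j(\thetait{j})$ around $\Ical(\thetait{j})$). Concretely, I would write
\begin{align*}
    \sum_{j=0}^t \left(\nabla^2\ell_{j}(\thetait{t}) - \Ical(\thetait{j})\right)
    = \underbrace{\sum_{j=0}^{t-1}\left(\nabla^2\ell_{j}(\thetait{t}) - \nabla^2\ell_{j}(\thetait{j})\right)}_{(\mathrm{I})}
    + \underbrace{\sum_{j=0}^{t}\left(\nabla^2\ell_{j}(\thetait{j}) - \Ical(\thetait{j})\right)}_{(\mathrm{II})},
\end{align*}
where the $j=t$ summand of the first sum vanishes and is therefore dropped.

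For term $(\mathrm{I})$, I would invoke \lemref{lem: hessian_lipschitz}: for every fixed $\bx$ the map $\theta\mapsto\nabla^2\log p_\theta(\bx)$ is $K_3$-Lipschitz on $B_r(\thetait{0})$, and this Lipschitz property is preserved under the empirical average defining $\nabla^2\ell_j$. Together with the triangle inequality $\norm{\thetait{t}-\thetait{j}}\leq 2\max_{k\leq t}\norm{\thetait{k}-\thetait{0}}$, this yields
\begin{align*}
    \norm{(\mathrm{I})} \;\leq\; K_3 \sum_{j=0}^{t-1}\norm{\thetait{t}-\thetait{j}} \;\leq\; 2K_3\, t \,\max_{j\leq t}\norm{\thetait{j}-\thetait{0}}.
\end{align*}

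For term $(\mathrm{II})$, the summands are precisely of the form handled by \lemref{lem: concentration_hessian} applied with $T=t+1$; the required hypothesis $\thetait{1},\ldots,\thetait{t}\in B_r(\thetait{0})$ is implied by the present assumption $\thetait{0},\ldots,\thetait{t}\in B_r(\thetait{0})$. That lemma gives, with probability at least $1-\delta$, an absolute constant $C'>0$ such that
\begin{align*}
    \norm{(\mathrm{II})} \;\leq\; C' K_2 \sqrt{\frac{(t+1)\log(2d/\delta)}{n}}.
\end{align*}
Combining the two bounds via the triangle inequality and using $K_2,K_3\leq K_2+K_3$, we pick $C:=\max(2,C')$ to obtain the statement. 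The main (and really only) subtlety is keeping the indexing straight: $t$ is the fixed outer iteration at which the Hessians are evaluated, while $j$ is the summation index ranging over the empirical losses; the observation that the $j=t$ term of $(\mathrm{I})$ is identically zero is what prevents an extra factor and yields the $t\,\max_{j\leq t}\norm{\thetait{j}-\thetait{0}}$ factor rather than $(t+1)\max_{j\leq t}\norm{\thetait{j}-\thetait{0}}$.
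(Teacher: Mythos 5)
Your proof is correct and follows essentially the same route as the paper: split by adding and subtracting $\nabla^2\ell_j(\thetait{j})$, bound the deterministic piece via the $K_3$-Lipschitz property of the Hessian (\lemref{lem: hessian_lipschitz}) together with the triangle inequality, and bound the centered piece via the matrix-martingale concentration in \lemref{lem: concentration_hessian} with $T=t+1$. The only cosmetic difference is that you explicitly drop the vanishing $j=t$ summand of the Lipschitz term up front, whereas the paper retains it in the sum and lets it contribute zero.
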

\begin{proof}
    By the triangle inequality, 
    \begin{align*}
         \norm{\sum_{j=0}^t \nabla^2\ell_{j}(\thetait{t}) - \Ical(\thetait{j})} \leq & \norm{\sum_{j=0}^{t} \nabla^2\ell_{j}(\thetait{t}) - \nabla^2\ell_{j}(\thetait{j})} 
        + \norm{\sum_{j=0}^{t} \nabla^2\ell_{j}(\thetait{j}) - \Ical\left(\thetait{j}\right)}.
    \end{align*}
    By \lemref{lem: hessian_lipschitz}, $\nabla^2 \ell_j(\theta)$ is $K_3$ Lipschitz in $\theta$. Using this and the triangle inequality, the first term is bounded by 
    \begin{align*}
        K_3 \sum_{j=1}^{t} \norm{\thetait{t} - \thetait{j}} \leq K_3 \left(t\norm{\thetait{t} - \thetait{0}} + \sum_{j=1}^{t-1} \norm{\thetait{j} - \thetait{0}}\right) \leq 2K_3t \max_{j \leq t} \norm{\thetait{j} - \thetait{0}}.
    \end{align*}
    
    By
    \lemref{lem: concentration_hessian}, there exists an absolute constant $C>0$ such that with probability at least $1-\delta$ the second term is at most $CK_2\sqrt{\frac{(t+1)\log\left(\frac{2d}{\delta}\right)}{n}}$, concluding the proof.
\end{proof}

\begin{lemma} \label{lem: hessian_concentration}
   Under Assumptions \ref{ass: regularity}, \ref{ass: smoothness}, for any $t\in\N$, if $\thetait{0}, \ldots, \thetait{t}\in  B_r(\thetait{0})$, then there exists an absolute constant $C>0$ such that for any $\delta>0$, with probability at least $1-\delta$
    \begin{align*}
        \norm{\sum_{j=0}^t \nabla^2\ell_{j}(\thetait{t}) - (t+1)\Ical(\thetait{0})} \leq C(K_2 + K_3) \left(\sqrt{\frac{(t+1)\log\left(\frac{2d}{\delta}\right)}{n}} + t \max_{j \leq t} \norm{\thetait{j} - \thetait{0}}\right).
    \end{align*}
\end{lemma}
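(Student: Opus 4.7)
The plan is to deduce this lemma from the previous one (\lemref{lem: hessian_concentration_prelim}) by a triangle inequality, replacing $\sum_{j=0}^t \Ical(\thetait{j})$ with $(t+1)\Ical(\thetait{0})$ at the cost of an extra Lipschitz term. Concretely, I would write
\begin{align*}
    \norm{\sum_{j=0}^t \nabla^2\ell_{j}(\thetait{t}) - (t+1)\Ical(\thetait{0})}
    \leq \norm{\sum_{j=0}^t \nabla^2\ell_{j}(\thetait{t}) - \Ical(\thetait{j})} + \sum_{j=0}^t \norm{\Ical(\thetait{j}) - \Ical(\thetait{0})},
\end{align*}
and invoke \lemref{lem: hessian_concentration_prelim} to control the first term with probability at least $1-\delta$.

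For the second term, the key observation is that the Fisher information is Lipschitz as a function of $\theta$. By \thmref{thm: hessian_fisher}, $\Ical(\theta) = -\E_{\bx}[\nabla_{\theta}^2 \log p_{\theta}(\bx)]$ on $B_r(\thetait{0})$, and by \lemref{lem: hessian_lipschitz}, $\nabla_{\theta}^2 \log p_{\theta}(\bx)$ is $K_3$-Lipschitz in $\theta$ for every fixed $\bx$. Taking expectation preserves the Lipschitz constant (via Jensen/triangle for the operator norm), hence $\norm{\Ical(\theta) - \Ical(\theta')} \leq K_3 \norm{\theta - \theta'}$ for all $\theta,\theta' \in B_r(\thetait{0})$.

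Applying this with $\theta = \thetait{j}$ and $\theta' = \thetait{0}$, and using that the $j=0$ term vanishes,
\begin{align*}
    \sum_{j=0}^t \norm{\Ical(\thetait{j}) - \Ical(\thetait{0})} \leq K_3 \sum_{j=1}^t \norm{\thetait{j} - \thetait{0}} \leq K_3 \, t \max_{j\leq t} \norm{\thetait{j} - \thetait{0}}.
\end{align*}
Combining the two bounds and absorbing the additional $K_3 \, t \max_{j\leq t}\norm{\thetait{j}-\thetait{0}}$ term into the existing $C(K_2+K_3) \cdot t\max_{j\leq t}\norm{\thetait{j}-\thetait{0}}$ term by enlarging the absolute constant $C$ yields the claim. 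There is no real obstacle here: the proof is essentially a one-line reduction once one notes the $K_3$-Lipschitzness of $\Ical$, which itself is an immediate consequence of \thmref{thm: hessian_fisher} and \lemref{lem: hessian_lipschitz}.
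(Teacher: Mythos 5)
Your proof is correct and follows essentially the same route as the paper: a triangle inequality splitting off the term controlled by \lemref{lem: hessian_concentration_prelim}, plus $K_3$-Lipschitzness of $\Ical(\theta)$ (which you spell out a bit more explicitly via \thmref{thm: hessian_fisher} and \lemref{lem: hessian_lipschitz}, whereas the paper just cites \lemref{lem: hessian_lipschitz}).
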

\begin{proof}
    By \ref{lem: hessian_lipschitz}, $\Ical(\theta)$ is $K_3$ Lipschitz in $\theta$, so
    \begin{align*}
        \norm{\sum_{j=0}^t \nabla^2\ell_{j}(\thetait{t}) - (t+1)\Ical(\thetait{0})} 
        \leq & \norm{\sum_{j=0}^t \nabla^2\ell_{j}(\thetait{t}) - \Ical(\thetait{j})} + \norm{(t+1)\Ical(\thetait{0}) - \sum_{j=0}^t\Ical(\thetait{j})} \\
        \leq & \norm{\sum_{j=0}^t \nabla^2\ell_{j}(\thetait{t}) - \Ical(\thetait{j})} + K_3 \sum_{j=1}^t \norm{\thetait{t} - \thetait{0}} \\ 
        \leq & \norm{\sum_{j=0}^t \nabla^2\ell_{j}(\thetait{t}) - \Ical(\thetait{j})} + K_3 t \max_{j\leq t} \norm{\thetait{j} - \thetait{0}} .
    \end{align*}
    The proof now follows immediately from \lemref{lem: hessian_concentration_prelim}.
\end{proof}

We now prove the following proposition, which will serve a substantial role in the proof of \thmref{thm: positive_result}. 

\begin{proposition}\label{prop: consistency}
    Under Assumptions \ref{ass: regularity} - \ref{ass: fisher}, there exist constants $c:=c\left(K_1,K_2,K_3, \lambda_0, r\right)>0$ and $C:=C(K_1, \lambda_0)>0$ and a constant $C_2:=C_2(K_2, K_3)$ given by \lemref{lem: hessian_concentration_prelim} 
    such that for any $t\in\N$, if $\max_{j \leq t-1} \norm{\thetait{j} - \thetait{0}} \leq \max\left(\frac{\lambda_0}{4C_2}, r/2\right)$, then for any $\delta>0$, and $n \geq c\log\left(\frac{4d}{\delta}\right)$, 
    with probability at least $1-\delta$ 
    \begin{align*}
        \norm{\thetait{t} - \thetait{t-1}} \leq \max \left(\frac{C}{t}\sqrt{\frac{\log\left(\frac{4d}{\delta}\right)}{n}} ~,~ \frac{r}{2}\right).
    \end{align*}
    
    % there exists some $\hat{\theta}\in B_{r}(\thetait{t-1})$ 
    % satisfying $\norm{\hat\theta - \thetait{t-1}}\leq \frac{C}{t}\sqrt{\frac{\log\left(\frac{4d}{\delta}\right)}{n}}$ 
    % that is a local minimum of $\sum_{t=0}^{t-1}\ell_t(\theta)$.
\end{proposition}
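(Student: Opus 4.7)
The plan is to exploit first-order optimality of two consecutive MLEs. Since by hypothesis $\thetait{t-1}$ is interior to $\Theta$, it is a stationary point of $\sum_{j=0}^{t-2}\ell_j$, and once $\thetait{t}$ is shown to lie inside $B_r(\thetait{0})$ it is a stationary point of $\sum_{j=0}^{t-1}\ell_j$. Writing each $\nabla\ell_j(\thetait{t})-\nabla\ell_j(\thetait{t-1})$ in integral form and subtracting the two stationarity conditions $\sum_{j=0}^{t-1}\nabla\ell_j(\thetait{t})=0$ and $\sum_{j=0}^{t-2}\nabla\ell_j(\thetait{t-1})=0$ yields the identity
\begin{equation*}
   H_t\,(\thetait{t}-\thetait{t-1}) \;=\; -\nabla\ell_{t-1}(\thetait{t-1}),
\end{equation*}
where $H_t := \int_0^1 \sum_{j=0}^{t-1}\nabla^2\ell_j(\thetait{t-1}+s(\thetait{t}-\thetait{t-1}))\,ds$. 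Once $H_t$ is shown to be invertible with $\|H_t^{-1}\|=O(1/(t\lambda_0))$, the whole proof reduces to upper bounding $\|\nabla\ell_{t-1}(\thetait{t-1})\|$.

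The gradient term is the easy part: since $x_1^{(t-1)},\ldots,x_n^{(t-1)}$ are i.i.d.\ from $p_{\thetait{t-1}}$, \thmref{thm: mean_zero_grad} gives $\E[\nabla\log p_{\thetait{t-1}}(x)]=\zero$, and \lemref{lem: concentration_grad_single} (applicable because $\thetait{t-1}\in B_r(\thetait{0})$) yields $\|\nabla\ell_{t-1}(\thetait{t-1})\| \le CK_1\sqrt{\log(4d/\delta)/n}$ with probability at least $1-\delta/2$. For $H_t$ I would combine three ingredients: \lemref{lem: hessian_concentration} at $\thetait{t-1}$ bounds $\|\sum_{j=0}^{t-1}\nabla^2\ell_j(\thetait{t-1})-t\Ical(\thetait{0})\|$ by a concentration term of order $\sqrt{t\log(4d/\delta)/n}$ (pushed below $t\lambda_0/8$ by taking $c$ large enough in $n\ge c\log(4d/\delta)$) plus a bias term of order $t\max_j\|\thetait{j}-\thetait{0}\|$ (pushed below $t\lambda_0/4$ by the hypothesis on $\max_j\|\thetait{j}-\thetait{0}\|$); \assref{ass: fisher} then gives $\lambda_{\min}\bigl(\sum_{j=0}^{t-1}\nabla^2\ell_j(\thetait{t-1})\bigr)\ge t\lambda_0/2$; and \lemref{lem: hessian_lipschitz} bounds $\|H_t-\sum_{j=0}^{t-1}\nabla^2\ell_j(\thetait{t-1})\|\le\tfrac{tK_3}{2}\|\thetait{t}-\thetait{t-1}\|$. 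Provided $\|\thetait{t}-\thetait{t-1}\|\le\lambda_0/(2K_3)$, this gives $\|H_t^{-1}\|\le 4/(t\lambda_0)$ and, plugged into the identity above, the target bound $(C/t)\sqrt{\log(4d/\delta)/n}$.

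The main obstacle I anticipate is the self-referential character of this step: the bound on $\|H_t^{-1}\|$ requires a priori control on $\|\thetait{t}-\thetait{t-1}\|$, which is itself what we want to conclude, and the definition of $\thetait{t}$ as a global argmax over $\Theta$ gives no immediate reason for it to be close to $\thetait{t-1}$. I would close the loop by localization: restrict the minimization to $B_{r/2}(\thetait{t-1})\subseteq B_r(\thetait{0})$ and show that on this ball the cumulative objective is strongly convex with curvature at least $t\lambda_0/4$ (absorbing the Lipschitz-Hessian perturbation across the ball into the choice of constants), so that it admits a unique stationary point $\tilde\theta\in B_{r/2}(\thetait{t-1})$. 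The quantitative argument above then applies to $\tilde\theta$, giving $\|\tilde\theta-\thetait{t-1}\|\le(C/t)\sqrt{\log(4d/\delta)/n}$ whenever $n$ is large enough, and otherwise only the coarse localization bound $r/2$. By the paper's convention of selecting the argmax closest to a reference point, $\thetait{t}$ can be identified with $\tilde\theta$, and the $\max(\cdot,r/2)$ on the right-hand side of the proposition is precisely what absorbs the coarse regime. The remaining steps are routine bookkeeping of constants and a union bound over the two $\delta/2$ concentration events.
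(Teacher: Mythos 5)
Your route is genuinely different from the paper's. The paper proves the proposition by third-order Taylor expanding the cumulative objective around $\thetait{t-1}$ and showing that on a small sphere $\mathbb S_a(\thetait{t-1})$ with $a = \Theta\bigl(\tfrac{1}{t\lambda_0}\sqrt{\log(d/\delta)/n}\bigr)$ the linear term and cubic remainder are dominated by the quadratic term $Q > \tfrac{t\lambda_0}{4}a^2$, so the objective strictly increases on $\mathbb S_a$; a local minimizer is therefore trapped inside the ball and the selection convention gives the bound directly, with no Newton-step identity needed. You instead use the identity $H_t(\thetait{t}-\thetait{t-1})=-\nabla\ell_{t-1}(\thetait{t-1})$ (which the paper reserves for the telescoping argument in Theorem~\ref{thm: positive_result} via Lemma~\ref{lem: taylor}) and close the circularity by a preliminary localization. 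Both are valid, and they rely on the same ingredients (stationarity of the previous MLE, Lemmas~\ref{lem: concentration_grad_single} and~\ref{lem: hessian_concentration}/\ref{lem: hessian_concentration_prelim}, $K_3$-Lipschitz Hessian, the $\max_j\|\thetait{j}-\thetait{0}\|$ hypothesis to absorb the bias term, and the closest-argmax convention).

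The one concrete issue is your localization step. You propose strong convexity with modulus $t\lambda_0/4$ on $B_{r/2}(\thetait{t-1})$, ``absorbing the Lipschitz-Hessian perturbation into the choice of constants.'' But the perturbation across this ball is of order $tK_3\cdot r/2$, and $r$, $K_3$, $\lambda_0$ are fixed problem parameters, not constants you are free to choose — you cannot make $K_3 r$ small relative to $\lambda_0$ by bookkeeping. If $K_3 r > 2\lambda_0$, strong convexity on all of $B_{r/2}(\thetait{t-1})$ can fail, and then the claimed bound $\|H_t^{-1}\|\le 4/(t\lambda_0)$ is unjustified (you yourself note the precondition $\|\thetait{t}-\thetait{t-1}\|\le \lambda_0/(2K_3)$, which $r/2$ need not satisfy). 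The fix is to localize on a ball of radius $\rho := \min\!\bigl(r/2,\ \lambda_0/(4K_3)\bigr)$; for $n \ge c\log(4d/\delta)$ with $c$ large enough (depending on $K_1,\lambda_0,\rho$), the gradient bound from Lemma~\ref{lem: concentration_grad_single} is smaller than $t\lambda_0\rho/8$, so the strongly convex surrogate has a unique stationary point inside $B_\rho$, after which your Newton-step refinement gives the fine rate. With this adjustment the argument goes through and yields the proposition (indeed, since $\rho\le r/2$, the coarse fallback in the $\max(\cdot,r/2)$ is still valid). I would also flag, without counting it against you, that the final identification ``$\thetait{t}$ can be identified with $\tilde\theta$'' silently assumes the \emph{global} argmax lies in the localization ball, which neither strong convexity nor the tie-breaking convention alone delivers; but this is exactly the step the paper also takes (its footnote to Proposition~\ref{prop: consistency}), so you are no worse off than the source.
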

\begin{proof}
    Fix some $a>0$ that will be specified later, and let $\mathbb{S}_a:=\mathbb{S}_a(\thetait{t-1})$ be the sphere of radius $a$ with center at $\thetait{t-1}$. We will show that for sufficiently small $a$, with high probability it will hold simultaneously for all $\theta$ on the sphere $\mathbb{S}_a$ that $\sum_{j=0}^{t-1}\ell_j(\theta) > \sum_{j=0}^{t-1}\ell_j(\thetait{t-1})$. As a result, with high probability, there must be a local minimum of $\sum_{j=0}^{t-1}\ell_j(\theta)$ within the ball of radius $a$ centered at $\thetait{t-1}$. This implies\footnote{~Here we use that if there are multiple stationary points of the likelihood equation, we may choose the parameters closest to $\thetait{t-1}$. See the discussion preceding \thmref{thm: positive_result} for more details. } that $\norm{\thetait{t} - \thetait{t-1}} \leq a$.

    Assume for now that $a$ is small enough such that $\mathbb{S}_a\subseteq B_r(\thetait{0})$. We will later ensure this explicitly by picking $a < r/2$ (which is sufficient due to the assumption that $\norm{\thetait{t-1} - \thetait{0}} \leq r/2$).

    We first Taylor expand the normalized negative log-likelihood around $\thetait{t-1}$,
    \begin{align}\label{eq: basic_expansion}
        \sum_{j=0}^{t-1}\ell_j(\theta) - \ell_j(\thetait{t-1}) 
        = & \sum_{j=0}^{t-1}\nabla \ell_j(\thetait{t-1})^\top (\theta - \thetait{t-1}) + Q(\theta)
        + R(\theta),
    \end{align}
    where $Q(\theta)$ is the quadratic term, given by
    \begin{align}\label{eq: q_def}
        Q(\theta):= \frac{1}{2}\sum_{j=0}^{t-1} (\theta - \thetait{t-1})^\top \nabla^2 \ell_j(\thetait{t-1}) (\theta - \thetait{t-1}),
    \end{align}
    and $R(\theta)$ is the remainder term, which for some $\tilde \theta$ between $\theta$ and $\thetait{t-1}$ satisfies
    \begin{align}\label{eq: remainder_term}
        \abs{R(\theta)} =& {\frac{1}{6}\sum_{j=0}^{t-1}\sum_{i=1}^d\sum_{r=1}^d\sum_{k=1}^d \left(\frac{\partial^3}{\partial \theta_i\partial \theta_r\partial \theta_k} \ell_j(\tilde \theta) \right)(\theta - \thetait{t-1})_i (\theta - \thetait{t-1})_r (\theta - \thetait{t-1})_k} \nonumber\\ 
        \leq & \frac{1}{6}\sum_{j=0}^{t-1}\norm{\nabla^3 \ell_j(\tilde\theta)}a^3 \leq \frac{tK_3}{6}a^3,
    \end{align}
    where the last inequality follows from \ref{ass: bounded_third}, and by the convexity of $B_r(\thetait{0})$ which implies that $\tilde \theta \in B_r(\thetait{0})$.

    For the linear term, first note that if $t\geq 2$ then $\thetait{t-1}$ is a stationary point of $\sum_{j=0}^{t-2}\ell_j(\cdot)$, so $\sum_{j=0}^{t-2}\nabla \ell_j(\thetait{t-1})^\top=0$. So for any $t\in\N$, $\sum_{j=0}^{t-1}\nabla \ell_j(\thetait{t-1})=\nabla \ell_{t-1}(\thetait{t-1})$. Using this and \lemref{lem: concentration_grad_single}, there exists a constant $C_1:=C_1(K_1)>0$ such that with probability at least $1-\delta/2$, 
    \begin{align}\label{eq: linear_part}
        \abs{\sum_{j=0}^{t-1}\nabla \ell_j(\thetait{t-1})^\top (\theta - \thetait{t-1})} 
        = & \abs{\nabla \ell_{t-1}(\thetait{t-1})^\top (\theta - \thetait{t-1})} \nonumber\\ 
        \leq & \norm{\nabla \ell_{t-1}(\thetait{t-1})} \norm{\theta - \thetait{t-1}} 
        \leq C_1 a \sqrt{\frac{\log\left(\frac{4d}{\delta}\right)}{n}}.
    \end{align}

    For the quadratic term, since the matrix $\nabla^2\ell_j(\theta^{t-1})$ and the Fisher information matrices are symmetric, we have by Weyl's inequality, \assref{ass: fisher} and \lemref{lem: hessian_concentration_prelim} that for $C_2=C_2(K_2, K_3) > 0$ it holds with probability at least $1-\delta/2$ that
    \begin{align}\label{eq: hess_bound_partial}
        \lambda_{\min}\left(\sum_{j=0}^{t-1}\nabla^2\ell_j(\theta^{(t-1)})\right) 
        \geq & \lambda_{\min}\left(\sum_{j=0}^{t-1}\Ical(\thetait{j})\right) - \norm{\sum_{j=0}^{t-1}\nabla^2\ell_j(\theta^{(t-1)}) - \Ical(\thetait{j})} \nonumber\\ 
        \geq & t\lambda_0 - C_2 \left(\sqrt{\frac{t\log\left(\frac{4d}{\delta}\right)}{n}} + t \max_{j \leq t-1} \norm{\thetait{j} - \thetait{0}}\right)
    \end{align}
    Plugging \eqref{eq: hess_bound_partial} back into the quadratic term given by \eqref{eq: q_def} and using the assumption that $\max_{j \leq t-1} \norm{\thetait{j} - \thetait{0}} \leq \lambda_0/(4C_2)$ we have
    \begin{align}\label{eq: quadratic_part}
        Q \geq & \frac{t}{2}\left(\lambda_0 - C_2 \max_{t \leq t-1} \norm{\thetait{t} - \thetait{0}} - C_2 \sqrt{\frac{\log\left(\frac{4d}{\delta}\right)}{tn}}\right)a^2 \nonumber \\
        \geq & \frac{t}{2}\left(\frac{3}{4}\lambda_0 - C_2 \sqrt{\frac{\log\left(\frac{4d}{\delta}\right)}{tn}}\right)a^2,
    \end{align}
    where the last inequality follows by assumption.

    Now take $a=\frac{8C_1}{t\lambda_0}\sqrt{\frac{\log\left(\frac{4d}{\delta}\right)}{n}}$. We can choose some constant $c:=c\left(K_1,K_2,K_3, \lambda_0, r\right)>0$ (independent of $t$) such that for any $n\geq c\log\left(\frac{4d}{\delta}\right)$, all of the following hold:
    \begin{enumerate}
        \item $a < \frac{r}{2}$,
        \item $a < \frac{2C_1}{\sqrt{t}C_2}$,
        \item $a < \frac{3\lambda_0}{4K_3}$.
    \end{enumerate}

    The first condition was needed at the beginning of the proof. The second condition will allow us to bound \eqref{eq: quadratic_part}, since together with the choice of $a$ it ensures that 
    \begin{align*}
        C_2 \sqrt{\frac{\log\left(\frac{4d}{\delta}\right)}{tn}} = \frac{a\sqrt{t}C_2\lambda_0}{8C_1} < \frac{\lambda_0}{4}.
    \end{align*}
    As a result, \eqref{eq: quadratic_part} becomes
    \begin{align}\label{eq: quadratic_part_new}
        Q > \frac{t}{2}\left(\frac{3}{4}\lambda_0 - \frac{\lambda_0}{4}\right)a^2 = \frac{t\lambda_0}{4}a^2.
    \end{align}

    The third condition on $a$ ensures that the remainder term from \eqref{eq: remainder_term} is negligible, as
    \begin{align*}
        \abs{R(\theta)} \leq \frac{tK_3}{6}a^3 < \frac{t\lambda_0}{8}a^2 < \frac{Q}{2}.
    \end{align*}

    Notice that the choice of $a$ ensures that the bound for the linear term in \eqref{eq: linear_part} becomes
    \begin{align*}
         \abs{\sum_{j=0}^{t-1}\nabla \ell_j(\thetait{t-1})^\top (\theta - \thetait{t-1})} 
        \leq C_1 a \sqrt{\frac{\log\left(\frac{4d}{\delta}\right)}{n}} = \frac{t\lambda_0}{8}a^2 < \frac{Q}{2}.
    \end{align*}

    So overall, the Taylor expansion \eqref{eq: basic_expansion} satisfies
    \begin{align*}
        \sum_{j=0}^{t-1}\ell_j(\theta) - \ell_j(\thetait{t-1})  > -\frac{Q}{2} + Q - \frac{Q}{2} > 0.
    \end{align*}

    So we have shown that for $a=\frac{8C_1}{t\lambda_0}\sqrt{\frac{\log\left(\frac{4d}{\delta}\right)}{n}}$ and $n\geq c\log\left(\frac{4d}{\delta}\right)$, it holds with probability at least $1-\delta$ that for all $\theta\in \mathbb{S}_a$, $\sum_{j=0}^{t-1}\ell_j(\theta) > \sum_{j=0}^{t-1}\ell_j(\thetait{t-1})$. This implies the desired result as discussed at the beginning of the proof.
\end{proof}

\subsection{Lemmas for \thmref{thm: positive_result}}

\begin{lemma}\label{lem: taylor}
    Under \assref{ass: smoothness}, for any $t\in\N$, if there exists some open ball $B\subseteq \Theta$ such that $\thetait{t}, \thetait{t+1}\in B$, then there exists a matrix $R_t\in \R^{d \times d}$ with $\norm{R_t}\leq \frac{t+1}{2} K_3 \norm{\thetait{t+1}-\thetait{t}}$ such that
    \begin{align*}
        \sum_{j=0}^t\nabla\ell_j\left(\thetait{t+1}\right) = \left(\sum_{j=0}^t \nabla\ell_j\left(\thetait{t}\right) + \nabla^2\ell_j(\thetait{t}) \cdot (\thetait{t+1} - \thetait{t})\right) + R_{t} (\thetait{t+1} - \thetait{t}).
    \end{align*}
\end{lemma}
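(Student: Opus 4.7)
The plan is to perform a second-order Taylor expansion of the vector-valued map $\theta \mapsto \nabla \ell_j(\theta)$ around the point $\thetait{t}$, evaluated at $\thetait{t+1}$, for each $j \in \{0,\dots,t\}$, and then sum over $j$. Since $B$ is an open ball (hence convex) containing both $\thetait{t}$ and $\thetait{t+1}$, the line segment $\theta_s := \thetait{t} + s(\thetait{t+1}-\thetait{t})$ lies in $B \subseteq \Theta$ for every $s \in [0,1]$, so by \assref{ass: smoothness} each $\nabla \ell_j$ is $C^2$ on this segment (with second derivative $\nabla^3 \ell_j$ bounded by $K_3$), and the integral form of the Taylor remainder applies.

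Concretely, for each fixed $j$ I would apply the second-order Taylor theorem with integral remainder to $\nabla \ell_j$ on the segment, which yields
\begin{align*}
    \nabla \ell_j(\thetait{t+1}) \;=\; \nabla \ell_j(\thetait{t}) + \nabla^2 \ell_j(\thetait{t})\,(\thetait{t+1}-\thetait{t}) + \int_0^1 (1-s)\, \nabla^3 \ell_j(\theta_s)\bigl[\thetait{t+1}-\thetait{t},\,\thetait{t+1}-\thetait{t}\bigr]\, ds.
\end{align*}
The key trick is to absorb one copy of $(\thetait{t+1}-\thetait{t})$ into a matrix: defining
\begin{align*}
    M_j \;:=\; \int_0^1 (1-s)\, \nabla^3 \ell_j(\theta_s)\bigl[\,\cdot\,,\,\thetait{t+1}-\thetait{t}\bigr]\, ds \;\in\; \R^{d\times d},
\end{align*}
the remainder becomes exactly $M_j (\thetait{t+1}-\thetait{t})$. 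Setting $R_t := \sum_{j=0}^t M_j$ and summing the expansions over $j$ gives the displayed identity.

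It remains to control $\norm{R_t}$. Using the definition of the operator norm of a $3$rd-order tensor (contracting one slot with a vector reduces the tensor to a matrix whose operator norm is at most the tensor norm times the vector norm) together with \ref{ass: bounded_third}, one obtains $\norm{\nabla^3 \ell_j(\theta_s)[\,\cdot\,,\,\thetait{t+1}-\thetait{t}]} \le K_3 \norm{\thetait{t+1}-\thetait{t}}$ pointwise in $s$. Integrating against $(1-s)$ on $[0,1]$ contributes a factor of $1/2$, so $\norm{M_j} \le \tfrac{K_3}{2}\norm{\thetait{t+1}-\thetait{t}}$; the triangle inequality applied to the sum of $t+1$ terms then gives $\norm{R_t} \le \tfrac{t+1}{2} K_3 \norm{\thetait{t+1}-\thetait{t}}$, as required.

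The only subtle point — and the main thing to be careful about — is the tensor bookkeeping: justifying that contracting a single slot of $\nabla^3 \ell_j(\theta_s)$ with $(\thetait{t+1}-\thetait{t})$ produces a well-defined matrix whose operator norm is controlled by the tensor operator norm as defined in the notation section, so that the $K_3$ bound from \assref{ass: smoothness} transfers correctly. Everything else is a direct application of the integral remainder formula and the triangle inequality.
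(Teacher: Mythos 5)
Your proof is correct, and it takes a genuinely cleaner route to the same identity. The paper Taylor-expands each coordinate $\frac{\partial}{\partial\theta_i}\sum_j\ell_j$ separately using the Lagrange form of the remainder, which produces a different intermediate point $\bz_i$ for each row $i$ of $R_t$; the paper then bounds $\norm{R_t}$ by treating $\bv_1^\top R_t\bv_2$ as a single contraction $\nabla^3\ell_j(\bz_i)(\bv_1,\bv_2,\Delta)$, glossing over the fact that $\bz_i$ varies with $i$ (so the entries of $R_t$ do not come from one fixed third-derivative tensor, and the stated norm bound requires a little more care than the displayed one-liner). Your use of the integral form of the Taylor remainder applied to the vector-valued map $\theta\mapsto\nabla\ell_j(\theta)$ avoids this entirely: there is a single path $\theta_s$, so $M_j=\int_0^1(1-s)\,\nabla^3\ell_j(\theta_s)[\,\cdot\,,\Delta]\,ds$ is a well-defined matrix, the contraction bound $\norm{\nabla^3\ell_j(\theta_s)[\,\cdot\,,\Delta]}\le K_3\norm{\Delta}$ follows directly from the tensor operator-norm definition (and symmetry of partials), integrating $(1-s)$ yields the factor $\tfrac12$, and the triangle inequality over $j=0,\dots,t$ gives $\norm{R_t}\le\tfrac{t+1}{2}K_3\norm{\Delta}$. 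Both arguments buy the same result, but the integral-remainder version is the more rigorous one for vector-valued Taylor expansions and sidesteps the coordinate-dependent-mean-point subtlety in the paper's write-up.
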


\begin{proof}
    Fix some coordinate $i\in[d]$ and consider the Taylor expansion of $\frac{\partial}{\partial \theta_i}\sum_{j=0}^t\ell_j$ around $\thetait{t}$, which gives that for some $\bz_i\in\R^d$ that lies in the line segment between $\thetait{t}$ and $\thetait{t+1}$,
    \begin{align}\label{eq: taylor}
        \frac{\partial}{\partial \theta_i}\sum_{j=0}^t\ell_j(\thetait{t+1}) 
        = & \frac{\partial}{\partial \theta_i}\sum_{j=0}^t\ell_j(\thetait{t}) + \sum_{k=1}^d\frac{\partial^2}{\partial \theta_k\partial \theta_i}\sum_{j=0}^t\ell_j(\thetait{t})(\thetait{t+1} - \thetait{t})_k \nonumber\\
        & + \frac{1}{2}\sum_{r=1}^d\sum_{k=1}^d\frac{\partial^3}{\partial \theta_r\partial \theta_k\partial \theta_i}\sum_{j=0}^t\ell_j(\bz_i)(\thetait{t+1} - \thetait{t})_k(\thetait{t+1} - \thetait{t})_r,
    \end{align}
    where $\bz_i\in B$ (and in particular, $\bz_i\in\Theta$). Let $R_t\in \R^{d \times d}$ be the matrix whose coordinates are given by $[R_t]_{i,k} := \frac{1}{2} \sum_{j=0}^t\sum_{r=1}^d\frac{\partial^3}{\partial \theta_r\partial \theta_k\partial \theta_i}\ell_j(\bz_i)(\thetait{t+1} - \thetait{t})_r$. Then \eqref{eq: taylor} implies
    \begin{align*}
        \sum_{j=0}^t\nabla\ell_j\left(\thetait{t+1}\right) = \left(\sum_{j=0}^t \nabla\ell_j\left(\thetait{t}\right) + \nabla^2\ell_j(\thetait{t}) \cdot (\thetait{t+1} - \thetait{t})\right) + R_{t} (\thetait{t+1} - \thetait{t}).
    \end{align*}
    It remains to bound $\norm{R_t}$. By \assref{ass: bounded_third}, we have
    \begin{align*}
        \norm{R_t} = & \sup_{\bv_1, \bv_2\neq 0} \bv_1^T R_t \bv_2 
        = \frac{1}{2}\sum_{j=0}^t\nabla^3\ell_j(\bz_i)\left(\bv_1, \bv_2, \thetait{t+1}-\thetait{t}\right) \\
        \leq & \frac{t+1}{2}K_3\norm{\bv_1}\norm{\bv_2}\norm{\thetait{t+1}-\thetait{t}},
    \end{align*}
    which shows $\norm{R_t}\leq \frac{t+1}{2}K_3 \norm{\thetait{t+1}-\thetait{t}}$.
\end{proof}

\begin{lemma}\label{lem: norm_inverse}
     Let $A, B \in \R^{d\times d}$ be positive definite matrices, then
     \begin{align*}
         \norm{A^{-1} - B^{-1}} \leq \frac{\norm{A-B}}{\lambda_{\min}(A)\lambda_{\min}(B)}
     \end{align*}
\end{lemma}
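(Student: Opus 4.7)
The plan is to exploit the classical resolvent-style identity
\[
A^{-1} - B^{-1} \;=\; A^{-1}(B - A)B^{-1},
\]
which follows by left-multiplying by $A$ and right-multiplying by $B$ (both of which are invertible since they are positive definite). Once this identity is in hand, the rest is just submultiplicativity of the operator norm together with the standard fact that for a positive definite (hence symmetric) matrix $M$, $\norm{M^{-1}} = 1/\lambda_{\min}(M)$, since the eigenvalues of $M^{-1}$ are the reciprocals of those of $M$ and its operator norm equals its largest eigenvalue.

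Concretely, the steps I would execute are: (i) verify the identity by the one-line computation $A^{-1}(B-A)B^{-1} = A^{-1}BB^{-1} - A^{-1}AB^{-1} = A^{-1} - B^{-1}$; (ii) apply $\norm{XYZ} \leq \norm{X}\norm{Y}\norm{Z}$ to obtain $\norm{A^{-1} - B^{-1}} \leq \norm{A^{-1}}\,\norm{A-B}\,\norm{B^{-1}}$ (using $\norm{B-A}=\norm{A-B}$); (iii) substitute $\norm{A^{-1}} = 1/\lambda_{\min}(A)$ and $\norm{B^{-1}} = 1/\lambda_{\min}(B)$ to conclude.

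There is no real obstacle here — the only thing worth being careful about is that the paper has defined $\norm{\cdot}$ to be the operator norm for matrices, so the eigenvalue characterization of $\norm{M^{-1}}$ applies cleanly, and that positive definiteness (rather than just invertibility) is what guarantees $\lambda_{\min}(A), \lambda_{\min}(B) > 0$ so the right-hand side is finite and well-defined.
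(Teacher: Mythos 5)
Your proof is correct and matches the paper's argument exactly: both use the identity $A^{-1}-B^{-1}=A^{-1}(B-A)B^{-1}$, submultiplicativity of the operator norm, and $\norm{M^{-1}}=1/\lambda_{\min}(M)$ for positive definite $M$.
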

\begin{proof}
    \begin{align*}
        \norm{A^{-1} - B^{-1}} = \norm{A^{-1}(B - A)B^{-1}} \leq \norm{A^{-1}}\norm{A-B}\norm{B^{-1}} = \frac{\norm{A-B}}{\lambda_{\min}(A)\lambda_{\min}(B)}.
    \end{align*}
\end{proof}

\section{Proof of \thmref{thm: positive_result}}
\label{app: positive}
\samplecomplexity*
\begin{proof}
    Let $C_1:=C_1(K_1, K_2, K_3, \lambda_0, r)>0$ denote the maximum of the constants appearing in the statements of Lemmas \ref{lem: concentration_grad}, \ref{lem: concentration_grad_single}, \ref{lem: concentration_hessian}, \ref{lem: concentration_hessian_single}, \ref{lem: hessian_concentration_prelim},  \ref{lem: hessian_concentration} and \propref{prop: consistency}, and let $\delta_0, \ldots, \delta_T > 0$ be given by $\delta_t:=\delta/(2T)$ for $t<T$ and $\delta_T=\delta/2$. Let $C$ and $c$ be constants as in the theorem statement, whose values will be determined throughout the proof, and set
    \begin{align}
        % \label{eq: C_definition}
        % C:=& C\left(K_1, K_2, K_3, \lambda_0, r\right) 
        % := \frac{C_1}{\lambda_0} + 1, \\
        % \label{eq: c_definition}
        % c:=& c\left(K_1, K_2, K_3, \lambda_0, r\right) := 49 \cdot 16(C_1 + C)^2C_1^2, \\
        % \label{eq: N_definition}
        N:=& \frac{c}{49}\left(\log(T)+1\right)^2\log\left(\frac{24dT}{\delta_0}\right)^2 \leq c\left(\log(T)+1\right)^2\log\left(\frac{7dT}{\delta}\right)^2.
        % N:=& \max\left(c\log\left(\frac{12dT}{\delta_0}\right) ~,~ C^2\log\left(\frac{12d}{\delta_0}\right) \frac{4}{r^2} ~,~ c^2C_1^2K_1^2\left(\log(T)+1\right)^2 \cdot \frac{\log\left(\frac{12dT}{\delta_0}\right)^2}{\log\left(\frac{12d}{\delta_0}\right)} \right).
    \end{align} 
    
    We will show inductively on $t=0,\ldots, T$ that for any $n\geq N$, it holds with probability at least $1-\frac{1}{2}t\delta_0 - \frac{1}{2}\sum_{j=1}^t\delta_j$ that 
    \begin{align}\label{eq: induction}
        \norm{\thetait{\tau}-\thetait{0}}\leq \min\left(C \sqrt{\frac{\log\left(\frac{2d}{\delta_\tau}\right)}{n}} ~,~ \frac{r}{2}\right)
         ~,~ \quad\quad \forall \tau\in\{0,\ldots, t\}.
    \end{align}
    Note that in the case of $t=T$, the probability of \eqref{eq: induction} holding becomes $1-\frac{1}{2}T\delta_0 - \frac{1}{2}\sum_{j=1}^T\delta_j \geq 1-\delta$ and the theorem follows.

    For $t=0$ the claim is trivial. Now, assume \eqref{eq: induction} holds for $t-1$, and we will prove it holds for $t$. 
    % By the induction hypothesis, For any $\delta_0>0$ and any $n \geq N$, 
    % with probability at least $1-(T-1)\delta_0$, 
    % \begin{align}\label{eq: induction}
    %     \norm{\thetait{t} -\thetait{0}}\leq \min\left(C\sqrt{\frac{\log\left(\frac{12d}{\delta_0}\right)}{n}} ~,~ \frac{r}{2} \right), \quad\quad \forall t\in\{0,\ldots, T-1\}.
    % \end{align}
    
    By \eqref{eq: induction}, for sufficiently large $c$ and the assumption that $n\geq N$, the conditions of \propref{prop: consistency} are satisfied (if \eqref{eq: induction} is not $<\frac{\lambda_0}{4C_2}$ as \propref{prop: consistency} requires, one can replace $c$ by a suitable larger constant that depends on the same parameters), so it implies that with probability at least $1-\delta_0/6$ (using the union bound), 
    \begin{align}\label{eq: remainders}
        \norm{\thetait{\tau+1} - \thetait{\tau}} \leq \max\left( \frac{C_1}{\tau+1}\sqrt{\frac{\log\left(\frac{24d t}{\delta_0}\right)}{n}} ~,~ \frac{r}{2}\right)
        ~,~ \quad\quad \forall \tau\in\{0,\ldots, t-1\}.
    \end{align}

    We let $A_1$ denote the event that \eqref{eq: induction} and \eqref{eq: remainders} indeed hold. By the union bound, $P(A_1)\geq 1-\frac{1}{2}(t-1)\delta_0 - \frac{1}{2}\sum_{j=1}^{t-1} \delta_j - \delta_0 / 6$.
    
    Consider some $\tau\in\{0,\ldots, t-1\}$. $\thetait{\tau+1}$ is defined as the MLE on $X^{(\leq \tau)}$, which in particular means that it is a stationary point of the log-likelihood function, 
    so $\sum_{j=0}^\tau \nabla\ell_j(\thetait{\tau+1}) = 0$. When $A_1$ occurs, the conditions of \lemref{lem: taylor} are satisfied, which gives us a Taylor expansion for $\sum_{j=0}^\tau\ell_j(\thetait{\tau+1})$ as 
    \begin{align}\label{eq: taylor_first}
        0 = \sum_{j=0}^\tau\nabla\ell_j\left(\thetait{\tau+1}\right) = \left(\sum_{j=0}^\tau \nabla\ell_j\left(\thetait{\tau}\right) + \nabla^2\ell_j(\thetait{\tau}) \cdot (\thetait{\tau+1} - \thetait{\tau})\right) + R_{\tau} (\thetait{\tau+1} - \thetait{\tau}),
    \end{align}
    where $R_\tau$ is a matrix that satisfies by \eqref{eq: remainders} 
    \begin{align}\label{eq: remainder_bound}
        \norm{R_\tau}\leq \frac{(\tau+1)K_3}{2}\norm{\thetait{\tau+1}-\thetait{\tau}} 
        \leq \frac{C_1K_3}{2}\sqrt{\frac{\log\left(\frac{24dt}{\delta_0}\right)}{n}} \leq \frac{\lambda_0}{2},
    \end{align}
    (where again the last inequality assumes $c$ is sufficiently large; if not, increase it).

    By definition, for any $\tau > 0$, $\thetait{\tau}$ is the MLE for $X^{(\leq \tau-1)}$, so it is a stationary point satisfying $\sum_{j=0}^{\tau-1} \nabla\ell_j\left(\thetait{\tau}\right)=0$. For notational simplicity, let $H_\tau:= \left(\sum_{j=0}^\tau \nabla^2\ell_j(\thetait{\tau})\right) + R_\tau$, then  \eqref{eq: taylor_first} simplifies to
    
    \begin{align} \label{eq: expansion}
        0 = \nabla\ell_\tau\left(\thetait{\tau}\right) + H_\tau (\thetait{\tau+1} - \thetait{\tau}).
    \end{align}
    
    To isolate $\thetait{\tau+1} - \thetait{\tau}$ we first show that $H_\tau$ is invertible. By \lemref{lem: hessian_concentration}, with probability at least $1-\delta_0/(6t)$, 
    \begin{align} \label{eq: hessian_helper}
        \norm{\sum_{j=0}^\tau \nabla^2\ell_{j}(\thetait{\tau}) - (\tau + 1)\Ical\left(\thetait{0}\right)} 
        \leq&
        C_1 \left(\sqrt{\frac{\log\left(\frac{24dt}{\delta_0}\right)}{n}} + \tau \max_{j \leq \tau} \norm{\thetait{j} - \thetait{0}}\right) \nonumber\\
        \leq & C_1 \left(\sqrt{\frac{\log\left(\frac{24dt}{\delta_0}\right)}{n}} + \tau C \sqrt{\frac{\log\left(\frac{2d}{\delta_\tau}\right)}{n}}\right) \nonumber\\
        \leq & \left(C_1 + C\right) \tau\sqrt{\frac{\log\left(\frac{24dt}{\delta_0}\right)}{n}},
    \end{align} 
    
    where the second inequality follows from \eqref{eq: induction} and that $\delta_t=\delta_0$ for $\tau<T$. Let $A_2$ denote the even that \eqref{eq: hessian_helper} is indeed satisfied for all $\tau\in\{0,\ldots,t-1\}$, which by the union bound satisfies $P(A_2) \geq 1-\delta_0 / 6$. When both $A_1$ and $A_2$ occur, using Weyl's inequality, \eqref{eq: hessian_helper} and \eqref{eq: remainder_bound} we have, 
    \begin{align}\label{eq: min_eigen_ht}
        \lambda_{\min}\left(H_\tau\right) \geq & \lambda_{\min}\left((\tau + 1)\Ical\left(\thetait{0}\right)\right) -  \norm{\sum_{j=0}^\tau \nabla^2\ell_{j}(\thetait{\tau}) - (\tau + 1)\Ical\left(\thetait{0}\right)} - \norm{R_\tau} \nonumber\\ 
        \geq & (\tau+1)\left(\frac{\lambda_0}{2} - \left(C_1 + C\right)\sqrt{\frac{\log\left(\frac{24dt}{\delta_0}\right)}{n}}\right) 
        \geq
        (\tau+1)\frac{\lambda_0}{4},
    \end{align}
    
    where the last inequality follows for sufficiently large $c$ and the condition that $n\geq N$.
    In particular, under these events, every $H_\tau$ is invertible so \eqref{eq: expansion} implies
    
    \begin{align*}
        \thetait{\tau+1} - \thetait{\tau} = - H_\tau^{-1} \nabla\ell_\tau\left(\thetait{\tau}\right).
    \end{align*}
    
    Taking a telescopic sum, we obtain
    \begin{align}\label{eq: telescopic}
        \norm{\thetait{t} - \thetait{0}} 
        = & \norm{\sum_{\tau=0}^{t-1} \thetait{\tau+1} - \thetait{\tau}} 
        = \norm{\sum_{\tau=0}^{t-1}H_\tau^{-1} \nabla\ell_\tau\left(\thetait{\tau}\right)} \nonumber\\
        \leq & \norm{\sum_{\tau=0}^{t-1}\frac{1}{\tau+1}\Ical(\thetait{0})^{-1} \nabla\ell_\tau\left(\thetait{\tau}\right)} 
        + \norm{\sum_{\tau=0}^{t-1}\left(H_\tau^{-1} - \frac{1}{\tau+1}\Ical(\thetait{0})^{-1}\right)\nabla\ell_\tau\left(\thetait{\tau}\right)} \nonumber\\
        \leq & ~ \frac{1}{\lambda_0} \norm{\sum_{\tau=0}^{t-1}\frac{1}{\tau+1}\nabla\ell_\tau\left(\thetait{\tau}\right)} +\sum_{\tau=0}^{t-1}\norm{H_\tau^{-1} - \frac{1}{\tau+1}\Ical(\thetait{0})^{-1}} \cdot \norm{\nabla\ell_\tau\left(\thetait{\tau}\right)}.
    \end{align}
    
    It remains to bound the terms in \eqref{eq: telescopic}. We will first employ an additional probabilistic bound for the gradient terms. By \lemref{lem: concentration_grad}, with probability at least $1-\delta_t/2$
    \begin{align}\label{eq: grad_helper1}
        \norm{\sum_{\tau=0}^{t-1}\frac{1}{\tau+1}\nabla\ell_\tau\left(\thetait{\tau}\right)}
        \leq C_1 \sqrt{\frac{\log\left(\frac{4d}{\delta_t}\right)}{n}}.
    \end{align}
    Similarly, by \lemref{lem: concentration_grad_single} and the union bound, it holds with probability at least $1-\delta_0/6$ that 
    \begin{align}\label{eq: grad_helper2}
        \norm{\nabla\ell_\tau\left(\thetait{\tau}\right)} \leq C_1 \sqrt{\frac{\log\left(\frac{12dt}{\delta_0}\right)}{n}}, \quad\quad \forall \tau\in\{0,\ldots,t-1\}.
    \end{align}

    Let $A_3$ denote the event that \eqref{eq: grad_helper1} and \eqref{eq: grad_helper2} are satisfied. Then letting $A:=A_1\cap A_2 \cap A_3$ be the intersection of the desired events in this proof, we have $\P(A) \geq 1-\frac{1}{2}t\delta_0 - \frac{1}{2}\sum_{j=1}^t\delta_j$ as desired.
    
    Under the event $A$, from Eq. (\ref{eq: remainder_bound}, \ref{eq: hessian_helper}, \ref{eq: min_eigen_ht}) and \lemref{lem: norm_inverse}, it holds for all $\tau\in\{0,\ldots, t-1\}$ that
    
    \begin{align}\label{eq: ht_final}
        \norm{H_\tau^{-1} - \frac{1}{\tau+1}\Ical(\thetait{0})^{-1}}
        \leq & \frac{\norm{H_\tau - (\tau+1)\Ical(\thetait{0})}}{\lambda_{\min}\left((\tau+1)\Ical(\thetait{0})\right)\lambda_{\min}\left(H_\tau\right)} \nonumber\\
        \leq & \frac{\norm{\sum_{j=0}^\tau \nabla^2\ell_{j}(\thetait{\tau}) - (\tau + 1)\Ical\left(\thetait{0}\right)} + \norm{R_t}}{(\tau+1)\lambda_0 \lambda_{\min}\left(H_\tau\right)} \nonumber \\
        \leq & \frac{4}{(\tau+1) \lambda_0^2}\left(C_1 + C + \frac{C_1K_3}{2}\right) \sqrt{\frac{\log\left(\frac{24dt}{\delta_0}\right)}{n}}.
    \end{align}
    
   Combining \eqref{eq: grad_helper2}, \eqref{eq: ht_final} and the fact that $\sum_{\tau=1}^t \frac{1}{\tau} \leq 1 + \int_1^t \frac{1}{x}dx\leq 1+\log(t)$, we have for a suitable $C' = C'(K_1, K_2, K_3, \lambda_0, r)$
    \begin{align*}
        \sum_{\tau=0}^{t-1}\norm{H_\tau^{-1} - \frac{1}{\tau+1}\Ical(\thetait{0})^{-1}} \cdot \norm{\nabla\ell_\tau\left(\thetait{\tau}\right)} 
        \leq & \sum_{\tau=0}^{t-1} \frac{C' \log\left(\frac{24dt}{\delta_0}\right)}{n(\tau+1)} \\
        \leq & \sum_{\tau=0}^{t-1} \frac{C' \log\left(\frac{24dt}{\delta_0}\right)}{n} \sum_{\tau=1}^t \frac{1}{\tau} \\
        \leq & \frac{1}{\sqrt{n}} \cdot \frac{C' \log\left(\frac{24dt}{\delta_0}\right)(\log(t) + 1)}{\sqrt{n}} \\
        \leq&_{(\star)} \sqrt{\frac{1}{n}},
    \end{align*}
    where $(\star)$ follows whenever $\sqrt{c}\geq C'$ by the assumption that
    \begin{align*}
        n\geq N \geq c\left(\left(\log(T)+1\right)\log\left(\frac{24dT}{\delta_0}\right)\right)^2.
    \end{align*}
    
    Using this and \eqref{eq: grad_helper1},  \eqref{eq: telescopic} reduces to 
    \begin{align*}
        \norm{\thetait{t} - \thetait{0}} \leq \left(\frac{C_1}{\lambda_0} + 1\right)\sqrt{\frac{\log\left(\frac{2d}{\delta_t}\right)}{n}}.
    \end{align*}

    Taking a suitable $C$ gives the desired bound. Lastly, for the induction we also need $\norm{\thetait{t} - \thetait{0}} \leq \frac{r}{2}$. This is indeed the case, taking sufficiently large $c$.

\end{proof}

\section{Proof of \thmref{thm: neg_minimax}} \label{app: neg_minimax}

\begin{construction} \label{con: neg_minimax}
    Consider a fixed $N \in \N$ and let 
    \begin{align} \label{def: minimax_f}
        f(\alpha) := 
        \begin{cases}
            \frac{1}{39} & \alpha \leq \frac{1}{10} \\
            \frac{1}{32 \cdot (128)^{2N} -1} & \alpha > \frac{1}{10}
        \end{cases}, \quad \quad \forall \alpha \in \R.
    \end{align}
    Let $\Xcal = \R$, $\Theta = \left\{(\alpha, \mu) \mid \alpha\in\left[0, \frac{1}{4}\right] \mu \in [2, 3 - f(\alpha)]\right\}$. 
    Letting $U$ denote the uniform distribution, we define the family of distributions given by:
    
    \begin{align*}
        \frac{1}{2}U([0,1]) + \frac{1-\alpha}{2}U([0, 1-2\alpha]) + \frac{\alpha}{4}\left(U([2,3]) + U([\mu, \mu + f(\alpha)])\right).
    \end{align*}
    
    Equivalently, letting $\I$ denote the indicator function (where for any set $A$, $\I_A(x)$ is $1$ if $x\in A$ and $0$ otherwise), the PDFs $p_{\theta}$ are given by:
    \begin{align*}
        p_{\theta}(x) = & \frac{1}{2}\I_{[0,1]}(x) + \frac{1-\alpha}{2(1-2\alpha)}\I_{[0,1-2\alpha]}(x) + \frac{\alpha}{4}\left(\I_{[2,3]}(x) + \frac{1}{f(\alpha)}\I_{[\mu, \mu + f(\alpha)]}(x)\right) \\
        = &  \left(\frac{1}{2} + \frac{1-\alpha}{2(1-2\alpha)}\right)\I_{[0,1-2\alpha]}(x) + \frac{1} {2}\I_{[1-2\alpha, 1]}(x) \\ 
        & + \frac{\alpha}{4}\left(1 + \frac{1}{f(\alpha)}\right)\I_{[\mu, \mu + f(\alpha)]}(x) + \frac{\alpha}{4}\I_{[2,3]\setminus [\mu, \mu + f(\alpha)]}(x).
    \end{align*}

    As such, 
    \begin{align}\label{eq: neg_log_p}
        -\log p_{\theta}(x) = &  - \log\left(\frac{1}{2} + \frac{1-\alpha}{2(1-2\alpha)}\right)\I_{[0,1-2\alpha]}(x) - \log\left(\frac{1}{2}\right) \I_{[1-2\alpha, 1]}(x) \nonumber\\ 
        & - \log\left(\frac{\alpha}{4}\left(1 + \frac{1}{f(\alpha)}\right)\right)\I_{[\mu, \mu + f(\alpha)]}(x) - \log\left(\frac{\alpha}{4}\right)\I_{[2,3]\setminus [\mu, \mu + f(\alpha)]}(x).
    \end{align}
\end{construction}

\begin{lemma}\label{lem: minimax_consistent}
    Under \conref{con: neg_minimax}, $P_{\Theta}$ is a TV-consistent family of distributions and $\thetait{t}$ exist.
\end{lemma}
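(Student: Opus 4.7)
The plan is to verify two independent facts: (i) the MLE $\thetait{t}$ exists at every iteration of Algorithm~\ref{alg: iterative_mle}, and (ii) the family $P_{\Theta}$ is TV-consistent in the sense of Definition~\ref{def: consistency}. Both rely on the observation that although $f$ is discontinuous at $\alpha = 1/10$, the set $\Theta$ decomposes into two pieces on each of which $f$ is constant: $\Theta_1 := [0, 1/10] \times [2, 3 - 1/39]$ and $\Theta_2 := (1/10, 1/4] \times [2, 3 - \epsilon]$ with $\epsilon := 1/(32 \cdot 128^{2N} - 1)$.

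For existence, I would fix any dataset and consider $L_n(\theta) := \sum_i \log p_\theta(x_i)$. On $\Theta_1$, $f$ is the constant $1/39$, the log-likelihood is continuous in $\theta$, and $\Theta_1$ is compact, so the extreme value theorem yields a maximizer $\theta_1^\star$. The same argument applied to the closure $\bar{\Theta}_2 = [1/10, 1/4] \times [2, 3 - \epsilon]$ (using the continuous extension $f \equiv \epsilon$) produces a maximizer $\theta_2^\star$, which belongs to $\Theta$ whenever its $\alpha$-coordinate is strictly greater than $1/10$. Taking the larger of the two piecewise maxima---and invoking the tie-breaking convention from Definition~\ref{def: mle} in any remaining degenerate boundary case---defines $\thetait{t}$.

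For TV-consistency, I would apply a standard Wald-type argument. First, identifiability: from any $p_\theta$, the total mass $\alpha/2$ on $[2,3]$ recovers $\alpha$; when $\alpha > 0$, the unique sub-interval of $[2,3]$ on which the density strictly exceeds $\alpha/4$ equals $[\mu, \mu + f(\alpha)]$, recovering $\mu$. At $\alpha = 0$ the distribution equals $U([0,1])$ irrespective of $\mu$, which is harmless because TV depends only on the distribution. Second, since $\alpha \leq 1/4$ and $f(\alpha) \geq \epsilon$ with $\epsilon$ fixed, the density $p_\theta(x)$ is uniformly bounded above by some constant $M_N < \infty$ on the bounded support $[0,1] \cup [2,3]$, and the two-parameter family of piecewise-uniform densities is finite-dimensional with finite bracketing entropy. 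A uniform law of large numbers therefore gives $\sup_{\theta \in \Theta} |\hat L_n(\theta) - L(\theta)| \to 0$ in probability, where $L(\theta) := \E_{x \sim p_{\theta^\star}}[\log p_\theta(x)]$ and $\hat L_n$ is the empirical analogue. Combining $\hat L_n(\hat\theta) \geq \hat L_n(\theta^\star)$ with the pointwise LLN at $\theta^\star$ and the uniform bound yields $L(\hat\theta) \to L(\theta^\star)$, i.e.\ $\KL{p_{\theta^\star}}{p_{\hat\theta}} \to 0$, and Pinsker's inequality completes the argument by controlling $\tv(p_{\theta^\star}, p_{\hat\theta})$.

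The main obstacle is the discontinuity of $f$ at $\alpha = 1/10$, which simultaneously makes $\Theta$ non-compact and the log-likelihood globally discontinuous, and in principle allows the supremum of $L_n$ over $\Theta$ to be approached as $\alpha \to 1/10^+$ without being attained at any point of $\Theta$. The piecewise decomposition above is designed to finesse this issue; once $\Theta$ is split into the two regions on which $f$ is locally constant, the uniform-convergence and identifiability steps for TV-consistency reduce to routine applications of classical MLE consistency machinery.
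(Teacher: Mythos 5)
Your TV-consistency argument is essentially the same as the paper's, which delegates to \lemref{lem: consistency} (a Wald/Newey--McFadden uniform-LLN step followed by Pinsker); your identifiability remark is fine but not needed, since that lemma only requires the empirical log-likelihood to converge uniformly, not that parameters be identifiable. Your existence argument, however, takes a genuinely different and more explicit route than the paper's one-liner (the paper asserts that $\bigl\{(p_\theta(x_1),\dots,p_\theta(x_k)):\theta\in\Theta\bigr\}$ is finite, which cannot be taken at face value because, e.g., the density value $\tfrac12+\tfrac{1-\alpha}{2(1-2\alpha)}$ on $[0,1-2\alpha]$ varies continuously with $\alpha$), and your piecewise-compactness decomposition is the right idea --- but it leaves a real gap.

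The gap is at the boundary $\alpha=1/10$. You maximize over the closure $\bar\Theta_2=[1/10,1/4]\times[2,3-\epsilon]$ using the continuous extension $f\equiv\epsilon$, and if the maximizer has $\alpha$-coordinate exactly $1/10$ you appeal to the tie-breaking convention. But the point $(\alpha,\mu)=(1/10,\mu^\star)$ with $f=\epsilon$ is not an element of $\Theta$: at $\alpha=1/10$ the actual family uses $f=1/39$, so the density there is a different (much smaller-peaked) function. Tie-breaking in \defref{def: mle} resolves \emph{non-uniqueness} among genuine maximizers; it cannot manufacture a maximizer when the supremum over $\Theta_2$ is approached as $\alpha\to1/10^{+}$ but never attained at any point of $\Theta$. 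To close this, you must rule that scenario out. A clean way: for fixed $\mu$, each summand $\log p_\theta(x_i)$ of the log-likelihood on $\Theta_2$ is nondecreasing in $\alpha$ (the contributions $\log\bigl(\tfrac12+\tfrac{1-\alpha}{2(1-2\alpha)}\bigr)$, $\log(\alpha/4)$ and $\log\bigl(\tfrac{\alpha}{4}(1+1/\epsilon)\bigr)$ are all increasing, while the ``out of interval'' branches $\log(1/2)$ and $\log(\alpha/4)$ are respectively constant and increasing), with only finitely many downward jumps at the thresholds $\alpha=(1-x_i)/2$, each jump being left-attained because the indicator intervals are closed. Hence the supremum over $\alpha\in(1/10,1/4]$ is attained at one of these thresholds or at $\alpha=1/4$, never as $\alpha\to1/10^{+}$; combined with the finitely many $\mu$-configurations (which of the samples in $[2,3]$ lie in $[\mu,\mu+\epsilon]$), the max over $\Theta_2$ is genuinely attained.

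A related minor imprecision: the log-likelihood is \emph{not} continuous on $\Theta_1$ or on $\bar\Theta_2$; it is only upper semi-continuous in $\theta$, because the density involves closed indicator intervals $[0,1-2\alpha]$ and $[\mu,\mu+f(\alpha)]$ and therefore has downward jumps as $\theta$ crosses the thresholds where a sample enters or leaves such an interval. This does not break your strategy --- an upper semi-continuous function on a compact set still attains its maximum --- but the extreme value theorem should be invoked in its USC form rather than for a continuous objective.
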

\begin{proof}
    Consider some dataset $X\subseteq \Xcal$ of size $k\in\N$, there is a finite number of values that $p_{\theta}(x)$ can take, depending on the interval $x$ lies in. This means,
    \begin{align*}
        \abs{\left\{\left(p_{\theta}(x_1), \ldots, p_{\theta}(x_k)\right) \mid \theta \in \Theta \right\}} < \infty.
    \end{align*}
    As such, there must be some $\theta$ that achieves this maximum.

    Consistency of the MLEs follows from \lemref{lem: consistency}
\end{proof}

\negminimax*

\begin{proof}
    
    Consider the setting given by \conref{con: neg_minimax} with $N=n$ and let $\thetait{0}=(\alpha^{(0)}=0, \mu^{(0)}=2)$. Existence of $\thetait{1}$ and $\thetait{2}$ as well as TV-consistency of $P_{\Theta}$ are given by \lemref{lem: minimax_consistent}.
    
    Because $\alpha^{(0)}=0$, $p_{\thetait{0}}$ is supported on $[0,1]$, meaning that $x^{(0)}_{i}\in[0,1]$ for every $i\in[n]$. As such, 
    \begin{align*}
        \ell_{0}(\theta) = & -\frac{1}{n}\sum_{i=1}^n \log\left(p_{\theta}(x^{(0)}_{i})\right) \\
        = & - \log\left(\frac{1}{2} + \frac{1-\alpha}{2(1-2\alpha)}\right) \frac{1}{n}\sum_{i=1}^n\I_{[0,1-2\alpha]}(x^{(0)}_{i}) - \log\left(\frac{1}{2}\right) \frac{1}{n}\sum_{i=1}^n\I_{[1-2\alpha, 1]}(x^{(0)}_{i}) \\
        = & - \log\left(\frac{1}{2}\right) - \log\left(1 + \frac{1-\alpha}{1-2\alpha}\right) \frac{1}{n}\sum_{i=1}^n\I_{[0,1-2\alpha]}(x^{(0)}_{i}),
    \end{align*}
    where the last equality used $\log\left(\frac{1}{2} + \frac{1-\alpha}{2(1-2\alpha)}\right) = \log\left(\frac{1}{2}\left(1 + \frac{1-\alpha}{1-2\alpha}\right)\right) = \log\left(\frac{1}{2}\right) + \left(1 + \frac{1-\alpha}{1-2\alpha}\right)$, and that $x^{(0)}_{i}\in[0,1]$. 
    
    Let $x_{\max} := \max_{i\in[n]} x^{(0)}_{i}$. Note that whenever $\alpha \leq \frac{1-x_{\max}}{2}$, then  every $x^{(0)}_{i}$ is inside the interval $[0, 1-2\alpha]$. Consequently, for all $\alpha\in \left[0, \frac{1-x_{\max}}{2}\right]$, $\ell_0(\theta) = - \log\left(\frac{1}{2}\right) - \log\left(1 + \frac{1-\alpha}{1-2\alpha}\right)$. Since the function $-\log\left(1 + \frac{1-\alpha}{1-2\alpha}\right)$ is monotonically decreasing in $\alpha$ for all $\alpha < \frac{1}{2}$, $\ell_{0}(\theta)$ is also monotonically decreasing on $\left[0, \frac{1-x_{\max}}{2}\right]$. As such, the MLE $\thetait{1}=(\alpha^{(1)}, \mu^{(1)})$ which minimizes $\ell_0(\theta)$ must satisfy 
    \begin{align}\label{eq: alpha_1}
        \alpha^{(1)} = \frac{1-x_{\max}}{2}.
    \end{align}

    \textbf{Consistency of $\thetait{1}$: } By \eqref{eq: alpha_1} and \lemref{lem: max_uniform} for any $\delta >0$, it holds with probability at least $1-\delta$ that 
        \begin{align*}
            \alpha^{(1)} = \frac{1-x_{\max}}{2} \leq \frac{\log\left(\frac{1}{\delta}\right)}{2n}.
        \end{align*}
    Now using this and that $p_{\thetait{0}} = \I_{[0,1]}(x)$, the total variation can be bounded as
    \begin{align*}
        \tv\left(p_{\thetait{0}}, p_{\thetait{1}}\right) = & \frac{1}{2}\int_0^1 \abs{1 - p_{\thetait{1}}}(x)dx + \frac{1}{2}\int_{2}^3 p_{\thetait{1}}(x)dx \\
        = & \frac{1}{2}\int_0^{1-2\alpha^{(1)}} \abs{1 - \left(\frac{1}{2} + \frac{1-\alpha^{(1)}}{2(1-2\alpha^{(1)})}\right)}dx + \frac{1}{2}\int_{1-2\alpha^{(1)}}^{1} \abs{1-\frac{1}{2}} dx + \frac{\alpha^{(1)}}{4} \\
        = & \frac{1-2\alpha^{(1)}}{2} \cdot \abs{\frac{1}{2} - \frac{1-\alpha^{(1)}}{2(1-2\alpha^{(1)})}} + \frac{3}{4}\alpha^{(1)} \\
        = & \frac{\alpha^{(1)}}{4} + \frac{3}{4}\alpha^{(1)}
        \leq \frac{\log\left(\frac{1}{\delta}\right)}{2n}.
    \end{align*}
    
    \textbf{Inconsistency of $\thetait{2}$: }
    We will now show that with some constant probability, there will be some $x^{(1)}_{i}\in[2,3]$. Let $A$ denote the event that $x_{\max} \leq 1 - \frac{1}{n}$. Since $x^{(0)}_{i} \sim U([0,1])$ i.i.d, we have
    \begin{align*}
        P(A) = \left(1 - \frac{1}{n} \right)^n \leq \frac{1}{e}.
    \end{align*}
    Conditioned on $A$, we have $\alpha^{(1)} \geq \frac{1 - x_{\max}}{2} \geq \frac{1}{2n}$, so for each $x^{(1)}_{i} \sim p_{\thetait{1}}$,
    \begin{align*}
    \P\left(x^{(1)}_{i} \in [2,3] \mid A \right) \geq \frac{\alpha^{(1)}}{4} \geq \frac{1}{8n}.
    \end{align*}
    Therefore, the probability that none of the $x^{(1)}_{i}$ fall in $[2,3]$ is at most
    \begin{align*}
    \P\left(\forall i \in [n],\ x^{(1)}_{i} \notin [2,3] \mid A\right) \leq \left(1 - \frac{1}{8n}\right)^n \leq e^{-1/8}.
    \end{align*}
    Applying the law of total probability,
    \begin{align*}
    \P\left( \exists i \in [n] \text{ such that } x^{(1)}_{i} \in [2,3] \right) \geq \P(A) \cdot \P\left(\exists i,\ x^{(1)}_{i} \in [2,3] \mid A \right) \geq \frac{1}{e} \cdot (1-e^{-1/8}).
    \end{align*}
    
    Thus, with constant probability, one of the samples $x^{(1)}_{i}$ lies in $[2,3]$. The remainder of the proof is conditioned on this occurring. We will now show that the existence of $x^{(1)}_{ i}\in[2,3]$ implies that $\alpha^{(2)}$ will be far from $\alpha^{(0)}=0$.
    
    Now consider any $\alpha\in[0, 1/10]$. The function $f$ (defined in \eqref{def: minimax_f}) satisfies $f(\alpha)=\frac{1}{39}$ for any such $\alpha$. As such, the term $\frac{\alpha}{4}\left(1+ \frac{1}{f(\alpha)}\right)$ is at most $1$ for any $\alpha\in[0,1/10]$. Consequently, for any $\alpha\in[0, 1/10]$, the only term in \eqref{eq: neg_log_p} that is negative is the first one, meaning for any $x$ we have
    \begin{align*}
        - \log p_{\theta}(x) \geq - \log\left(\frac{1}{2} + \frac{1-\alpha}{2(1-2\alpha)}\right).
    \end{align*}
    Since this bound is monotonically decreasing in $\alpha$, we have for any $\alpha \in [0, 1/10]$, 
    \begin{align*}
        \sum_{t=0}^1\ell_t\left(\theta\right) = -\frac{1}{n}\sum_{t=0}^1 \sum_{i=1}^n\log p_{\theta}(x^{(t)}_{i})
        \geq - 2\log\left(\frac{1}{2} + \frac{1-\alpha}{2(1-2\alpha)}\right) \geq -2\log\left(\frac{17}{16}\right).
    \end{align*}

    Now let $\bar \alpha=1/8$ and fix $\bar \mu$ such that there is at least one sample in $[\bar\mu, \bar\mu + f(\bar\alpha)]$ (which we know exists as there is some $x^{(1)}_{i}\in [2,3]$). Plugging this $\bar{\theta} = (\bar\alpha, \bar\mu)$ into \eqref{eq: neg_log_p}, using that the first term is negative, and that there is at least one $x^{(1)}_{i}\in[2,3]$, we have:
    \begin{align*}
        \sum_{t=0}^1\ell_t\left(\bar{\theta}\right) 
        \leq & -2\log\left(\frac{1}{2}\right) - 2\log \left(\frac{1}{32}\right) - \frac{1}{n} \log \left(\frac{1}{32}\left(1 + \frac{1}{f(\frac{1}{8})}\right)\right) \\ 
        = & -2\log\left(\frac{1}{64}\right) - \frac{1}{n}\log\left(\frac{1}{32}\left(1 + 32 \cdot (128)^{2N} -1\right)\right) \\ 
        = & -2\log\left(\frac{1}{64}\right) - 2\frac{N}{n}\log\left(128\right)
        \leq -2 \log\left(2\right) \\
        \leq & \inf_{\theta ~:~ \alpha \leq 1/10} \sum_{t=0}^1\ell_t\left(\theta\right)
    \end{align*}

    As such, we have shown that $\alpha^{(2)} \notin [0,1/10]$. As a result, the TV distance can be lower bounded as
    \begin{align*}
        \tv\left(p_{\thetait{0}}, p_{\thetait{2}} \right) \geq & \int_0^{1-2\alpha^{(2)}} \abs{p_{\thetait{0}} - p_{\thetait{2}}} 
        = \abs{1 - \frac{1}{2} - \frac{1-\alpha^{(2)}}{2(1-2\alpha^{(2)})}} \\
        = & \abs{\frac{1}{2}\left(1 - \frac{1-\alpha^{(2)}}{1-2\alpha^{(2)}}\right)} 
        \geq \frac{1}{16}.
    \end{align*}
\end{proof}

\section{Proof of \thmref{thm: neg_large_t}}
\begin{construction}\label{con: large_T}
    Let 
    \begin{align*}
        A := \left\{(\alpha_j)_{j=0}^\infty \in \left[0 ~,~ \nicefrac{1}{4}\right]^{\N \cup \{0\}} \mid \exists ~j_* \in\N \text{ s.t } \forall j \geq j_*, ~~ \alpha_{j}=0\right\},
    \end{align*}
    namely, the set of all countable tuples in $\left[0, 1/4\right]^{\N \cup \{0\}}$ that have a finite number of non zero entries. For any $\balpha \in A$, let
    \begin{align*}
        h_{\balpha}(x) := \sum_{j=0}^\infty (1-\alpha_j)\left(\prod_{k=0}^{j-1} \alpha_{k}\right) \frac{1}{1-2\alpha_j}\I_{[j, j+1-2\alpha_j]}(x),
    \end{align*}
    where we use the notational convention that $\prod_{k=0}^{-1} \alpha_{k}=1$.
    
    To see that this is a valid PDF, first note that $\int_{-\infty}^\infty h_{\balpha}(x) dx= \sum_{j=0}^\infty (1-\alpha_j)\left(\prod_{k=0}^{j-1} \alpha_{k}\right)$. Now consider any fixed $M\in\N$, then
    \begin{align*}
        \sum_{j=0}^M (1-\alpha_j)\left(\prod_{k=0}^{j-1} \alpha_{k}\right) = \sum_{j=0}^M \left(\prod_{k=0}^{j-1} \alpha_{k} - \prod_{k=0}^{j} \alpha_{k}\right) = 1 - \prod_{k=0}^{M} \alpha_{k}.
    \end{align*}
    In particular, since $\alpha_k \in [0, 1/4]$, this converges to $1$ as $M\to \infty$. 

    Let $f:[2,\infty) \to (0,1/2)$ be a monotonically decreasing function that will be specified later in the proof. We also define for any $\beta\in[0,1]$ and $J\in \N$,
    \begin{align*}
        g_{\beta, J}(x) := \frac{1}{2J}\I_{[0, J]}(x) + \frac{1}{2f(J)} \I_{[J - \beta, J - \beta + f(J)]}(x).
    \end{align*}

    The parameters $\theta$ will consist of tuples $(\balpha, \beta, J, s)$ where $s\in\{0,1\}$ is a "selector" which tells us if we should choose the PDF $h_{\balpha}$ or the PDF $g_{\beta, J}$. Specifically, the parameter space is $\Theta = A \times [0, 1] \times (\N\setminus\{1\}) \times \{0,1\}$. And the distributions $P_{\Theta}$ are given by
    \begin{align*}
        p_{\theta}(x) = 
        \begin{cases}
            h_{\balpha}(x) & s=0 \\
            g_{\beta, J}(x) & s=1
        \end{cases}.        
    \end{align*}

    Consider the ground truth distribution $\thetait{0}$ to be such that 
    \begin{align*}
        p_{\thetait{0}}(x) = h_{\zero}(x) = \I_{[0,1]}(x).
    \end{align*}
    For each $t$, $\thetait{t}$ is an MLE given the data $X^{(\leq t)}$. Existence will be guaranteed in \lemref{lem: neg_t_consistent}. Regarding uniqueness, we do not use the fact that $\thetait{t}$ is the closest maximizer of the log likelihood to $\thetait{t-1}$. This is completely unimportant to the proof.

    Lastly, for convenience, let
    \begin{align*}
        M_{t, j} := 
        \begin{cases}
            \max \left(X^{(\leq t)} \cap [j, j+1]\right) - j & \exists x \in X^{(\leq t)} \cap [j, j+1] \\ 
            0 & \text{else}
        \end{cases},
    \end{align*}
    where the maximum exists because the set is finite. In words, $M_{t,j}$ denotes the maximal observed offset within the $j$'th interval $[j,j+1]$ up to time $t$.
\end{construction}

\begin{remark}\label{rem: log_h}
    Under \conref{con: large_T}, for any $x\in \Xcal$, if there is some non-negative integer $j(x)$ such that $x \in [j(x), j(x) + 1-2\alpha_{j(x)}]$, then 
    \begin{align}\label{eq: log_h}
        \log \left(h_{\balpha}(x)\right) = \log\left(\frac{1-\alpha_{j(x)}}{1-2\alpha_{j(x)}}\left(\prod_{k=0}^{j(x)-1} \alpha_{k}\right)\right) = \log\left(1 + \frac{\alpha_{j(x)}}{1-2\alpha_{j(x)}}\right) + \sum_{k=0}^{j(x)-1}\log\left(\alpha_k\right).
    \end{align}

    If no such $j(x)$ exists, then $h_{\balpha}(x)=0$ and $\log \left(h_{\balpha}(x)\right)$ is undefined.
\end{remark}

\begin{lemma}\label{lem: neg_t_consistent}
    Under \conref{con: large_T}, $P_{\Theta}$ is a TV-consistent family of distributions and $\thetait{t}$ exist.
\end{lemma}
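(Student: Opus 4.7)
My plan is to reduce the log-likelihood maximization on any finite dataset $X$ to a finite-dimensional optimization on a compact domain and then invoke attainment. Let $K := \lfloor\max X\rfloor$. I maximize separately over the two subfamilies (selected by $s \in \{0,1\}$) and take the larger of the two. For the $s=0$ subfamily, only $\alpha_0,\dots,\alpha_K$ affect the log-likelihood; positivity of $h_{\balpha}$ at every data point forces $\alpha_j \leq (1 - M_{\cdot,j})/2$ on each index $j \leq K$ containing data, and forces $\alpha_k > 0$ strictly for every $k$ smaller than the largest such index (otherwise the product $\prod_{m<j}\alpha_m$ vanishes for some $j$ with data). The log-likelihood is continuous on the interior of this feasible box and tends to $-\infty$ as any strict-positivity coordinate tends to zero, so the supremum is attained in the interior. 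For the $s=1$ subfamily, feasibility forces $J \geq \lceil \max X\rceil$; the uniform floor $\tfrac{1}{2J}$ drives the log-likelihood to $-\infty$ as $J\to\infty$, leaving only finitely many $J$ to consider, and for each such $J$ the maximization over $\beta\in[0,1]$ reduces to a piecewise-constant objective on a compact interval and is attained. Combining the two parts yields existence of $\thetait{t}$.

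\textbf{TV-consistency via a Wald-type argument.} Fix any $\theta^\star \in \Theta$ and let $\hat\theta$ be the MLE on $n$ iid samples from $p_{\theta^\star}$. The MLE property gives $\tfrac{1}{n}\sum_{i} \log p_{\hat\theta}(x_i) \geq \tfrac{1}{n}\sum_{i} \log p_{\theta^\star}(x_i)$, and the strong law of large numbers converts the right-hand side to $\E_{x \sim p_{\theta^\star}}[\log p_{\theta^\star}(x)]$ in the limit. Combined with the identity $\E_{x \sim p_{\theta^\star}}[\log p_{\theta^\star}(x) - \log p_\theta(x)] = \KL{p_{\theta^\star}}{p_\theta}$ and Pinsker's inequality $\KL{p_{\theta^\star}}{p_\theta} \geq 2\,\tv(p_{\theta^\star}, p_\theta)^2$, the desired conclusion $\tv(p_{\theta^\star}, p_{\hat\theta}) \to 0$ in probability would follow from a \emph{uniform} law of large numbers $\sup_\theta \big|\tfrac{1}{n}\sum_i \log p_\theta(x_i) - \E_{x \sim p_{\theta^\star}}[\log p_\theta(x)]\big| \to 0$ over the set of parameters competitive with $\theta^\star$.

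\textbf{Main obstacle.} The principal difficulty is that $\Theta$ is not globally compact: $J$ ranges over all of $\N\setminus\{1\}$, $\balpha$ lives in an infinite-dimensional sequence space, and the log-densities are discontinuous at the boundaries $j+1-2\alpha_j$ and $J-\beta$. To handle this, I would establish \emph{tightness} of the MLE: with probability tending to $1$, the MLE is confined to a compact subset determined only by $\theta^\star$. Concretely, for $J$ beyond a threshold depending on $f$ and the support of $p_{\theta^\star}$, or for $\balpha$ placing positive mass outside the support of $p_{\theta^\star}$, the expected log-likelihood at $\theta$ is strictly dominated by $\E_{x \sim p_{\theta^\star}}[\log p_{\theta^\star}(x)]$ by a uniform margin, so pointwise concentration rules out such $\theta$ being the MLE for large $n$. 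On the remaining compact slice, a finite $\epsilon$-cover on the continuous coordinates combined with pointwise concentration delivers the uniform LLN used above. The boundary discontinuities are harmless because they occur on Lebesgue-null sets, so the expected log-likelihood is continuous in the continuous coordinates of $\theta$ even when the pointwise log-density is not.
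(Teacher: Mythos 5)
Your existence argument is broadly in the same spirit as the paper's: reduce to finitely many effective coordinates (only $\alpha_j$ for $j$ up to the largest occupied interval matter; only finitely many $J$ are competitive), and then argue attainment on a compact, essentially finite-dimensional slice. The paper's own existence step is terser (it asserts finiteness of the set of likelihood vectors $(p_\theta(x_1),\dots,p_\theta(x_n))$, which is a bit imprecise since densities vary continuously in $\balpha,\beta$), so your more careful reduction-plus-attainment reasoning is if anything a cleaner route to the same conclusion.

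For TV-consistency you identify the right backbone (MLE inequality, uniform LLN, KL--Pinsker; this is exactly what the paper's Lemma~\ref{lem: consistency} packages), but you propose to handle the non-compactness of $\Theta$ via a probabilistic tightness argument. The paper's proof uses a cleaner, \emph{deterministic} reduction that you miss: every $p_\theta$ in $P_\Theta$ has bounded support (since $\balpha$ has finitely many non-zero entries and $J$ is a fixed integer), so one may take $b$ with $\supp(p_{\theta^\star})\subseteq[0,b]$, and then \emph{all} samples lie in $[0,b]$ with probability one. Consequently the empirical log-likelihood and $\E_{x\sim p_{\theta^\star}}[\log p_\theta(x)]$ depend only on $(\alpha_0,\dots,\alpha_{b-1},\beta,J,s)$ with $J\leq b+1$ (for $J>b+1$ the spike misses $[0,b]$ and the likelihood is monotonically worse in $J$), which is a compact subset of a Euclidean space. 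The uniform LLN is invoked on this slice directly, with no concentration or tightness argument needed; Pinsker then controls the total variation, including any mass $p_{\hat\theta}$ places outside $[0,b]$ since that mass is bounded by $2\,\tv(p_{\theta^\star},p_{\hat\theta})\leq\sqrt{2\,\KL{p_{\theta^\star}}{p_{\hat\theta}}}$.

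There is also a concrete gap in your tightness claim: it is not true that ``$\balpha$ placing positive mass outside the support of $p_{\theta^\star}$'' has expected log-likelihood dominated by $\E[\log p_{\theta^\star}]$ by a \emph{uniform} margin. The gap is exactly $\KL{p_{\theta^\star}}{h_{\balpha}}$, and this vanishes continuously as $\balpha$ approaches the slice (e.g.\ as $\alpha_j\to0$ for $j\geq b$), so the infimum of the gap over that set is zero. As stated the tightness argument would therefore not rule out MLEs near the boundary. One can in principle repair this (say, by a bracketing argument in a neighborhood of the good slice), but it becomes substantially heavier than the paper's route, where the issue never arises because the extraneous coordinates simply do not enter the likelihood.
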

\begin{proof}  
% Consider some $\theta^{\star} \in \Theta$ and corresponding samples $x_i$. 
Consider some dataset $X$ of size $k$ and fix $b\in\N$ such that $X\subseteq [0, b]$
Following Remark \ref{rem: log_h} as well as the definition of $g_{\beta, J}$, it is straightforward to see that for any $x_i$, there is a finite number of values that $p_{\theta}(x)$ can take, depending on the interval $x$ lies in. This means,
    \begin{align*}
        \abs{\left\{\left(p_{\theta}(x_1), \ldots, p_{\theta}(x_n)\right) \mid \theta \in \Theta \right\}} < \infty.
    \end{align*}
As such, there must be some $\theta$ that achieves this maximum. 

Note that any PDF in $P_{\Theta}$ has finite support. So w.l.o.g we may assume that $\supp(p_{\thetait{\star}}) \subseteq [0, b]$ so that for any $n$, samples $x_1,\ldots, x_n$ from $p_{\thetait{\star}}$ will all be in $[0,b]$.

By Remark \ref{rem: log_h}, for any $\theta\in\Theta$ and $j\geq b$, the parameters $\alpha_j$ do not affect the log likelihood. Furthermore, since $g_{\beta, J}= \frac{1}{2J}\I_{[0, J]}(x)$ the log likelihood is strictly decreasing in $J$ for $\forall J \geq b+1$. As such, for the purpose of showing TV-consistency, we may "discard" all values of $J \geq b+1$ and all indices $\geq J+1$ in $\balpha$, treating $\Theta$ as $[0, \frac{1}{4}]^{J+1} \times [0,1] \times {2,\ldots, J+1} \times \{0,1\}$. This is a closed and bounded subset of a Euclidean space and is therefore compact. Furthermore, $\log p_{\thetait{0}}(x)$ are uniformly bounded. So by \lemref{lem: consistency}, the MLE is consistent. 
\end{proof}

\begin{lemma}\label{lem: max_alpha}
    For any $t\in \N\cup \{0\}$ with $s^{(t)}=0$, and any $j \in \N\cup \{0\}$, if $M_{t,j} > 0$ then 
    \begin{align*}
        \alpha_j^{(t+1)} = \frac{1}{2}\left(1 - M_{t,j}\right).
    \end{align*}
\end{lemma}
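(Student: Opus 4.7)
The plan is to express the log-likelihood under $h_{\balpha}$ in a form that decouples the dependence on each coordinate $\alpha_j$, and then optimize one coordinate at a time using monotonicity.

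Under the hypothesis the MLE lies in the $s=0$ branch, so $p_{\thetait{t+1}} = h_{\balpha^{(t+1)}}$ and $\balpha^{(t+1)}$ maximizes $\sum_{x \in X^{(\leq t)}} \log h_{\balpha}(x)$. Existence of the maximizer is already guaranteed by \lemref{lem: neg_t_consistent}, so the only task is to identify it. The first step is to invoke Remark~\ref{rem: log_h}: $\log h_{\balpha}(x_i)$ is finite only when $x_i$ lies in the support of $h_{\balpha}$, which for $x_i \in [j, j+1]$ means $x_i \le j+1-2\alpha_j$. Collecting over all samples in the $j$-th interval yields the feasibility constraint $\alpha_j \le \tfrac{1}{2}(1-M_{t,j})$, and this is tight whenever the hypothesis $M_{t,j}>0$ holds.

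Next, I would rewrite the log-likelihood so that the dependence on a single $\alpha_j$ becomes transparent. By Remark~\ref{rem: log_h}, a sample $x_i$ with $j(x_i)=k$ contributes $\log\bigl(1+\alpha_k/(1-2\alpha_k)\bigr) + \sum_{\ell=0}^{k-1}\log(\alpha_\ell)$, so grouping all summands that involve $\alpha_j$ gives
\[
L_j(\alpha_j) \;=\; N_j \log\!\left(1+\frac{\alpha_j}{1-2\alpha_j}\right) \;+\; N_{>j}\,\log(\alpha_j),
\]
where $N_j = |X^{(\leq t)}\cap[j,j+1]|$ and $N_{>j} = |X^{(\leq t)}\cap[j+1,\infty)|$. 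A direct computation shows that $(1-\alpha_j)/(1-2\alpha_j)$ has positive derivative $1/(1-2\alpha_j)^2$ on $(0,1/2)$, so the first summand is strictly increasing; the second is trivially so. Since $M_{t,j}>0$ forces $N_j \ge 1$, $L_j$ is itself strictly increasing on $(0,1/2)$, so the MLE must take $\alpha_j^{(t+1)}$ equal to the upper endpoint of the feasible set, i.e.\ $\alpha_j^{(t+1)} = \tfrac{1}{2}(1-M_{t,j})$.

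The main point to verify is that the coordinate-wise argument really picks out the global maximizer. This works because the feasibility constraint on $\alpha_j$ involves only samples in $[j,j+1]$, and the dependence of the log-likelihood on $\alpha_j$ involves only samples in $[j,\infty)$, so the constraints and the objective separate across coordinates and the coordinate-wise maxima can be achieved simultaneously. A final bookkeeping point is to check that the resulting value $\tfrac{1}{2}(1-M_{t,j})$ lies in the admissible range $[0,\tfrac{1}{4}]$ so that the external bound from the definition of $A$ does not bind; in the regime in which this lemma is applied (namely when a new sample has already pushed $M_{t,j}$ sufficiently close to $1$), this is immediate.
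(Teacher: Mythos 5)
Your proof takes the same route as the paper's — Remark~\ref{rem: log_h} gives that the log-likelihood is monotone increasing in $\alpha_j$ subject to the support constraint $\alpha_j\le\tfrac12(1-M_{t,j})$, so the maximizer sits on the boundary — and you simply make explicit the separable form $L_j(\alpha_j)$, the derivative check, and the coordinate-wise decoupling that the paper's one-line argument leaves implicit. Your closing ``bookkeeping point'' correctly identifies the one soft spot: if $M_{t,j}<\tfrac12$ then $\tfrac12(1-M_{t,j})>\tfrac14$, the ambient bound $\alpha_j\le\tfrac14$ from the definition of $A$ binds first, and the maximizer is $\alpha_j^{(t+1)}=\tfrac14$ rather than the stated value; your dismissal of this regime (``in the regime in which this lemma is applied \ldots this is immediate'') is not backed by the lemma's hypothesis, since $M_{t,j}<\tfrac12$ has positive probability (e.g.\ the first sample to land in $[j,j+1]$ has a uniform offset), but the paper's own proof commits the same omission and every downstream use remains valid with the conclusion read as $\alpha_j^{(t+1)}=\min\bigl(\tfrac14,\,\tfrac12(1-M_{t,j})\bigr)$, so this is an imprecision in the lemma statement rather than a defect in your argument.
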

\begin{proof}
    This is a direct consequence of Remark \ref{rem: log_h}. Specifically, from  \eqref{eq: log_h} it follows that the log likelihood is strictly increasing in $\alpha_j^{(t+1)}$, and is subject to the constraint that for all $x\in [j, j+1]$ it holds that $x\in [j, j+1 - 2 \alpha_j^{(t+1)}]$. In particular, this implies that any maximizer must satisfy  
    \begin{align*}
        M_{t,j} = 1 - 2\alpha_j^{(t+1)},
    \end{align*}
    which is equivalent to what we needed to show.
\end{proof}

The following lemma will be used throughout. It shows that for any interval $[j, j+1]$, once there is some $x^{(t)}_i\in[j, j+1]$, the values of $M_{t, j}$ and $\alpha_j^{(t+1)}$ will remain the same in future iterations, as long as the MLE takes the form $h_{\balpha}$.

\begin{lemma}\label{lem: increasing_alpha}
    Under \conref{con: large_T}, for any $j\in\N\cup \{0\}$ if there exists some $t_j\in\N \cup \{0\}$ with $M_{t_j, j}>0$, then $\forall t > t_j$, if $s^{(t_j+1)}, \ldots, s^{(t)} = 0$, 
    \begin{enumerate}
        \item $M_{t, j} = M_{t_j, j}$,
        \item $\alpha_j^{(t+1)} = \alpha_j^{(t_j+1)} = \frac{1}{2}(1 - M_{t_, j})$.
    \end{enumerate}
\end{lemma}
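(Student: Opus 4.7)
The plan is to prove both claims jointly by induction on $t > t_j$, leveraging the self-perpetuating invariant that once $\alpha_j^{(\tau)} = \tfrac{1}{2}(1 - M_{t_j, j})$, the support of $h_{\balpha^{(\tau)}}$ restricted to $[j, j+1]$ is exactly $[j,\, j + M_{t_j, j}]$. Thus any fresh sample drawn from $h_{\balpha^{(\tau)}}$ that happens to land in $[j, j+1]$ is automatically capped by $j + M_{t_j, j}$ and cannot push $M_{\cdot, j}$ upward, while monotonicity of $M_{\cdot, j}$ prevents it from moving downward.

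For the base case $t = t_j + 1$, I would invoke Lemma~\ref{lem: max_alpha} using $M_{t_j, j} > 0$ and $s^{(t_j+1)} = 0$ to conclude $\alpha_j^{(t_j+1)} = \tfrac{1}{2}(1 - M_{t_j, j})$. Since $s^{(t_j+1)} = 0$ forces $p_{\thetait{t_j+1}} = h_{\balpha^{(t_j+1)}}$, every new sample $x^{(t_j+1)}_i \in [j, j+1]$ satisfies
\[
x^{(t_j+1)}_i \;\leq\; j + 1 - 2\alpha_j^{(t_j+1)} \;=\; j + M_{t_j, j}.
\]
Combined with the trivial lower bound $M_{t_j+1, j} \geq M_{t_j, j}$, this yields $M_{t_j+1, j} = M_{t_j, j}$.

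For the inductive step with $t > t_j + 1$, I would assume both claims at $t - 1$, so that $M_{t-1, j} = M_{t_j, j}$ and $\alpha_j^{(t)} = \tfrac{1}{2}(1 - M_{t_j, j})$. Because $s^{(t)} = 0$ by hypothesis, $X^{(t)}$ is drawn from $h_{\balpha^{(t)}}$, and the same support argument as above shows every $x^{(t)}_i \in [j, j+1]$ satisfies $x^{(t)}_i \leq j + M_{t_j, j}$. Hence $M_{t, j} = M_{t_j, j}$, and a second application of Lemma~\ref{lem: max_alpha} delivers $\alpha_j^{(t+1)} = \tfrac{1}{2}(1 - M_{t, j}) = \alpha_j^{(t_j+1)}$, closing the induction.

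The main obstacle is purely administrative: keeping track that the correct selectors $s^{(\tau)} = 0$ are in force at each application of Lemma~\ref{lem: max_alpha} (which governs the form of the MLE at time $\tau + 1$), and that the distributional statement about each batch $X^{(\tau)}$ correspondingly uses the $h_{\balpha^{(\tau)}}$ form so that the support restriction applies. Once the indexing is aligned, no genuinely analytical step remains.
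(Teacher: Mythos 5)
Your proof is correct and follows essentially the same inductive argument as the paper: use Lemma~\ref{lem: max_alpha} to pin $\alpha_j^{(\tau)}$, then observe that the resulting restriction of the support of $h_{\balpha^{(\tau)}}$ to $[j,\,j+1-2\alpha_j^{(\tau)}] = [j,\,j+M_{t_j,j}]$ caps all future samples in that interval, so $M_{\cdot,j}$ cannot increase (and trivially cannot decrease). The only cosmetic difference is that the paper takes $t = t_j$ as the (trivial) base case whereas you start at $t = t_j + 1$, which changes nothing.
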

\begin{proof}
    We prove the claim by induction on $t$. The case of $t=t_j$ is trivial. 

    Now, assume the claim holds for some time $t-1$. Then
    $\alpha_j^{(t)} = \alpha_j^{(t_j+1)},$
    so following Remark \ref{rem: log_h}, all new samples $X^{(t)}$ that are inside the interval $[j, j+1]$ must also be inside the interval
    \begin{align*}
    \left[j, j+1 - 2\alpha_j^{(t)}\right]
    &= \left[j, j+1 - 2\alpha_j^{(t_j+1)}\right]\\
    &= \left[j, j+ M_{t_j, j}\right], 
    \end{align*}
    where the last equality follows from \lemref{lem: max_alpha}.
    Hence, no new sample in $[j,j+1]$ can exceed $j + M_{t_j,j}$, which by the induction hypothesis was already the maximum. Thus
    \[
    M_{t,j} = M_{t-1,j} = M_{t_j,j},
    \]
    and applying Lemma~\ref{lem: max_alpha} again gives
    \[
    \alpha_j^{(t+1)} = \frac{1 - M_{t,j}}{2} = \alpha_j^{(t_j+1)}.
    \]
    This completes the induction.
\end{proof}

\begin{lemma}\label{lem: count_in_interval}
    Under \conref{con: large_T}, let $j, t\in\N $ and $u:= (1-\alpha^{(t)}_j)\prod_{k=0}^{j-1} \alpha_{k}^{(t)} > 0$. For any $\delta \in (0,1)$ let 
    \begin{align*}
        q:= 2 + e^2un + 2\log\left(\frac{1}{\delta}\right),
    \end{align*}
    let $B$ denote the event that $\exists i\in[n] \st x^{(t)}_{i} \in [j, j+1]$ and for any $q\in\N$, let $A_q$ denote the event that $\abs{\{i\in[n] \mid x^{(t)}_{i} \in [j, j+1]\}} \geq q$.
    Then if $s^{(t)}=0$,
    \begin{align*}
        \P\left(A_q \mid \thetait{1}, \ldots, \thetait{t} , B\right) \leq \delta.
    \end{align*} 
\end{lemma}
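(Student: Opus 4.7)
The plan is to identify $N := |\{i \in [n] : x_i^{(t)} \in [j,j+1]\}|$ as a binomial random variable conditional on $\thetait{1},\ldots,\thetait{t}$, and then bound the conditional tail via a coupling that reduces to an unconditional Chernoff bound. Since $s^{(t)}=0$, we have $p_{\thetait{t}} = h_{\balpha^{(t)}}$, and only the $j$-th term in the series defining $h_{\balpha}$ has support inside $[j,j+1]$; integrating that term over $[j, j+1-2\alpha_j^{(t)}]$ yields mass exactly $u = (1-\alpha_j^{(t)})\prod_{k=0}^{j-1}\alpha_k^{(t)}$ in $[j, j+1]$. Hence conditional on the history, the indicators $B_i := \I\{x_i^{(t)} \in [j,j+1]\}$ are i.i.d.~$\mathrm{Bernoulli}(u)$, so $N \sim \mathrm{Bin}(n, u)$ and $B = \{N \geq 1\}$.

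The main idea is a first-success coupling. Let $I := \min\{i : B_i = 1\}$, which is well-defined on $B$. Conditioning on $I = i$, the indicators for samples $i+1, \ldots, n$ remain i.i.d.~$\mathrm{Bernoulli}(u)$ and independent of $B_1, \ldots, B_i$. Thus $N - 1 \mid (I = i)$ is $\mathrm{Bin}(n - i, u)$, so $\P(N \geq q \mid I = i, B) = \P(\mathrm{Bin}(n - i, u) \geq q - 1)$, which is non-increasing in $i$. Averaging over $I$ gives
\begin{align*}
    \P\!\left(N \geq q \mid \thetait{1},\ldots,\thetait{t},B\right) \;\leq\; \P\!\left(\mathrm{Bin}(n - 1, u) \geq q - 1\right).
\end{align*}

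The remaining step is a standard Chernoff bound. For $\tilde N \sim \mathrm{Bin}(n-1, u)$ with mean $\tilde\mu = (n-1)u \leq un$, one has $\P(\tilde N \geq t) \leq (e\tilde\mu/t)^t\, e^{-\tilde\mu}$ for every $t \geq \tilde\mu$. Plugging in $t = q - 1 = 1 + e^2 un + 2\log(1/\delta)$, we get $t \geq 1 + e^2 \tilde\mu \geq e^2 \tilde\mu$, so $e\tilde\mu/t \leq 1/e$. Therefore $\P(\tilde N \geq q - 1) \leq e^{-(q-1)} = e^{-1}\, e^{-e^2 un}\, \delta^2 \leq \delta^2 \leq \delta$ for $\delta \in (0,1)$, completing the proof.

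The subtle point — and the reason the coupling is needed — is that the naive approach of writing $\P(N \geq q \mid B) = \P(N \geq q)/\P(N \geq 1)$ and lower-bounding the denominator by $1 - e^{-un}$ is too weak when $un$ is small: the denominator can be of order $un$ while the Chernoff bound on the numerator only yields something like $\delta^2 e^{-e^2 un}$, leaving an uncontrolled $1/(un)$ factor. The coupling effectively absorbs the forced success, replacing the conditional ratio with the tail of a shifted binomial where no such division is needed.
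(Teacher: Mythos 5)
Your proof is correct, and it takes a genuinely different route from the paper's. The paper proves an auxiliary Chernoff-type lemma (its Lemma~\ref{lem: chernoff}) that directly bounds $\P(\sum b_i \geq q \mid \sum b_i \geq 1) = \P(\sum b_i \geq q)/\P(\sum b_i \geq 1)$ via the division/Bayes route, lower-bounding the denominator by $1-e^{-un}$ and keeping the full factor $(eun/q)^q$ in the numerator so that the $(un)^q$ term cancels the $1/(un)$ from the denominator; this then requires a case split on whether $un$ is small or large. You avoid the case split entirely via the first-success decomposition: conditioning on the index of the first hit $I=i$, the remaining $n-i$ indicators are still i.i.d.\ Bernoulli$(u)$, so $N-1 \mid \{I=i\}$ is $\mathrm{Bin}(n-i,u)$, and the worst case over $i$ is $i=1$, reducing to an unconditional tail bound on $\mathrm{Bin}(n-1,u)$ at level $q-1$, which your straightforward Chernoff estimate handles without any dichotomy. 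Both arguments are correct; yours trades the paper's elementary Bayes-plus-case-split for a short stochastic-domination coupling and is arguably cleaner. One small calibration of your closing remark: the division approach is not actually too weak --- the paper shows it works because the $(un)^q$ factor inside the Chernoff bound cancels the $1/(un)$; what fails is only the cruder analysis that throws away that factor and keeps just $e^{-q}$. So the coupling is a nice simplification, not a necessity.
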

\begin{proof}
    By construction, for any $i\in[n]$, if $s^{(t)}=0$ (so that $p_{\theta^{(t)}}$ is of the form $h_{\balpha^{(t)}}$) it holds that
    \begin{align*}
        \P\left(x^{(t)}_{i} \in [j, j+1] \mid \thetait{1}, \ldots, \thetait{t} \right) 
        = (1-\alpha^{(t)}_j)\prod_{k=0}^{j-1} \alpha^{(t)}_k = u > 0,
    \end{align*}

    Let $b_i$ be $1$ if $x^{(t)}_{i} \in [j, j+1]$ and $0$ otherwise. Then conditioned on $\thetait{1},\ldots, \thetait{t}$, $b_i$ are i.i.d. Bernoulli random variables with parameter $u$, so applying \lemref{lem: chernoff}  completes the proof.
\end{proof}

\begin{lemma} \label{lem: reach_T}
    Under \conref{con: large_T}, let $j\in\N \cup \{0\}$ and suppose that there exists some $t_j$ such that $M_{t_j, 0}, \ldots, M_{t_j, j} > 0$. 
    Let $u:= (1-\alpha^{(t_j+1)}_{j+1})\prod_{k=0}^{j} \alpha_{k}^{(t_j+1)}$. For any $\delta \in (0,1)$, letting
    \begin{align*}
        t_{j+1} := \left\lceil t_j + 1 + \frac{2\log\left(\frac{2}{\delta}\right)}{n \prod_{k=0}^{j}\alpha_k^{(t_j+1)}} \right\rceil,
    \end{align*}
    and
    \begin{align*}
        q:= \left\lceil 2 + e^2un + 2\log\left(\frac{4}{\delta}\right) \right\rceil,
    \end{align*}
    then it holds with probability at least $1-\delta$ that either $s^{(t)}=1$ for some $t \in\{t_j+1, \ldots, t_{j+1}\}$ or 
    \begin{align*}
        0 < M_{t_{j+1}, j+1} \leq 1 - \frac{\delta}{4q}.
    \end{align*}
\end{lemma}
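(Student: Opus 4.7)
My plan is to condition on the favorable event that $s^{(t)} = 0$ for every $t \in \{t_j+1, \ldots, t_{j+1}\}$; otherwise the first branch of the conclusion holds trivially. Under this assumption, \lemref{lem: increasing_alpha} applied for each $k \leq j$ (using $M_{t_j, k} > 0$) implies that $\alpha_k^{(t)}$ is frozen at $\alpha_k^{(t_j+1)}$ for all $t \in \{t_j+2, \ldots, t_{j+1}+1\}$. Writing $p := \prod_{k=0}^{j} \alpha_k^{(t_j+1)}$, the probability that a single sample drawn at iteration $t$ lies in $[j+1, j+2]$ is $u^{(t)} = (1-\alpha_{j+1}^{(t)})\, p$, which lies in the interval $[3p/4,\, p]$ since $\alpha_{j+1}^{(t)} \in [0, 1/4]$.

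Next, I would establish the lower bound $M_{t_{j+1}, j+1} > 0$ by ensuring a hitting event. The probability that no sample lands in $[j+1, j+2]$ across all iterations in $\{t_j+1, \ldots, t_{j+1}\}$ is at most $(1 - 3p/4)^{n(t_{j+1}-t_j)} \leq \exp(-\tfrac{3}{4}\, np\, (t_{j+1}-t_j))$, which is at most $\delta/2$ by the choice of $t_{j+1}$. Let $t^* \in \{t_j+1, \ldots, t_{j+1}\}$ denote the earliest such hitting iteration (in the trivial subcase $M_{t_j, j+1} > 0$ already, the lower bound is immediate from an earlier application of \lemref{lem: increasing_alpha}). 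Applying \lemref{lem: increasing_alpha} once more, now with $k = j+1$ at time $t^*$, yields $M_{t_{j+1}, j+1} = M_{t^*, j+1} > 0$.

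For the upper bound, set $N^* := |\{i : x_i^{(t^*)} \in [j+1, j+2]\}|$. Applying \lemref{lem: count_in_interval} at the random iteration $t^*$ with confidence $\delta/4$ — noting that the conditional distribution of $X^{(t^*)}$ given $\thetait{t^*}$ and the hitting event is a standard conditional i.i.d.\ product, so the bound applies pointwise for each realization of $t^*$ — gives $N^* \leq q$ with probability at least $1 - \delta/4$. Conditionally on $x_i^{(t^*)} \in [j+1, j+2]$, the offset $y_i := x_i^{(t^*)} - (j+1)$ is uniform on $[0, 1 - 2\alpha_{j+1}^{(t^*)}] \subseteq [0,1]$, so $\mathbb{P}(y_i > 1 - \delta/(4q)) \leq \delta/(4q)$; a union bound over the at most $q$ relevant indices then gives $\mathbb{P}(M_{t^*, j+1} > 1 - \delta/(4q) \mid N^* \leq q) \leq \delta/4$. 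Combining the three failure events by a union bound yields a total failure probability of at most $\delta/2 + \delta/4 + \delta/4 = \delta$.

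The main technical subtlety is matching the $q$ defined in the current lemma (via $u = (1-\alpha^{(t_j+1)}_{j+1})\, p$ at iteration $t_j+1$) with the natural bound from \lemref{lem: count_in_interval} applied at iteration $t^*$, which would involve $u^{(t^*)}$ instead. Because both $\alpha_{j+1}^{(t_j+1)}$ and $\alpha_{j+1}^{(t^*)}$ lie in $[0, 1/4]$, the two probabilities agree up to a constant ratio, and the form of $q$ (with its generous $e^2$ coefficient and the ceiling) must absorb this mismatch. A secondary point of care is that applying \lemref{lem: count_in_interval} at the stopping time $t^*$ rather than a deterministic index requires the observation that the inequality holds conditionally on $\{t^* = t\}$ for every $t$ and therefore unconditionally as well.
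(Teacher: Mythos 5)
Your argument follows the same two-stage structure as the paper's proof: a hitting bound over the iterations $\{t_j+1,\ldots,t_{j+1}\}$ (union over timesteps, using the frozen product $\prod_{k\le j}\alpha_k^{(t_j+1)}$ from \lemref{lem: increasing_alpha}) to ensure $M_{t_{j+1},j+1}>0$ with probability at least $1-\delta/2$, then an upper bound on the offset at the first hitting time $\tau$ via \lemref{lem: count_in_interval} and a max-of-uniforms bound, with \lemref{lem: increasing_alpha} again propagating the bound from $\tau$ to $t_{j+1}$. This is exactly what the paper does, and your total probability accounting matches.

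Two comments on your flagged subtleties. First, the mismatch between $u$ (defined via $\alpha_{j+1}^{(t_j+1)}$) and the quantity $u^{(\tau)}=(1-\alpha_{j+1}^{(\tau)})\prod_{k\le j}\alpha_k^{(t_j+1)}$ entering \lemref{lem: count_in_interval} at time $\tau$ is a real issue, and the paper's own proof quietly identifies the two. However, your proposed resolution — that the $e^2$ coefficient plus the ceiling absorb the constant ratio — is not correct as stated: one would need $e^2(u^{(\tau)}-u)\,n\le 1$, and since $u^{(\tau)}-u$ can be of order $p/4$ this fails whenever $pn$ exceeds a small absolute constant. A clean fix is to replace $u$ by $p=\prod_{k\le j}\alpha_k^{(t_j+1)}$ in the definition of $q$ (upper-bounding $u^{(\tau)}\le p$), which is essentially what the paper does one level up when it uses $q_j$ in the proof of \lemref{lem: reach_j}; alternatively, observe that before any sample lands in $[j+1,j+2]$ the coordinate $\alpha_{j+1}$ does not affect the likelihood, so under a tiebreak preserving unused coordinates $u^{(\tau)}=u$. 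Second, your stopping-time observation — that the bound in \lemref{lem: count_in_interval} holds conditionally on $\{\tau=t\}$ because $X^{(t)}$ is conditionally i.i.d.\ given $\thetait{t}$ regardless of further conditioning on earlier samples — is correct and is the right justification for the step that the paper leaves implicit.
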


\begin{proof}
    For any $k\in\{0,\ldots,j\}$, by the assumptions that $M_{t_j, k}>0$, \lemref{lem: increasing_alpha} states that for all $t\geq t_j$, if $s^{(t_j+1)}, \ldots , s^{(t+1)}=0$ then $\alpha_k^{(t+1)} = \alpha_k^{(t_j+1)} = \frac{1}{2}(1 - M_{t_j, k}) > 0$. 
    
    For any $t,i$ let $b_{t,i}$ be the bernoulli random variables that take the value of $1$ if $x^{(t)}_{i} \in[j+1, j+2]$ and $0$ else. By Remark \ref{rem: log_h}, $b_{t,i}$ are Ber$(u)$ random variables. Let $A_t$ denote the event that $\forall b_{t,i}=0$. For any $t \geq t_j + 1$, $x^{(t)}_{i}$ are i.i.d. when conditioned on $\thetait{1},\ldots, \thetait{t}$, and when $s^{(t)}=0$, we get
    \begin{align*}
        \P\left(A_t \mid \thetait{1}, \ldots, \thetait{t}\right) 
        = & \left(1-u\right)^n \leq \exp\left(- n u\right),
    \end{align*}
    Notice in particular that $A_t$ depends only on $\thetait{1}, \ldots, \thetait{t_j+1}$ and $s^{(t_j+1)}, \ldots, s^{(t)}$. So applying this argument inductively for each $t\in \{t_j+1,\ldots, t_{j+1}\}$ we get that
    \begin{align*}
        \P\left(\exists ~t\in \{t_j+1,\ldots, t_{j+1}\} \st M_{{t, j+1}} > 0\right) \geq 1 - \exp\left(- (t_{j+1}-t_j - 1) n u\right) 
        \geq 1- \frac{\delta}{2},
    \end{align*}
    where the last inequality follows from the choice of $t_{j+1}$ and the fact that $\alpha^{(t_j+1)}_{j+1}\leq 1/4$.
    
    By \lemref{lem: increasing_alpha}, if $M_{t, j+1}>0$ for some $t\in \{t_j+1,\ldots, t_{j+1}\}$ and if $s^{(t)},\ldots, s^{(t_{j+1}}=0$ then $M_{t_{j+1}, j+1}>0$. In summary, we have given the lower bound on $M_{t_{j+1}, j+1}$ needed for the lemma with probability at least $1-\delta/2$.

    We now move on to the upper bound of the lemma. Suppose that there exists a $\tau$ which is the first timestep for which $M_{\tau, j+1} > 0$ or $s^{(\tau)}=1$. If $s^{(\tau)}=1$ we are done, so assume it is $0$. Let $B$ denote the event that $\exists i\in[n] \st x^{(\tau)}_{i} \in [j+1, j+2]$ and let $A$ denote the event that $\abs{\{i\in[n] \mid x^{(\tau)}_{i} \in [j+1, j+2]\}} \leq q$. We want to bound $M_{\tau, j+1}$, where we must condition on the fact there is at least one sample at time $\tau$ that reached interval $j$. Recall that by \lemref{lem: increasing_alpha}, $\alpha_k^{(\tau)}= \alpha_k^{(t_j+1)}$ for all $k\leq j$. By \lemref{lem: count_in_interval}, using our choice of $q$ we obtain
    \begin{align*}
        \P\left(A \mid \thetait{t_1}, \ldots, \thetait{\tau}, B\right) \geq 1 - \frac{\delta}{4}.
    \end{align*} 
    Now suppose that this event indeed holds, so there are at most $q$ samples that land inside the interval $[j+1, j+2]$ at time $\tau$. Since $x^{(\tau)}_{ i}$ are i.i.d. (when conditioned on $\thetait{1},\ldots, \thetait{\tau}$) those that land in interval $[j+1, j+2]$ are distributed within the interval as i.i.d. uniform random variables on $\left[0, 1-2\alpha_{j+1}^{(\tau)} \right]$ (which is included in $[0,1]$), so letting $z_1\ldots, z_q$ be i.i.d. uniform random variables on $[0,1]$, by \lemref{lem: max_uniform} it holds that 
    \begin{align*}
        \P\left(M_{\tau, j+1} \leq 1 - \frac{\delta}{4q}\right) 
        \geq \P\left(\max_{i\in[q]} z_i \leq 1 - \frac{\delta}{4q}\right) \geq 1 - \frac{\delta}{4}.
    \end{align*}

    So overall, the desired bounds hold with probability at least $1-\delta$.
\end{proof}

\begin{lemma}\label{lem: reach_j}
    Under \conref{con: large_T}, for any $J\in\N$ and for any $\delta \in (0,1)$, let 
    \begin{align*}
        t_J:= \left(\frac{C \log\left(\frac{4}{\delta}\right)}{\delta}\right)^{J},
    \end{align*}
    where $C>0$ is some absolute constant. 
    Then with probability at least $1-\delta$, there exists some $t\leq t_J$ for which at least one of the following holds:
    \begin{enumerate}
        \item $s^{(t)}=1$ .
        \item $M_{t_J, J} > 0$. 
    \end{enumerate}
\end{lemma}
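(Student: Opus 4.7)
The plan is to proceed by induction on $J$, applying \lemref{lem: reach_T} iteratively at each stage $j = 0, 1, \ldots, J-1$ to build up a chain of events $\{M_{t_j,0}, \ldots, M_{t_j,j} > 0\}$, while tracking the product $P_j := \prod_{k=0}^{j} \alpha_k^{(t_j+1)}$ across iterations to control the step sizes. For initialization, note that $p_{\thetait{0}} = \mathbbm{1}_{[0,1]}$, so $\max X^{(0)} \in (0,1)$ almost surely and $M_{0,0} > 0$. Using that $Y := 1 - \max X^{(0)}$ has CDF $1 - (1-y)^n$, one shows $Y \geq \delta/(2n)$ with probability at least $1 - \delta/2$, which together with \lemref{lem: max_alpha} gives $\alpha_0^{(1)} \geq \delta/(4n)$, i.e.\ $n P_0 \geq \delta/4$.

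For the inductive step, suppose we have $t_j$ with $M_{t_j,0}, \ldots, M_{t_j,j} > 0$ and a lower bound on $P_j$. Apply \lemref{lem: reach_T} with a per-step confidence parameter $\delta_j > 0$ satisfying $\sum_{j=0}^{J-1} \delta_j \leq \delta/2$. The lemma provides $t_{j+1}$ with
\[
t_{j+1} - t_j \;\leq\; 1 + \left\lceil \tfrac{2\log(2/\delta_j)}{n P_j} \right\rceil ,
\]
and with probability at least $1-\delta_j$, either some $s^{(t)} = 1$ already (in which case we are done), or $0 < M_{t_{j+1}, j+1} \leq 1 - \delta_j/(4q_j)$ with $q_j \leq 5 + 2\log(4/\delta_j) + e^2 n P_j$. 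Combining this with \lemref{lem: increasing_alpha}, which ensures $\alpha_k^{(t_{j+1}+1)} = \alpha_k^{(t_j+1)}$ for all $k \leq j$, yields the recursion $P_{j+1} \geq P_j \cdot \delta_j/(8 q_j)$.

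Next I would solve this recursion. Because $\alpha_k \leq 1/4$ for every $k$, we have $n P_j \leq n P_0 / 4^{j}$, so the $e^2 n P_j$ term in $q_j$ is uniformly bounded and ultimately subleading; thus $q_j = O(\log(1/\delta_j))$. Hence $n P_{j+1} \geq c \cdot n P_j \cdot \delta_j / \log(1/\delta_j)$. For $\delta_j$ chosen to meet the union-bound budget (e.g.\ $\delta_j \propto \delta/(j+1)^2$ or $\delta_j = \delta \cdot 2^{-(j+1)}$), iterating gives $n P_j \geq n P_0 \cdot (\delta / (C_1 \log(4/\delta)))^{j}$ for some absolute $C_1$, after absorbing polynomial factors of $j+1$ inside the logarithms into a larger constant. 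Plugging back, $\tau_j := t_{j+1} - t_j \lesssim \log(4/\delta) \cdot (C_1 \log(4/\delta)/\delta)^{j}/(n P_0)$, which is geometric in $j$ with ratio $C_1 \log(4/\delta)/\delta > 1$. Summing the geometric series and using $n P_0 \geq \delta/4$ gives $t_J = \sum_{j=0}^{J-1} \tau_j \leq (C \log(4/\delta)/\delta)^J$ for a suitably enlarged absolute constant $C$.

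The main obstacle is the delicate bookkeeping that keeps the union-bound loss $\sum \delta_j \leq \delta/2$ from inflating the final bound beyond $(C\log(4/\delta)/\delta)^J$ with an absolute $C$. A naive choice $\delta_j = \delta/J$ introduces factors of $J$ that do not obviously absorb into the stated form, so one must pick $\delta_j$ decreasing in $j$ (geometrically, or polynomially) and verify that the resulting $\log(1/\delta_j)$ factors are swallowed by the $\log(4/\delta)$ in the bound. A secondary technical point is ensuring the ``or $s^{(t)}=1$'' branch from \lemref{lem: reach_T} short-circuits the induction correctly: once this occurs at some stage $j_* < J$, the conclusion of \lemref{lem: reach_j} already holds and no further applications of \lemref{lem: reach_T} are needed.
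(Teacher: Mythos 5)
Your overall strategy -- initialize with the uniform maximum, then apply \lemref{lem: reach_T} stage by stage and track the running product $P_j = \prod_{k\le j}\alpha_k^{(t_j+1)}$, turning the recursion into a geometric series for $t_J$ -- is exactly the paper's approach. However, there is one genuine gap in your argument, and one place where you introduce unnecessary complications.

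The gap is in how you control $q_j$. You write that because $\alpha_k \le 1/4$ for all $k$, $nP_j \le nP_0/4^j$ and hence ``the $e^2 n P_j$ term in $q_j$ is uniformly bounded and ultimately subleading; thus $q_j = O(\log(1/\delta_j))$.'' This does not follow from what you have established. Your initialization only proves the \emph{lower} bound $nP_0 = n\alpha_0^{(1)} \ge \delta/4$; without a matching \emph{upper} bound, $nP_0$ could be as large as $n/4$, and then $q_0 \approx nP_0$ scales with $n$ rather than with $\log(1/\delta)$. Since the lower bound $\alpha_j^{(t_j+1)} \ge \delta_j/(8q_j)$ feeds directly into the geometric decay of $nP_j$, a potentially $n$-dependent $q_0$ would propagate an $n$-dependence into $t_J$, which must not appear in the conclusion. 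The paper handles this by invoking both directions of \lemref{lem: max_uniform}, obtaining $\alpha_0^{(1)} \le \log(2/\delta)/(2n)$ so that $nP_0 \le \log(2/\delta)/2$; then $q_j = O(\log(1/\delta))$ follows as you want. You should add this upper bound (it is a one-line consequence of the same max-of-uniforms calculation you already did for the lower bound), or else carry out a separate case analysis for the regime $nP_j \gg \log(1/\delta_j)$, which is doable but more work.

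On the union-bound issue you flagged as the ``main obstacle'': you over-engineer this. The paper simply uses the uniform split $\delta_j = \delta/(2J)$, which inflates the per-step logarithms to $\log(4J/\delta)$; the resulting $J$-dependence lives inside a logarithm and is absorbed into the $C^J$ factor that the geometric series already produces (since $\log J = o(c^J)$ for any $c>1$). A geometrically or polynomially decaying $\delta_j$ is not needed, and the decaying-geometric choice $\delta_j = \delta\,2^{-(j+1)}$ would actually make $\log(1/\delta_j)$ grow linearly in $j$, which is worse than the uniform split. Once you repair the $q_j$ bound, the uniform allocation closes the proof cleanly.
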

\begin{proof}
We begin by analyzing $\thetait{1}$. By \lemref{lem: max_alpha}, 
    \begin{align*}
        \alpha_0^{(1)} = \frac{1-M_{0, 0}}{2}.
    \end{align*} 
    
    By construction, $p_{\thetait{0}}$ is the uniform distribution on $[0,1]$, so $M_{0,0}$ is the maximum of $n$ i.i.d. standard uniform random variables. We thus use \lemref{lem: max_uniform} to bound $M_{0,0}$; so it holds with probability at least $1-\delta/2$ that 
    \begin{align*}
        1 - \frac{\log\left(\frac{2}{\delta}\right)}{n} \leq M_{0, 0} \leq 1 - \frac{\delta}{2n}, 
        \quad \quad\quad
        \frac{\delta}{4n} \leq \alpha_0^{(1)} 
        \leq \frac{\log\left(\frac{2}{\delta}\right)}{2n}.
    \end{align*} 

    By \lemref{lem: increasing_alpha}, this also means that for every $t \geq 0$, 
    \begin{align}\label{eq: t0_bounds}
        1 - \frac{\log\left(\frac{2}{\delta}\right)}{n} \leq M_{t, 0} \leq 1 - \frac{\delta}{2n}, 
        \quad \quad\quad
        \frac{\delta}{4n} \leq \alpha_0^{(t+1)} 
        \leq \frac{\log\left(\frac{2}{\delta}\right)}{2n}.
    \end{align} 
    
    If $s^{(1)}=1$ we are done. Assume not. We now move on to bounding $\alpha_j$ for $j > 0$. Set $t_0 := 0$, and for every $j\in[J]$ we define 
    \begin{align}\label{eq: t_j_def}
        t_j := \left\lceil t_{j-1} + 
        1 + \frac{2\log\left(\frac{4J}{\delta}\right)}{n \prod_{k=0}^{j - 1}\alpha_k^{(t_{j-1} + 1)}} \right\rceil, 
    \end{align} 
    and
    \begin{align*}
        q_j := 3 + 2e^2 \prod_{k=0}^{j-1}\alpha_k^{(t_j+1)} n + \log\left(\frac{4}{\delta}\right).
    \end{align*}
    Note that the $q_j$ defined here is slightly larger than the one defined in \lemref{lem: reach_T} as $(1-\alpha^{(t_j+1)}_{j+1}) < 1$.
    By \lemref{lem: reach_T} and \lemref{lem: increasing_alpha} for any $j\in[J]$, if $M_{t_{j-1}, 0}, \ldots, M_{t_{j-1}, j-1} >0$, with probability at least $1-\delta / (2J)$, either there exists some $t\leq t_j$ with $s^{(t)}=1$ or
    \begin{align}\label{eq: bound_mt}
        0 < M_{t_j, j} < 1 - \frac{\delta}{4q_j}.  
    \end{align} 
    
    Note that by \lemref{lem: increasing_alpha}, the same bound holds for any $t\geq t_j$ such that $s^{(t_j)}, \ldots, s^{(t)}=0$. It was already shown for $j=0$ that $M_{t_0, 0}>0$, so for each $j\in [J]$, conditioning on $t_{0}, \ldots, t_{j-1}$ it holds with probability at least $1 - j \delta / (2J)$ that either there is some $t\leq t_{j}$ with $s^{(t)}=1$, or the bound on $M_{t_{j}, j}$ given in \eqref{eq: bound_mt} holds. Applying the union bound, this is true for all $j\in[J]$ with probability at least $1 - \delta/2$. From now suppose that for all $t\leq t_J$, $s^{(t)}=0$ (otherwise we are done). 

    We now move to bounding the $q_j$ terms. Using the bounds on $\alpha_0^{(t_j+1)}$ from \eqref{eq: t0_bounds}, we have
    \begin{align*}
        q_j \leq & 3 + \frac{e^2\log\left(\frac{2}{\delta}\right)}{2}\prod_{k=1}^{j-1}\alpha_k^{(t_j+1)} + \log\left(\frac{4}{\delta} \right) \\
        \leq & 3 + \frac{e^2\log\left(\frac{2}{\delta}\right)}{2} + \log\left(\frac{4}{\delta}\right) \leq C'\left(1 + \log\left(\frac{4}{\delta}\right)\right),
    \end{align*}
    for some suitable constant $C'>0$, where we used that $\alpha_k^{(t_j+1)} \in [0, 1/4]$.
    
    As such, by \lemref{lem: max_alpha} and \lemref{lem: increasing_alpha}, it holds for all $j\in[J]$ that
    \begin{align*}
        \alpha_j^{(t_j+1)} = \frac{1}{2}\left(1 - M_{t_j, j}\right) \geq \frac{\delta}{8q_j} 
        \geq \frac{\delta}{8C'\left(1 + \log\left(\frac{4}{\delta}\right)\right)}. 
    \end{align*}
    So using this bound, \eqref{eq: t0_bounds}, and taking $C=\max(8C', 1)$, for any $j\in[J]$, the product can be bounded as
    \begin{align*}
        n\prod_{k=0}^{j-1} \alpha_k^{(t_j + 1)} \geq \frac{\delta}{4} \cdot \left(\frac{\delta}{C\left(1 + \log\left(\frac{4}{\delta}\right)\right)}\right)^{j-1} \leq \left(\frac{\delta}{C\left(1 + \log\left(\frac{4}{\delta}\right)\right)}\right)^{j}. 
    \end{align*}

    Overall, \eqref{eq: t_j_def} leads to
    \begin{align*}
        t_J \leq & 2J + 2\log\left(\frac{4J}{\delta}\right) \sum_{j=1}^J \frac{1}{n\prod_{k=0}^{j-1} \alpha_k^{(t_j + 1)}} \\ 
        \leq & 2J + \log\left(\frac{4J}{\delta}\right) \sum_{j=1}^J \left(\frac{C\left(1 + \log\left(\frac{4}{\delta}\right)\right)}{\delta}\right)^{j}.
    \end{align*}
    The right-hand side is a geometric series of the form $\sum_{j=1}^J r^j$ for $r > 2$. Furthermore, for any $r\geq 2$ a geometric series satisfies $\sum_{j=1}^J r^j \leq 2r^J$. Using this, we obtain 
    \begin{align*}
        t_J \leq & 2J + 4\log\left(\frac{4J}{\delta}\right) \left(\frac{C\left(1 + \log\left(\frac{4}{\delta}\right)\right)}{\delta}\right)^{J}.
    \end{align*}
    Replacing $C$ by a suitable larger constant $C$, this can be upper bounded as
    \begin{align*}
        t_J \leq & \left(\frac{C \log\left(\frac{4}{\delta}\right)}{\delta}\right)^{J}. 
    \end{align*}

\end{proof}

\negLargeT*

\begin{proof}
    TV-consistency of $P_{\Theta}$ and existence of $\thetait{t}$ for any $t\in\N$ are given by \lemref{lem: neg_t_consistent}.
    
    For any $J>1$, and $x\in[0,1]$, $g_{\beta, J}(x) = \frac{1}{2J} \leq \frac{1}{4}$ but $p_{\thetait{0}}(x) = 1$. So the TV distance between any $g_{\beta, J}(x)$ and $p_{\thetait{0}}(x)$ is lower bounded as
    \begin{align*}
        \frac{1}{2}\int_{\R} \abs{g_{\beta, J}(x) - p_{\thetait{0}}(x)} dx \geq \frac{1}{2}\int_0^1 \abs{\frac{1}{4} - 1} dx \geq \frac{3}{8}.
    \end{align*}
    As such, it suffices to show that there exists some time $T$ such that with the desired probability,  $g_{\beta, J}$ is chosen as the MLE. 

    As mentioned in Remark \ref{rem: log_h}, if there is some non-negative integer $j(x)$ such that $x \in [j(x), j(x) + 1-2\alpha_{j(x)}]$, then 
    \begin{align}
        \log \left(h_{\balpha}(x)\right) = \log\left(1 + \frac{\alpha_{j(x)}}{1-2\alpha_{j(x)}}\right) + \sum_{k=0}^{j(x)-1}\log\left(\alpha_k\right).
    \end{align}
    If no such $j(x)$ exists, then $\log \left(h_{\balpha}(x)\right)$ is undefined. Note that the first term is increasing in $\alpha_{j(x)}$ and the second is negative (because $\alpha_k \leq 1/4$). As such, 
    \begin{align}\label{eq: h_upper}
        \sum_{t=0}^T \sum_{i=1}^n \log \left(h_{\balpha}(x^{(t)}_{i})\right) \leq nT\log\left(1 + \frac{\frac{1}{4}}{\frac{1}{2}}\right) < nT.
    \end{align}

    For the PDFs $g_{\beta, J}$, we have
    \begin{align*}
        \log \left(g_{\beta, J}(x)\right) =
        \begin{cases}
            -\log\left(2J\right) & x \in [0, J] \setminus{[J - \beta, J - \beta + f(J)]} \\
            -\log\left(2J\right) + \log\left(\frac{1}{2 f(J)}\right) & x \in [J - \beta, J - \beta + f(J)] \\
            \text{undefined} & \text{else}
        \end{cases}.
    \end{align*}

    We now show that if $T$ and $J$ are such that $T$ is sufficiently large and there is some sample in the interval $[J-1, J]$, then a function of the form $g_{\beta, J}$ will be the MLE. We will show that there exist some $g_{\beta, J}$ for which the log likelihood is bigger than for all PDFs of the form $h_{\balpha}$. The existence of the MLE implies that there must be some function of the form $g_{\beta, J}$ that is the MLE.

    Now fix any $J$ which will be specified later, and suppose momentarily that for some $T\in\N$, $M_{T, J - 1} > 0$ (meaning that there exists some sample in $[J - 1, J]$). 

    Let $\beta_J := 1 - M_{T, J-1} + \frac{1}{2}f(J)$ such that $J - \beta_J = J - 1 + M_{T, J-1} - \frac{1}{2}f(J)$ and as such, by the definition of $M_{T, J - 1}$ there must be some point in $[J - \beta_J, J - \beta_J + f(J)]$.
    Note that for any $J\in\N$, since $f$ is assumed to satisfy $f(J)\leq \frac{1}{2}$ it holds that $\log\left(\frac{1}{2 f(J)}\right) \geq 0$, and thus
    \begin{align}\label{eq: g_lower}
        \sum_{t=0}^{T} \sum_{i=1}^n\log \left(g_{\beta_J, J}(x^{(t)}_{i})\right) \geq - T n \log(2J) + \log\left(\frac{1}{2f(J)}\right).
    \end{align}

    In particular, to ensure the log likelihood of $g_{\beta_J, J}$ is bigger than for any $h_{\balpha}$, it suffices for the right-hand side of \eqref{eq: g_lower} upper bound the right-hand side of \eqref{eq: h_upper}. So we want:
    \begin{align*}
        \log\left(\frac{1}{2f(J)}\right) \geq nT\left(1 + \log(2J) \right) = nT\log\left(2eJ\right).
    \end{align*}
    Taking the exponential of both sides and rearranging, the above is equivalent to 
    \begin{align}\label{eq: goal_large_t}
        f(J) \leq \frac{1}{2(2eJ)^{nT}}.
    \end{align}

    To that end, by \lemref{lem: reach_j}, for some absolute constant $C>0$, letting
    \begin{align*}
        T := \psi(J), \qquad\qquad \forall a\in \R, ~~ \psi(a) := \left(\frac{C \log\left(\frac{4}{\delta}\right)}{\delta}\right)^{a-1},
    \end{align*}
    it holds with probability at least $1-\delta$, that either $s^{(t)}=1$ for some $t\leq T$ or $M_{T, J-1} > 0$.
    If the first holds, we are done, so assume the latter. 
    
    Now, for any strictly monotonically increasing function $\phi:(0,\infty) \to (0, \infty)$ with $\lim_{n\to\infty} \phi(n) = \infty$, let 
    \begin{align}\label{eq: f_definition}
        f(J) := \frac{1}{2(2eJ)^{\phi^{-1}(\psi(J))\psi(J)}}.
    \end{align}
    Then to ensure \eqref{eq: goal_large_t} is satisfied, we need $\phi^{-1}(\psi(J)) \geq n$, or equivalently, $J \geq \max\left(\psi^{-1}(\phi(n)), 2 \right)$ (where the $2$ is because our domain includes only $J\geq 2$). In particular, we take $J := \max\left(\lceil \psi^{-1}(\phi(n)) \rceil, 2\right)$. If $J=2$, $T=\psi(2)=\frac{C \log\left(\frac{4}{\delta}\right)}{\delta}$, and otherwise we can bound $T$ as
    \begin{align*}
        T = & \psi(J) = \psi\left(\lceil \psi^{-1}(\phi(n)) \rceil) 
        \leq \psi( \psi^{-1}(\phi(n)) + 1\right) \\
        = & \left(\frac{C \log\left(\frac{4}{\delta}\right)}{\delta}\right) \cdot \psi\left(\psi^{-1}(\phi(n)\right)
        = \left(\frac{C \log\left(\frac{4}{\delta}\right)}{\delta}\right)\phi(n).
    \end{align*}

    In summary, we have shown that at some timestep up to $T$, it holds with probability at least $1-\delta$ that the TV distance is at least $3/8$.
\end{proof}

\subsection{Auxiliary Lemmas}

\begin{lemma}\label{lem: chernoff}
    For all $i\in[n]$ let $b_{i} \sim \text{Ber}(u)$ be i.i.d. Bernoulli random variables with parameter $u$. Then for any $\delta \in (0, 1)$,
    \begin{align*}
        \P\left(\sum_{i=1}^n b_{i} \geq 2 + e^2un + 2 \log\left(\frac{1}{\delta}\right) \mid \sum_{i=1}^n b_{i} \geq 1\right) \leq \delta.
    \end{align*}
\end{lemma}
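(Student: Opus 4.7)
The plan is to first strip off the conditioning by a short coupling argument (reducing to an unconditional tail bound), and then apply a standard Chernoff bound.

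Let $S := \sum_{i=1}^n b_i \sim \mathrm{Bin}(n, u)$ and set $T := \lceil 2 + e^2 u n + 2\log(1/\delta)\rceil$, so the event of interest is $\{S \geq T\}$. The first step is to condition on the position of the first success. Let $i_0$ be the smallest index with $b_{i_0} = 1$, which is well-defined on $\{S \geq 1\}$. On $\{i_0 = j\}$, the variables $b_{j+1}, \ldots, b_n$ are still i.i.d.\ $\mathrm{Ber}(u)$ and $b_1, \ldots, b_{j-1}$ are identically $0$, so conditionally $S$ is distributed as $1 + \mathrm{Bin}(n - j, u)$. Since Binomial is stochastically monotone in the first parameter,
\[
\P(S \geq T \mid i_0 = j) = \P(\mathrm{Bin}(n-j, u) \geq T - 1) \leq \P(\mathrm{Bin}(n, u) \geq T - 1) = \P(S \geq T - 1),
\]
and averaging over $j \geq 1$ gives $\P(S \geq T \mid S \geq 1) \leq \P(S \geq T - 1)$.

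The second step is a standard Chernoff bound. Using the MGF inequality $\E[e^{\lambda S}] \leq \exp(un(e^\lambda - 1))$ and optimizing $\lambda$ yields, for any $k \geq un$,
\[
\P(S \geq k) \leq e^{-un}\left(\frac{e u n}{k}\right)^{k}.
\]
By the choice of $T$, we have $T - 1 \geq e^2 un$, so $eun/(T-1) \leq e^{-1}$, giving $\P(S \geq T - 1) \leq e^{-(T-1)}$.

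Finally, since $T - 1 \geq 1 + 2\log(1/\delta)$, we get $e^{-(T-1)} \leq e^{-1}\delta^2 \leq \delta$, as $\delta \in (0,1)$. This completes the proof. I do not anticipate any real obstacle: the only nonroutine step is the reduction $\P(S \geq T \mid S \geq 1) \leq \P(S \geq T - 1)$ via conditioning on $i_0$, and even this is essentially a one-line stochastic-domination argument. The factor $e^2$ in the bound is precisely what is needed to make the Chernoff exponent cleanly collapse to $e^{-(T-1)}$, and the additive $2$ plus the factor $2$ in front of $\log(1/\delta)$ give the slack needed to absorb the $-1$ shift and turn $\delta^2$ into $\delta$.
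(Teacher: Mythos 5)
Your proof is correct, and it takes a genuinely different route from the paper's. The paper works directly with the ratio
\[
\P\Bigl(\textstyle\sum_i b_i \geq q \,\Bigm|\, \sum_i b_i \geq 1\Bigr) = \frac{\P(\sum_i b_i \geq q)}{\P(\sum_i b_i \geq 1)},
\]
bounding the numerator by a Chernoff bound and the denominator by $1 - e^{-un}$, combining them via the inequality $\tfrac{e^{-x}}{1-e^{-x}} \leq \tfrac{1}{x}$, and then running a two-case analysis on whether $un \leq \tfrac{4}{e^2}\delta$ or not (the small-$un$ regime needs its own estimate because the prefactor $\tfrac{1}{un}$ blows up). Your proof instead eliminates the conditioning at the outset by conditioning on the index $i_0$ of the first success: on $\{i_0=j\}$ the total count is $1 + \mathrm{Bin}(n-j,u)$, which is stochastically dominated by $1 + \mathrm{Bin}(n,u)$, giving the clean reduction $\P(S \geq T \mid S \geq 1) \leq \P(S \geq T-1)$. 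After that a single Chernoff application suffices with no case split, since $T - 1 \geq e^2 un$ forces $(eun/(T-1))^{T-1} \leq e^{-(T-1)}$ uniformly, and $T-1 \geq 1 + 2\log(1/\delta)$ closes it out. Both arguments are sound and yield the same constant; your stochastic-domination reduction trades the paper's elementary algebra plus case analysis for one slightly less elementary (but standard) coupling step, and arguably makes the role of the $e^2$ and the factor $2$ in the threshold more transparent. One minor point common to both proofs: the statement implicitly assumes $u>0$ so that the conditioning event has positive probability; this is harmless but worth noting.
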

\begin{proof}
    Since $b_i$ are i.i.d., using the inequality $1-x \leq \exp(-x)$ for any $x$, 
    \begin{align*}
        \P\left(\sum_{i=1}^n b_{i} \geq 1\right) = 1 - (1-u)^n \geq 1 - \exp(-un). 
    \end{align*}
    
    By Chernoff's inequality (c.f. \citep{vershynin2018high}) and the chain rule of probability for any $q > un$,
    \begin{align}\label{eq: chern_helper}
        \P\left(\sum_{i=1}^n b_{i} \geq q \mid \sum_{i=1}^n b_{i} \geq 1\right) = \frac{\P\left(\sum_{i=1}^n b_{i} \geq q\right)}{\P\left(\sum_{i=1}^n b_{i} \geq 1\right)} 
        \leq \frac{\exp(-un)}{1 - \exp(-un)} \left(\frac{eun}{q}\right)^q \leq \frac{1}{un}\left(\frac{eun}{q}\right)^q,
    \end{align}
    where the last inequality uses that $\frac{e^{-x}}{1-e^{-x}} \leq 1/x$ for any $x>0$.
    Now we split into two cases depending on $un$. First, if $un \leq \frac{4}{e^2}\delta < 1$, for any $q\geq 2$, \eqref{eq: chern_helper} becomes
    \begin{align*}
        \P\left(\sum_{i=1}^n b_{i} \geq q \mid \sum_{i=1}^n b_{i} \geq 1\right) 
        \leq \frac{1}{un}\left(\frac{eun}{q}\right)^q
        = \left(\frac{e}{q}\right)^q (un)^{q-1} \leq \frac{e^2}{4}un \leq \delta.
    \end{align*}
    
    On the other hand, if $un > \frac{4}{e^2}\delta$, taking $q \geq 2 + e^2 un + 2\log\left(\frac{1}{\delta}\right)$ (which in particular ensures that $\left(\frac{eun}{q}\right) \leq 1/e$ and $q \geq 2 + 2\log\left(\frac{1}{\delta}\right)$), \eqref{eq: chern_helper} becomes

    \begin{align*}
        \P\left(\sum_{i=1}^n b_{i} \geq q \mid \sum_{i=1}^n b_{i} \geq 1\right) \leq \frac{1}{un} \exp\left(-2 - 2\log\left(\frac{1}{\delta}\right)\right) < \frac{e^2}{4 \delta} \frac{1}{e^2}\delta^2 < \delta.
    \end{align*}

    In either case, $q \geq 2 + e^2 un + 2\log\left(\frac{1}{\delta}\right)$ suffices to ensure that desired bound.
\end{proof}

\begin{lemma}\label{lem: max_uniform}
    For $n\in\N$ let $x_1,\ldots, x_n \sim U([0,1])$ be i.i.d. uniform $[0,1]$ random variables. 
    \begin{enumerate}
        \item For any $\delta > 0$, it holds with probability at least $1-\delta$ that 
        \begin{align} \label{eq: max_upper}
            \max_{i\in[n]} x_i \leq 1 - \frac{\delta}{n}.
        \end{align}

        \item For any $\delta >0$, it holds with probability at least $1-\delta$ that 
        \begin{align} \label{eq: max_lower}
            \max_{i\in[n]} x_i \geq 1 - \frac{\log\left(\frac{1}{\delta}\right)}{n}.
        \end{align}
    \end{enumerate}

    As a result, for any $\delta >0$, it holds with probability at least $1-\delta$ that 
    \begin{align}\label{eq: max_total}
        1 - \frac{\log\left(\frac{2}{\delta}\right)}{n} \leq \max_{i\in[n]} x_i \leq 1 - \frac{\delta}{2n}.
    \end{align}
\end{lemma}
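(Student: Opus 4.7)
The plan is to compute the CDF of $\max_{i\in[n]} x_i$ directly and apply two standard one-line inequalities; the combined bound is then just a union-bound. Since $x_1,\ldots,x_n$ are i.i.d.\ $U([0,1])$, for any $a\in[0,1]$ we have $\P(\max_i x_i \leq a) = a^n$, and I would use this identity for both halves of the lemma.

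For \eqref{eq: max_upper}, I would set $a = 1 - \delta/n$. If $\delta > n$ the claimed bound is vacuous (the RHS is negative), so assume $\delta \leq n$, in which case $a\in[0,1]$. The complementary probability is $1 - (1-\delta/n)^n$, and Bernoulli's inequality gives
\[
(1 - \delta/n)^n \geq 1 - n\cdot(\delta/n) = 1 - \delta,
\]
so the complementary probability is at most $\delta$, as needed.

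For \eqref{eq: max_lower}, I would set $a = 1 - \log(1/\delta)/n$ (again the claim is vacuous if $\log(1/\delta) > n$, so assume $a\in[0,1]$). Then $\P(\max_i x_i < a) = a^n$, and the standard inequality $1-x \leq e^{-x}$ gives
\[
\left(1 - \tfrac{\log(1/\delta)}{n}\right)^n \leq \exp\!\left(-\log(1/\delta)\right) = \delta.
\]

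The combined statement \eqref{eq: max_total} follows by applying the two parts at confidence $\delta/2$ each and taking a union bound. I do not anticipate any real obstacle: this is pure one-dimensional computation. The only care is handling the degenerate regimes of $\delta$ (e.g.\ $\delta > 1$ or $\delta < e^{-n}$), where the claimed inclusion becomes trivial since $\max_i x_i \in [0,1]$ almost surely.
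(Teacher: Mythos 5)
Your proof is correct and follows essentially the same route as the paper: compute the exact CDF $\P(\max_i x_i \le a) = a^n$, apply Bernoulli's inequality for the upper tail, apply $1-x \le e^{-x}$ for the lower tail, and union-bound the two at confidence $\delta/2$ for the combined statement. The extra remark about vacuous regimes of $\delta$ is a nice touch but matches the same argument.
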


\begin{proof}
    Since $x_i$ are i.i.d., the CDF of $\max_{i\in[n]} x_i$ is given by
    \begin{align*}
        \P\left(\max_{i\in[n]} x_i \leq 1-u\right) = \prod_{i=1}^n \P\left(x_i \leq 1-u\right) = (1-u)^n.
    \end{align*}

    To prove \eqref{eq: max_upper}, by Bernoulli's inequality $(1-u)^n \geq 1-un$, so it suffices to take $u=\frac{\delta}{n}$. 

    To prove \eqref{eq: max_lower}, we use the well known inequality $1-u \leq \exp(-u)$ to obtain
    \begin{align*}
        \P\left(\max_{i\in[n]} x_i \geq 1-u\right) = 1 - (1-u)^n \geq 1 - \exp(-un).
    \end{align*}
    Taking $u=\frac{1}{n}\log\left(\frac{1}{\delta}\right)$ completes the proof.

    \eqref{eq: max_total} follows from \eqref{eq: max_lower} and \eqref{eq: max_upper} with $\delta/2$ and the union bound.
\end{proof}

The following lemma gives a version of the uniform law of large numbers that is suited for TV-consistency. We note that the conditions can be made even milder (c.f. \citep{tauchen1985diagnostic}), and are relatively similar to those of \citep{wald1949note, redner1981note}. 
\begin{lemma}[\citet{newey1994large} Lemma 2.4]\label{lem: ulln}
Let $\Theta \subseteq \R^d$ be compact, $\thetait{0}\in\Theta$, let $\{x_i\}_{i=1}^n \sim p_{\thetait{0}}$ be i.i.d. and let $f(x, \theta)$ be a function which for any $\theta\in\Theta$ is measurable, continuous for almost all $x$s, and satisfies $\abs{f(x,\theta)} \leq \phi(x)$ for some function $\phi(x)$ with $\E_{x\sim p_{\thetait{0}}}[\phi(x)] < \infty$. Then $\E[f(x,\theta)]$ is continuous in $\theta$ and 
\begin{align*}
    \sup_{\theta \in \Theta} \abs{\frac{1}{n} \sum_{i=1}^n f(x_i, \theta) - \E_{x\sim p_{\thetait{0}}}[f(x, \theta)]} \overset{\P}{\underset{n\to\infty}{\longrightarrow}} 0.
\end{align*}
    
\end{lemma}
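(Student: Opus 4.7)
The plan is to combine pointwise convergence from the ordinary strong law of large numbers with a compactness-plus-equicontinuity argument, the classical template for such uniform laws. Before anything else, I would verify the continuity claim: fix $\theta_0 \in \Theta$ and any sequence $\theta_k \to \theta_0$. Since $f(x, \cdot)$ is continuous at $\theta_0$ for a.e.~$x$, one has $f(x, \theta_k) \to f(x, \theta_0)$ almost surely, and the domination $|f(x, \theta_k)| \leq \phi(x)$ with $\E[\phi(x)] < \infty$ lets the dominated convergence theorem deliver $\E[f(x, \theta_k)] \to \E[f(x, \theta_0)]$.

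For the uniform convergence, I would introduce for each $\theta \in \Theta$ and $\epsilon > 0$ the local envelopes
\begin{align*}
    f_\epsilon^+(x, \theta) := \sup_{\theta' \in \Theta \cap B_\epsilon(\theta)} f(x, \theta'), \qquad f_\epsilon^-(x, \theta) := \inf_{\theta' \in \Theta \cap B_\epsilon(\theta)} f(x, \theta').
\end{align*}
Both are bounded in absolute value by $\phi(x)$, and because $f(x, \cdot)$ is continuous for a.e.~$x$ while $\Theta \cap B_\epsilon(\theta)$ is separable, the sup and inf can be evaluated over a fixed countable dense subset, which ensures measurability. As $\epsilon \downarrow 0$, continuity of $f(x, \cdot)$ gives $f_\epsilon^\pm(x, \theta) \to f(x, \theta)$ almost surely, and dominated convergence then yields $\E[f_\epsilon^\pm(x, \theta)] \to \E[f(x, \theta)]$, so in particular $\E[f_\epsilon^+(x, \theta) - f_\epsilon^-(x, \theta)] \to 0$.

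Next, I would use compactness to reduce the supremum to finitely many test points. Fix $\eta > 0$; for every $\theta \in \Theta$ pick $\epsilon(\theta) > 0$ small enough that $\E[f_{\epsilon(\theta)}^+(x, \theta) - f_{\epsilon(\theta)}^-(x, \theta)] < \eta$. The balls $\{B_{\epsilon(\theta)/2}(\theta)\}_{\theta \in \Theta}$ cover $\Theta$, so by compactness a finite subcover with centers $\theta_1, \ldots, \theta_K$ and radii $\epsilon_j/2$ suffices. For any $\theta \in B_{\epsilon_j/2}(\theta_j) \subseteq B_{\epsilon_j}(\theta_j)$ one has the pointwise sandwich $f_{\epsilon_j}^-(x, \theta_j) \leq f(x, \theta) \leq f_{\epsilon_j}^+(x, \theta_j)$, which after averaging and a telescoping with $\E[f_{\epsilon_j}^+(x, \theta_j)]$ yields
\begin{align*}
    \frac{1}{n}\sum_i f(x_i, \theta) - \E[f(x, \theta)] \leq \left[\frac{1}{n}\sum_i f_{\epsilon_j}^+(x_i, \theta_j) - \E[f_{\epsilon_j}^+(x, \theta_j)]\right] + \eta,
\end{align*}
with a symmetric lower bound in terms of $f_{\epsilon_j}^-$. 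The ordinary scalar law of large numbers applied to each of the $2K$ integrable sequences $\{f_{\epsilon_j}^\pm(x_i, \theta_j)\}_i$ forces the bracketed terms to $0$ in probability, and since there are only finitely many of them this happens simultaneously. Taking the supremum over $\theta \in \Theta$ therefore leaves only $\eta$ in the limit; sending $\eta \to 0$ closes the argument.

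The main technical subtlety is the measurability of the envelopes $f_\epsilon^\pm(\cdot, \theta)$, since a priori these are suprema over an uncountable index set. This is resolved by restricting attention to the probability-one event on which $f(x, \cdot)$ is continuous on all of $\Theta$, and then replacing the uncountable sup/inf by one over a countable dense subset of $\Theta \cap B_\epsilon(\theta)$. Apart from this measurability bookkeeping, the argument is a mechanical combination of dominated convergence with a finite-cover reduction to the pointwise law of large numbers.
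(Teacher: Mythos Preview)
The paper does not prove this lemma; it is quoted verbatim as Lemma~2.4 of \citet{newey1994large} and invoked without argument. Your proof is correct and is in fact the classical bracketing-plus-compactness argument that Newey and McFadden themselves use: local sup/inf envelopes, dominated convergence to control their expected oscillation, a finite subcover of $\Theta$, and then the pointwise law of large numbers on the finitely many envelope functions. Your handling of the measurability of $f_\epsilon^\pm$ via a countable dense subset on the full-measure continuity set is the standard fix and is fine.
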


\begin{lemma}\label{lem: consistency}
    Let $\Theta \subseteq \R^d$ be compact, $\bar{\theta}\in\Theta$, and assume that for any $\theta\in\Theta$, $\log(p_{\theta}(\bx))$ is measurable, continuous for almost all $\bx$, and satisfies $\abs{\log(p_{\theta}(\bx))} \leq \phi(\bx)$ for some function $\phi(\bx)$ with $\E_{\bx\sim p_{\bar{\theta}}}[\phi(\bx)] < \infty$. Then if for any $n$, there exists an MLE $\hat \theta^{(n)}$ with respect to $n$ i.i.d. samples from $\bar{\theta}$, it holds that 
    \begin{align*}
        \tv\left(p_{\bar{\theta}}, p_{\hat \theta^{(n)}}\right) \overset{\P}{\underset{n\to\infty}{\longrightarrow}} 0. 
    \end{align*}
\end{lemma}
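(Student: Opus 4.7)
The plan is to use a classical Wald-type consistency argument, leveraging the uniform law of large numbers stated in Lemma~\ref{lem: ulln}, and then convert convergence in KL divergence to convergence in total variation via Pinsker's inequality.

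First, I would apply Lemma~\ref{lem: ulln} to the function $f(\bx,\theta) := \log p_\theta(\bx)$. The hypotheses match exactly: measurability, continuity in $\theta$ for almost every $\bx$, and the domination $\abs{\log p_\theta(\bx)} \leq \phi(\bx)$ with $\E_{\bx\sim p_{\bar\theta}}[\phi(\bx)] < \infty$. Lemma~\ref{lem: ulln} therefore yields
\begin{align*}
\sup_{\theta \in \Theta} \abs{\frac{1}{n}\sum_{i=1}^n \log p_\theta(\bx_i) - \E_{\bx\sim p_{\bar\theta}}[\log p_\theta(\bx)]} \overset{\P}{\underset{n\to\infty}{\longrightarrow}} 0.
\end{align*}
Because this bound is uniform over $\theta \in \Theta$, it continues to control the fluctuation at any (possibly data-dependent) choice of parameters, which is what allows me to plug in the random estimator $\hat\theta^{(n)}$.

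Next, I would combine uniform convergence with MLE optimality. By definition, $\frac{1}{n}\sum_i \log p_{\hat\theta^{(n)}}(\bx_i) \geq \frac{1}{n}\sum_i \log p_{\bar\theta}(\bx_i)$. Applying the uniform bound at both $\theta = \hat\theta^{(n)}$ and $\theta = \bar\theta$ yields
\begin{align*}
\E_{\bx\sim p_{\bar\theta}}\!\left[\log p_{\hat\theta^{(n)}}(\bx)\right] \;\geq\; \E_{\bx\sim p_{\bar\theta}}\!\left[\log p_{\bar\theta}(\bx)\right] - o_\P(1).
\end{align*}
The domination hypothesis ensures that both expectations above are finite, so the identity
\begin{align*}
\KL{p_{\bar\theta}}{p_{\theta}} \;=\; \E_{\bx\sim p_{\bar\theta}}\!\left[\log p_{\bar\theta}(\bx)\right] - \E_{\bx\sim p_{\bar\theta}}\!\left[\log p_{\theta}(\bx)\right]
\end{align*}
is well-defined. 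Setting $\theta = \hat\theta^{(n)}$ and using non-negativity of the KL divergence, I conclude $0 \leq \KL{p_{\bar\theta}}{p_{\hat\theta^{(n)}}} \leq o_\P(1)$, so $\KL{p_{\bar\theta}}{p_{\hat\theta^{(n)}}} \overset{\P}{\to} 0$. Pinsker's inequality then gives $\tv(p_{\bar\theta}, p_{\hat\theta^{(n)}}) \leq \sqrt{\tfrac{1}{2}\KL{p_{\bar\theta}}{p_{\hat\theta^{(n)}}}} \overset{\P}{\to} 0$, which is the desired conclusion.

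I expect the main subtlety to be the step that evaluates the ULLN bound at the random estimator $\hat\theta^{(n)}$: a pointwise law of large numbers would not suffice, so one genuinely needs the uniform-in-$\theta$ conclusion of Lemma~\ref{lem: ulln}. The envelope function $\phi$ plays a double role here, both unlocking the ULLN and guaranteeing that the KL identity is a difference of finite integrals rather than an indeterminate expression. Beyond these points, the argument consists of routine manipulations.
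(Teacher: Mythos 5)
Your proposal is correct and follows essentially the same argument as the paper: apply the uniform law of large numbers (Lemma~\ref{lem: ulln}), invoke MLE optimality to relate the empirical likelihoods at $\hat\theta^{(n)}$ and $\bar\theta$, deduce that the KL divergence vanishes in probability, and finish with Pinsker's inequality. The only cosmetic difference is that you phrase the quantitative step with $o_{\P}(1)$ notation whereas the paper fixes $\epsilon,\delta$ explicitly, and you correctly emphasize that the uniformity of the ULLN is what licenses plugging in the data-dependent $\hat\theta^{(n)}$.
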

\begin{proof}
       By \lemref{lem: ulln}, for any $\delta, \epsilon > 0$, there is some $n_0\in\N$ such that for all $n\geq n_0$,
    \begin{align*}
        \sup_{\theta \in \Theta}\P\left(\abs{\ell(\theta) - \E_{\bx\sim p_{\bar{\theta}}}[-\log \left(p_{\theta}(\bx)\right)]} \leq \epsilon^2 \right) \geq 1 - \frac{\delta}{2}.
    \end{align*}
    $\hat \theta^{(n)}$ minimizes $\ell$, implying $\ell(\hat \theta^{(n)}) \leq \ell(\bar{\theta})$ and thus with probability at least $1-\delta$,
    \begin{align*}
        0 \geq \ell(\hat \theta^{(n)}) - \ell(\bar{\theta}) \geq \E_{\bx\sim p_{\bar{\theta}}}[-\log \left(p_{\hat \theta^{(n)}}(\bx)\right) + \log \left(p_{\bar \theta}(\bx)\right)] - 2\epsilon^2 = \KL{p_{\bar{\theta}}}{p_{\hat \theta^{(n)}}} - 2\epsilon^2.
    \end{align*}
    Rearranging and using Pinsker's inequality,
    \begin{align*}
        \tv\left(p_{\bar{\theta}}, p_{\hat \theta^{(n)}}\right) \leq \sqrt{\frac{1}{2}\KL{p_{\bar{\theta}}}{p_{\hat \theta^{(n)}}}} \leq \sqrt{\frac{1}{2} 2\epsilon^2} = \epsilon.
    \end{align*}

\end{proof}

\section{Experiments} \label{app: experiments}
\begin{figure*}[h!]
    \centering
    % Subfigure 1
    \begin{subfigure}[t]{0.48\textwidth}
        \centering
        \includegraphics[width=\textwidth]{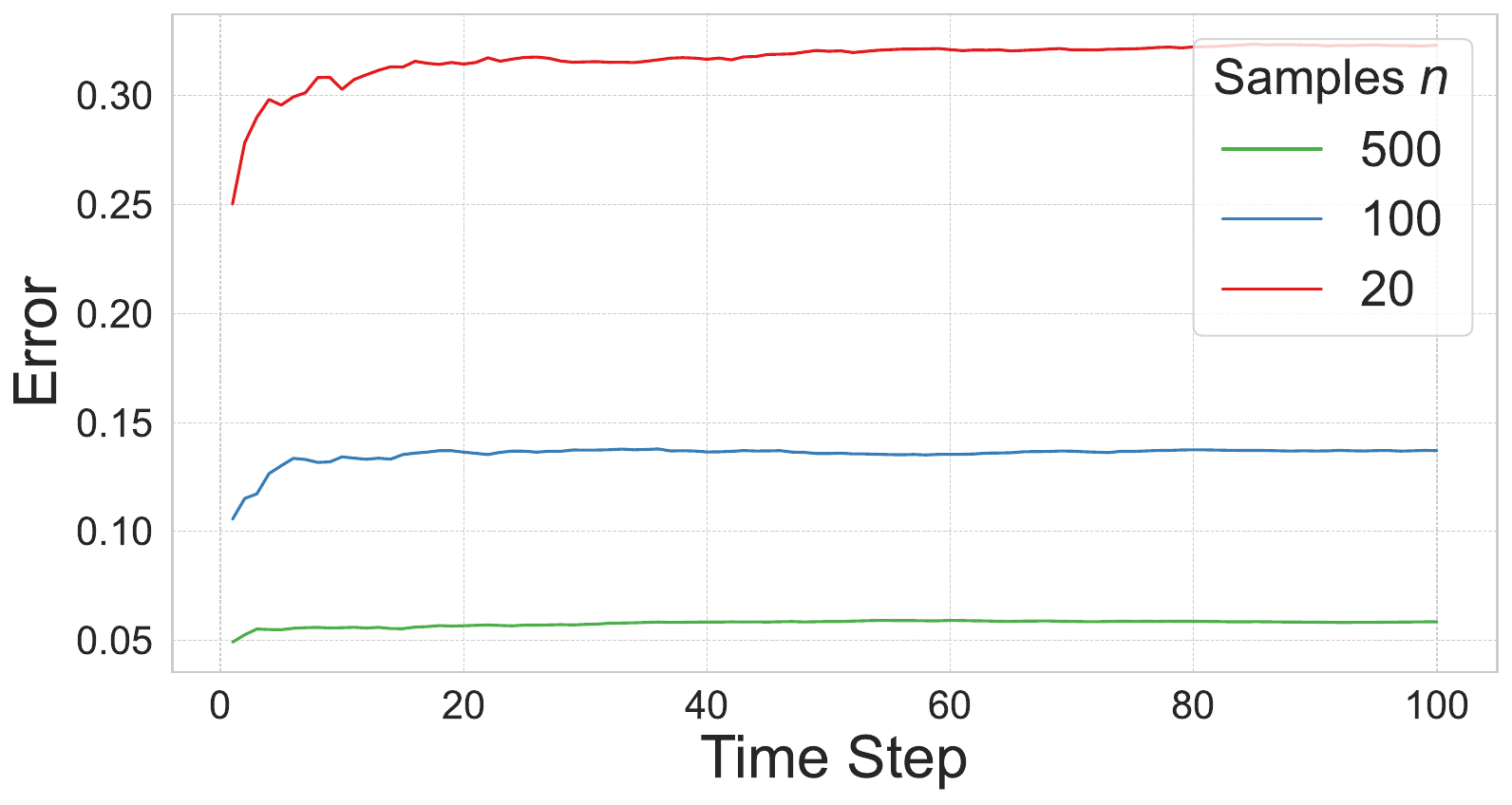}        
        \caption{Exact MLE}
    \end{subfigure}
    \hspace{0.01\textwidth}
    \begin{subfigure}[t]{0.48\textwidth}
        \centering
        \includegraphics[width=\textwidth]{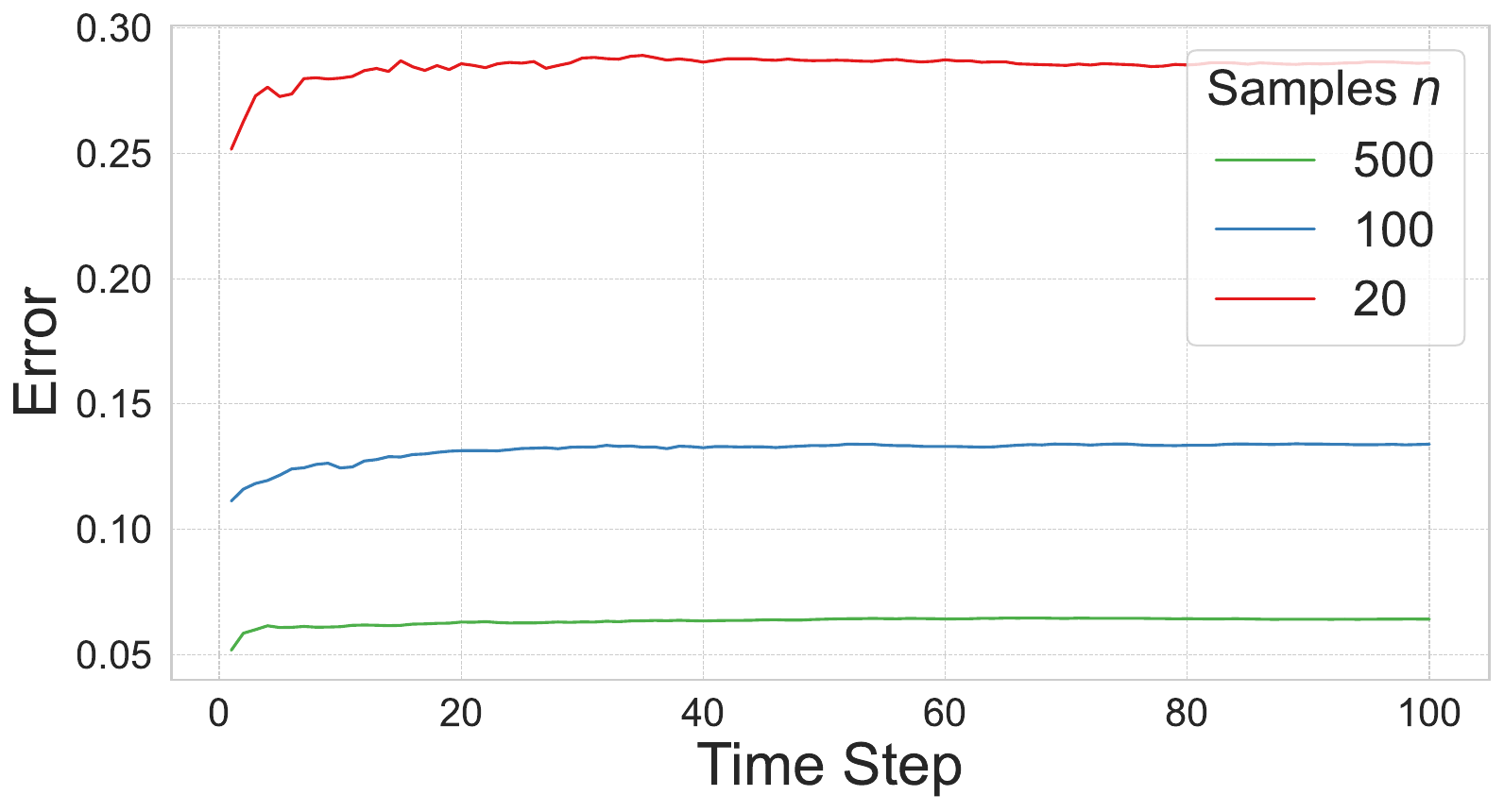}
        \caption{Optimized MLE}
    \end{subfigure}
    \hspace{0.01\textwidth}
    \caption{MLE for a one-dimensional Gaussian distribution.}
    \label{fig: gaussian}
\end{figure*}

\begin{figure*}[h!]
    \centering
    % Subfigure 1
    \begin{subfigure}[t]{0.48\textwidth}
        \centering
        \includegraphics[width=\textwidth]{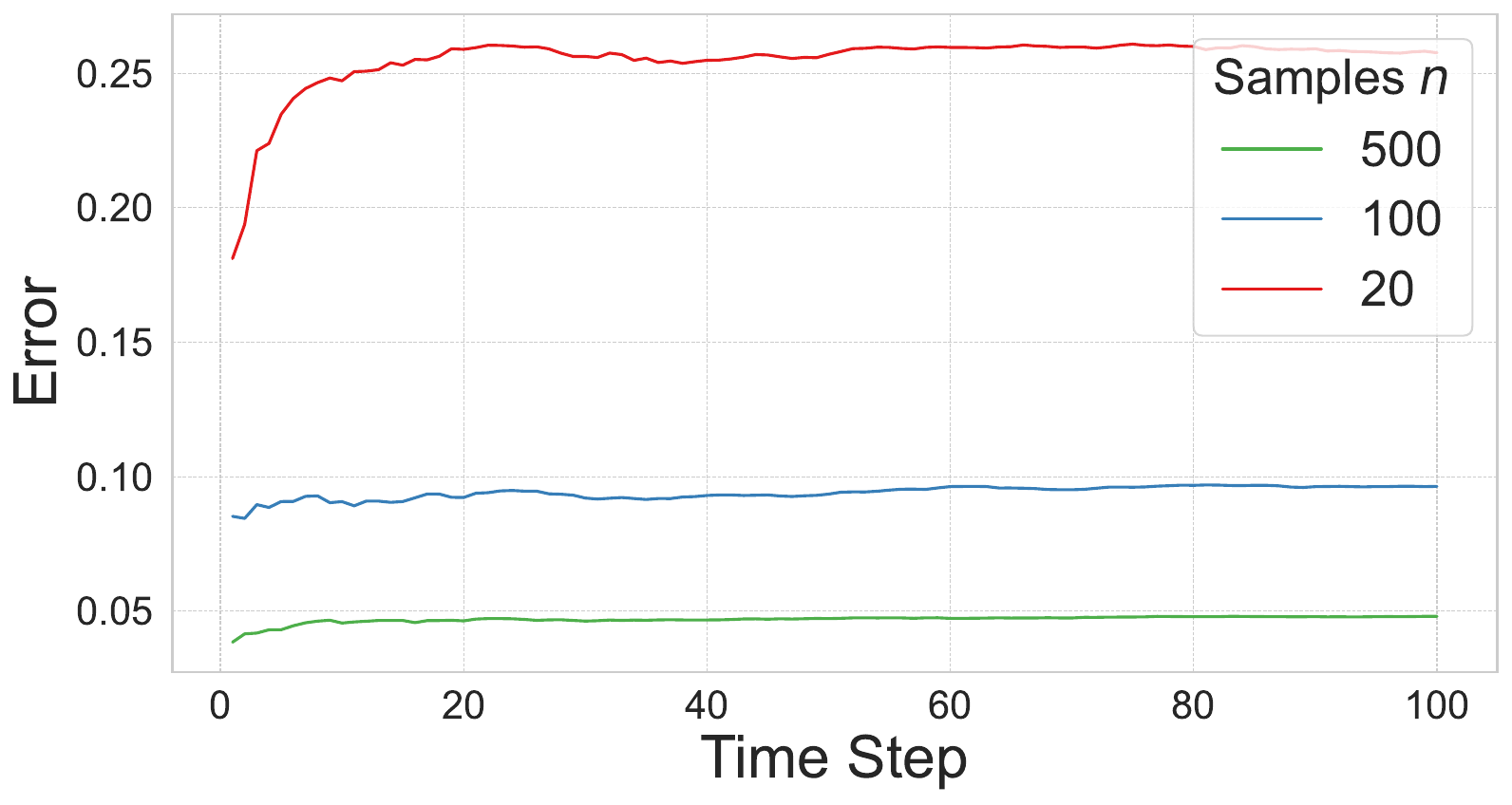}        
        \caption{Exact MLE}
    \end{subfigure}
    \hspace{0.01\textwidth}
    \begin{subfigure}[t]{0.48\textwidth}
        \centering
        \includegraphics[width=\textwidth]{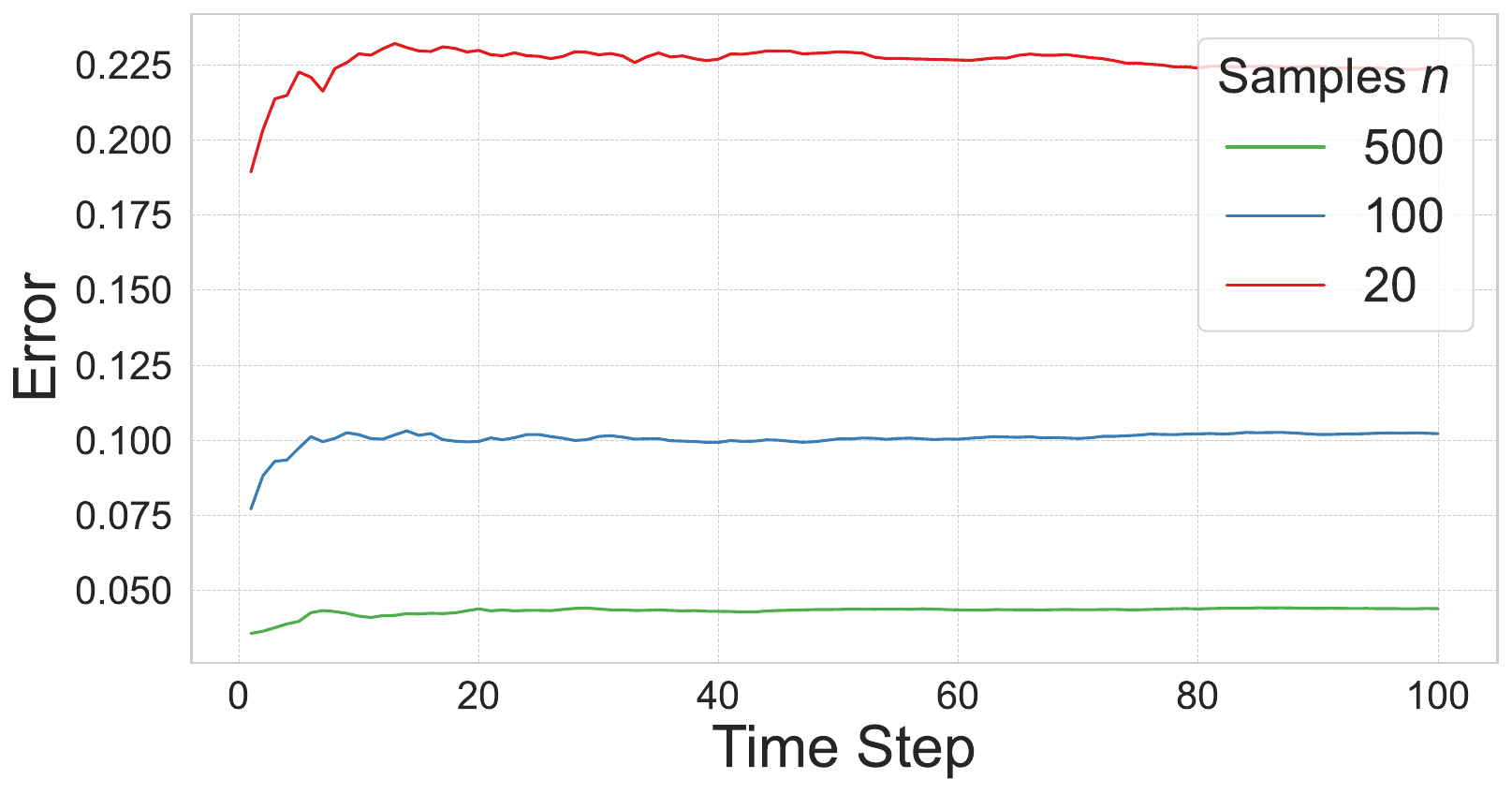}
        \caption{Optimized MLE}
    \end{subfigure}
    \hspace{0.01\textwidth}
    \caption{MLE for a one-dimensional Exponential distribution.}
    \label{fig: exp}
\end{figure*}

\begin{figure*}[h!]
    \centering
    % Subfigure 1
    \begin{subfigure}[t]{0.48\textwidth}
        \centering
        \includegraphics[width=\textwidth]{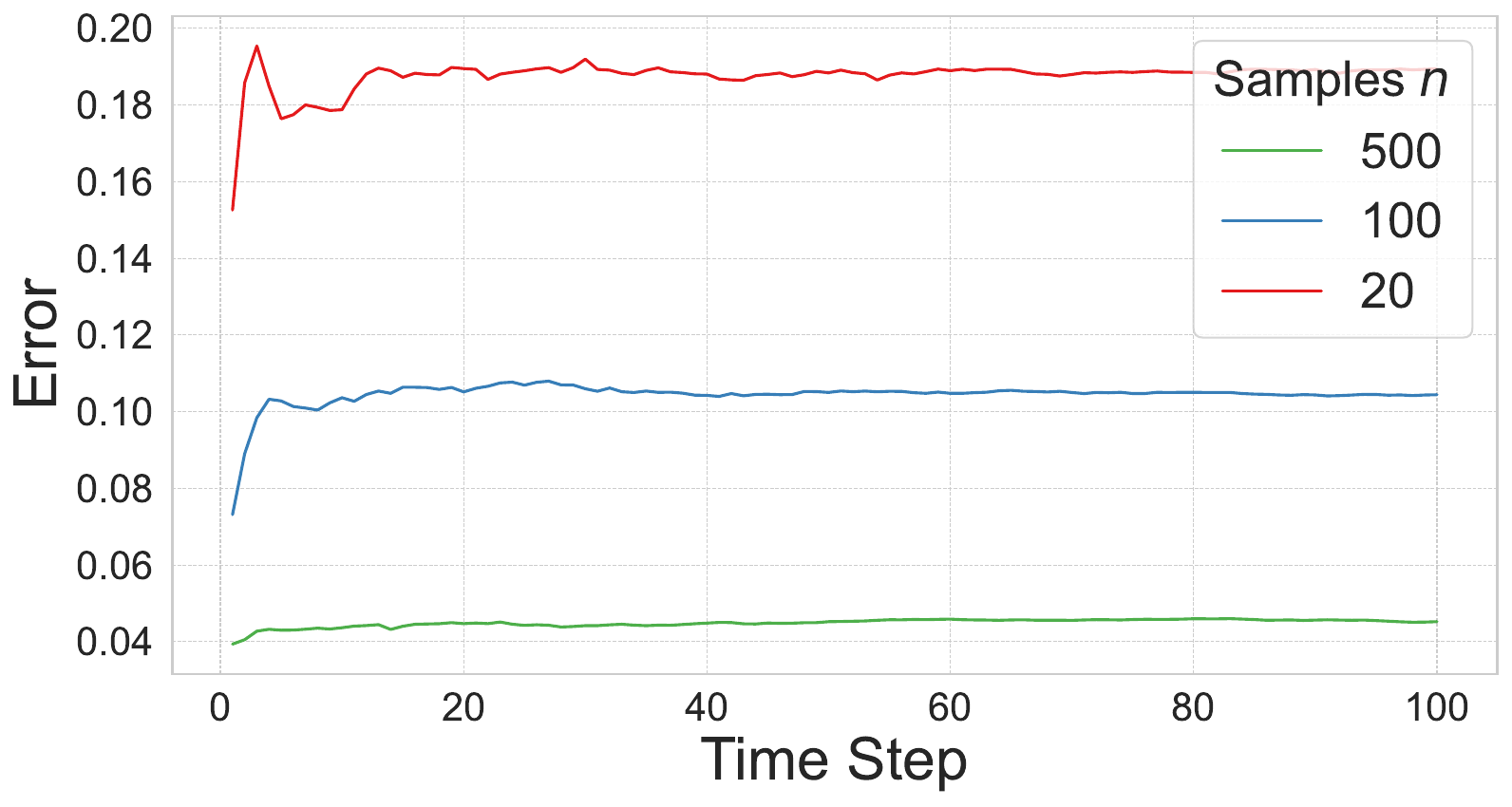}        
        \caption{Exact MLE}
    \end{subfigure}
    \hspace{0.01\textwidth}
    \begin{subfigure}[t]{0.48\textwidth}
        \centering
        \includegraphics[width=\textwidth]{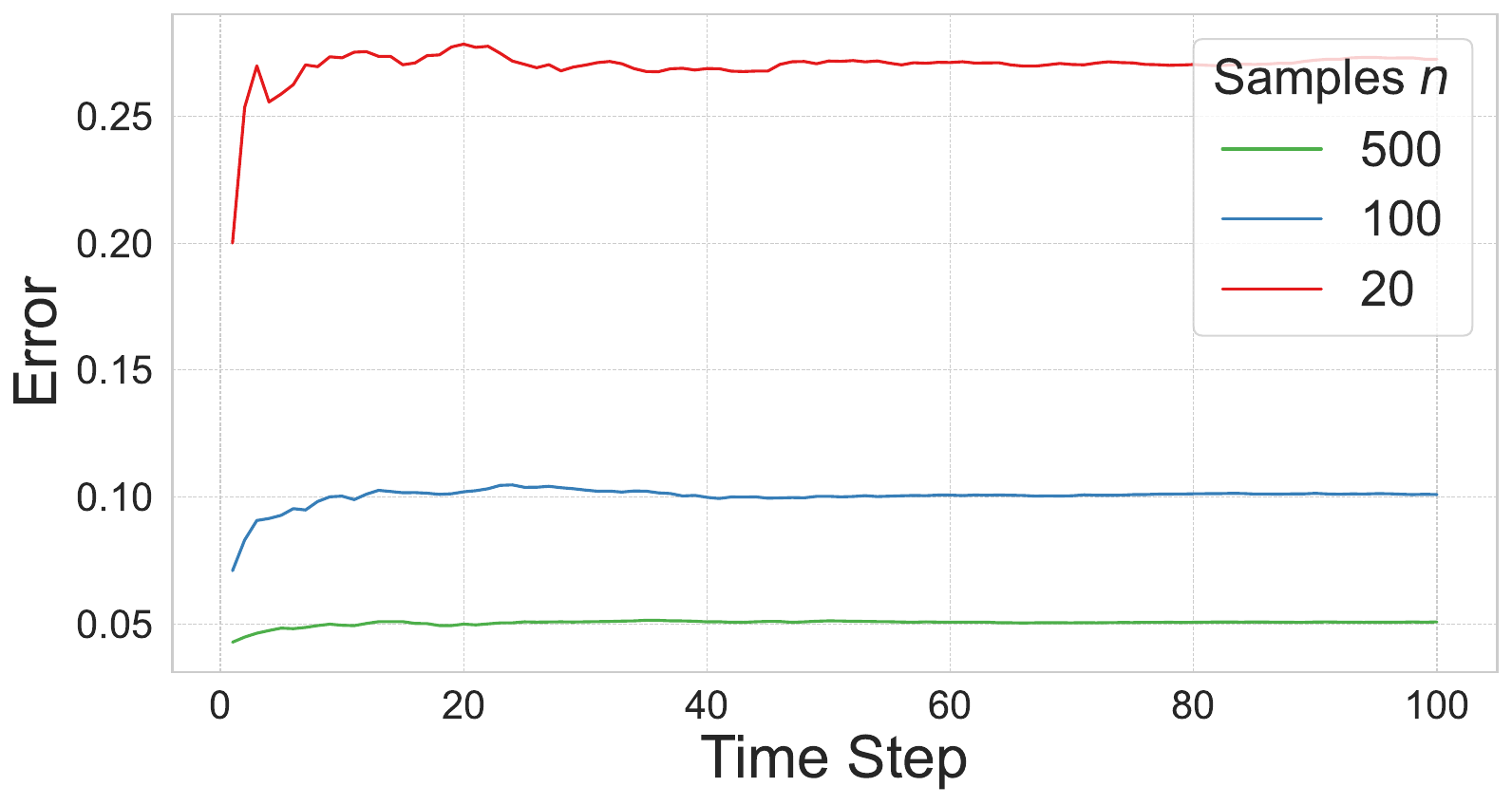}
        \caption{Optimized MLE}
    \end{subfigure}
    \hspace{0.01\textwidth}
    \caption{MLE with respect to a Beta distribution family with PDFs given by $p(x ; \theta) = \theta x^{\theta - 1}$ for $\theta > 0$ and $x\in(0,1)$. }
    \label{fig: beta}
\end{figure*}

There are by now many experiments in the literature that support our finding from \thmref{thm: positive_result} \citep{alemohammad2024self, gerstgrasser2025model, dey2024universality}. In particular, in those papers, the error does not increase much from iteration to iteration when synthetic data is added gradually.

Rather than repeating experiments, we analyze the difference between exact MLE solutions and those that are obtained via optimization. To this end, we pick several families whose MLE has a known closed form. 
These include a Gaussian (where the parameters are the mean and std) in \figref{fig: gaussian}, Exponential distribution in \figref{fig: exp}, and a family of Beta distributions with PDFs given by $p(x ; \theta) = \theta x^{\theta - 1}$ for $\theta > 0$ and $x\in(0,1)$ in \figref{fig: beta}. The real parameters are $\theta_0 = (\mu=0,\sigma=1)$ for the Gaussian and $\theta_0=1$ for the other distributions. 
When optimizing numerically for the MLE, we use scipy.optimize.minimize on the negative log likelihood to find the parameters. We opt for this built-in function to remove any uncertainty regarding the quality of the optimization code itself. 
We take the number of samples to be one of $20, 50, $ or $100$. We run the iterative MLE algorithm as specified in the paper for up to $T=100$. All values are averaged over 50 runs. The error is measured by the norm relative to the real parameters, meaning $\norm{\theta^{(t)} - \theta_0}$. In all cases, the error at all timesteps is similar to the error at time $1$, as our theory would suggest. Furthermore, we observe the model (non)-collapse behavior between the exact MLE and the optimized one to be similar.

We also consider various $\theta_0$ going from $0.1$ to $1$ for the Beta distribution, where a smaller $\theta_0$ corresponds to a neighborhood of the parameters that are less smooth. In all cases, we plot the ratio between the error at time $T$ to the error at time $1$. For $\theta = 1$, the error increases by a factor of only ~1.25 across $100$ iterations, but for the “less smooth” $\theta_0 = 0.1$, the error increases by a factor of 3.27. These confirm that our negative results hint at a more general phenomenon and support our results.

\begin{figure*}[h!]
\centering
\begin{subfigure}[t]{0.48\textwidth}
        \includegraphics[width=\textwidth]{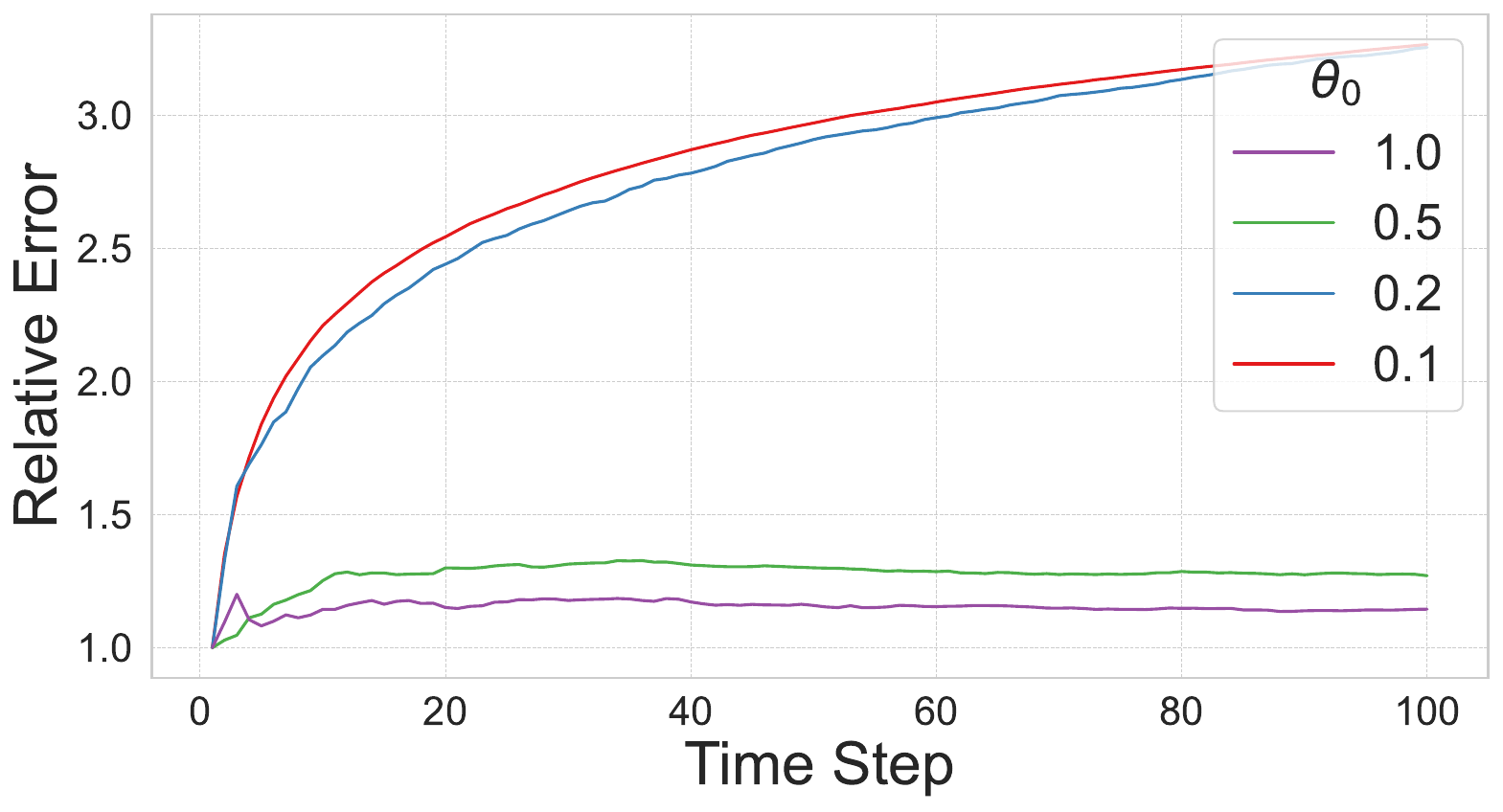}
    \end{subfigure}
    \hspace{0.01\textwidth}
    \caption{MLE with respect to a Beta distribution family with PDFs given by $p(x ; \theta) = \theta x^{\theta - 1}$ for $\theta > 0$ and $x\in(0,1)$, for various choices of real parameter $\theta_0$.}
    \label{fig: beta2}
\end{figure*}

%%%%%%%%%%%%%%%%%%%%%%%%%%%%%%%%%%%%%%%%%%%%%%%%%%%%%%%%%%%

\newpage
\section*{NeurIPS Paper Checklist}

\begin{enumerate}

\item {\bf Claims}
    \item[] Question: Do the main claims made in the abstract and introduction accurately reflect the paper's contributions and scope?
    \item[] Answer: \answerYes{}.
    \item[] Justification: The abstract and introduction summarize the theorems and reference them. All claims are supported by the theorems. 
    \item[] Guidelines:
    \begin{itemize}
        \item The answer NA means that the abstract and introduction do not include the claims made in the paper.
        \item The abstract and/or introduction should clearly state the claims made, including the contributions made in the paper and important assumptions and limitations. A No or NA answer to this question will not be perceived well by the reviewers. 
        \item The claims made should match theoretical and experimental results, and reflect how much the results can be expected to generalize to other settings. 
        \item It is fine to include aspirational goals as motivation as long as it is clear that these goals are not attained by the paper. 
    \end{itemize}

\item {\bf Limitations}
    \item[] Question: Does the paper discuss the limitations of the work performed by the authors?
    \item[] Answer: \answerYes{} % Replace by \answerYes{}, \answerNo{}, or \answerNA{}.
    \item[] Justification: We discuss the limitations in the "Discussion" section.
    \item[] Guidelines:
    \begin{itemize}
        \item The answer NA means that the paper has no limitation while the answer No means that the paper has limitations, but those are not discussed in the paper. 
        \item The authors are encouraged to create a separate "Limitations" section in their paper.
        \item The paper should point out any strong assumptions and how robust the results are to violations of these assumptions (e.g., independence assumptions, noiseless settings, model well-specification, asymptotic approximations only holding locally). The authors should reflect on how these assumptions might be violated in practice and what the implications would be.
        \item The authors should reflect on the scope of the claims made, e.g., if the approach was only tested on a few datasets or with a few runs. In general, empirical results often depend on implicit assumptions, which should be articulated.
        \item The authors should reflect on the factors that influence the performance of the approach. For example, a facial recognition algorithm may perform poorly when image resolution is low or images are taken in low lighting. Or a speech-to-text system might not be used reliably to provide closed captions for online lectures because it fails to handle technical jargon.
        \item The authors should discuss the computational efficiency of the proposed algorithms and how they scale with dataset size.
        \item If applicable, the authors should discuss possible limitations of their approach to address problems of privacy and fairness.
        \item While the authors might fear that complete honesty about limitations might be used by reviewers as grounds for rejection, a worse outcome might be that reviewers discover limitations that aren't acknowledged in the paper. The authors should use their best judgment and recognize that individual actions in favor of transparency play an important role in developing norms that preserve the integrity of the community. Reviewers will be specifically instructed to not penalize honesty concerning limitations.
    \end{itemize}

\item {\bf Theory assumptions and proofs}
    \item[] Question: For each theoretical result, does the paper provide the full set of assumptions and a complete (and correct) proof?
    \item[] Answer: \answerYes{}{} % Replace by \answerYes{}, \answerNo{}, or \answerNA{}.
    \item[] Justification: The setting and notation are discussed in detail in the "Setting and Notation" section. All theorems are rigorously proved in the appendix. We also include proof sketches in the main paper to accompany the proofs in the appendix. 
    \item[] Guidelines:
    \begin{itemize}
        \item The answer NA means that the paper does not include theoretical results. 
        \item All the theorems, formulas, and proofs in the paper should be numbered and cross-referenced.
        \item All assumptions should be clearly stated or referenced in the statement of any theorems.
        \item The proofs can either appear in the main paper or the supplemental material, but if they appear in the supplemental material, the authors are encouraged to provide a short proof sketch to provide intuition. 
        \item Inversely, any informal proof provided in the core of the paper should be complemented by formal proofs provided in appendix or supplemental material.
        \item Theorems and Lemmas that the proof relies upon should be properly referenced. 
    \end{itemize}

    \item {\bf Experimental result reproducibility}
    \item[] Question: Does the paper fully disclose all the information needed to reproduce the main experimental results of the paper to the extent that it affects the main claims and/or conclusions of the paper (regardless of whether the code and data are provided or not)?
    \item[] Answer: \answerNA{} % Replace by \answerYes{}, \answerNo{}, or \answerNA{}.
    \item[] Justification: \answerNA{}
    \item[] Guidelines:
    \begin{itemize}
        \item The answer NA means that the paper does not include experiments.
        \item If the paper includes experiments, a No answer to this question will not be perceived well by the reviewers: Making the paper reproducible is important, regardless of whether the code and data are provided or not.
        \item If the contribution is a dataset and/or model, the authors should describe the steps taken to make their results reproducible or verifiable. 
        \item Depending on the contribution, reproducibility can be accomplished in various ways. For example, if the contribution is a novel architecture, describing the architecture fully might suffice, or if the contribution is a specific model and empirical evaluation, it may be necessary to either make it possible for others to replicate the model with the same dataset, or provide access to the model. In general. releasing code and data is often one good way to accomplish this, but reproducibility can also be provided via detailed instructions for how to replicate the results, access to a hosted model (e.g., in the case of a large language model), releasing of a model checkpoint, or other means that are appropriate to the research performed.
        \item While NeurIPS does not require releasing code, the conference does require all submissions to provide some reasonable avenue for reproducibility, which may depend on the nature of the contribution. For example
        \begin{enumerate}
            \item If the contribution is primarily a new algorithm, the paper should make it clear how to reproduce that algorithm.
            \item If the contribution is primarily a new model architecture, the paper should describe the architecture clearly and fully.
            \item If the contribution is a new model (e.g., a large language model), then there should either be a way to access this model for reproducing the results or a way to reproduce the model (e.g., with an open-source dataset or instructions for how to construct the dataset).
            \item We recognize that reproducibility may be tricky in some cases, in which case authors are welcome to describe the particular way they provide for reproducibility. In the case of closed-source models, it may be that access to the model is limited in some way (e.g., to registered users), but it should be possible for other researchers to have some path to reproducing or verifying the results.
        \end{enumerate}
    \end{itemize}

\item {\bf Open access to data and code}
    \item[] Question: Does the paper provide open access to the data and code, with sufficient instructions to faithfully reproduce the main experimental results, as described in supplemental material?
    \item[] Answer: \answerNA{} % Replace by \answerYes{}, \answerNo{}, or \answerNA{}.
    \item[] Justification: \answerNA{}
    \item[] Guidelines:
    \begin{itemize}
        \item The answer NA means that paper does not include experiments requiring code.
        \item Please see the NeurIPS code and data submission guidelines (\url{https://nips.cc/public/guides/CodeSubmissionPolicy}) for more details.
        \item While we encourage the release of code and data, we understand that this might not be possible, so “No” is an acceptable answer. Papers cannot be rejected simply for not including code, unless this is central to the contribution (e.g., for a new open-source benchmark).
        \item The instructions should contain the exact command and environment needed to run to reproduce the results. See the NeurIPS code and data submission guidelines (\url{https://nips.cc/public/guides/CodeSubmissionPolicy}) for more details.
        \item The authors should provide instructions on data access and preparation, including how to access the raw data, preprocessed data, intermediate data, and generated data, etc.
        \item The authors should provide scripts to reproduce all experimental results for the new proposed method and baselines. If only a subset of experiments are reproducible, they should state which ones are omitted from the script and why.
        \item At submission time, to preserve anonymity, the authors should release anonymized versions (if applicable).
        \item Providing as much information as possible in supplemental material (appended to the paper) is recommended, but including URLs to data and code is permitted.
    \end{itemize}

\item {\bf Experimental setting/details}
    \item[] Question: Does the paper specify all the training and test details (e.g., data splits, hyperparameters, how they were chosen, type of optimizer, etc.) necessary to understand the results?
    \item[] Answer: \answerNA{} % Replace by \answerYes{}, \answerNo{}, or \answerNA{}.
    \item[] Justification: \answerNA{}
    \item[] Guidelines:
    \begin{itemize}
        \item The answer NA means that the paper does not include experiments.
        \item The experimental setting should be presented in the core of the paper to a level of detail that is necessary to appreciate the results and make sense of them.
        \item The full details can be provided either with the code, in appendix, or as supplemental material.
    \end{itemize}

\item {\bf Experiment statistical significance}
    \item[] Question: Does the paper report error bars suitably and correctly defined or other appropriate information about the statistical significance of the experiments?
    \item[] Answer: \answerNA{} % Replace by \answerYes{}, \answerNo{}, or \answerNA{}.
    \item[] Justification: \answerNA{}
    \item[] Guidelines:
    \begin{itemize}
        \item The answer NA means that the paper does not include experiments.
        \item The authors should answer "Yes" if the results are accompanied by error bars, confidence intervals, or statistical significance tests, at least for the experiments that support the main claims of the paper.
        \item The factors of variability that the error bars are capturing should be clearly stated (for example, train/test split, initialization, random drawing of some parameter, or overall run with given experimental conditions).
        \item The method for calculating the error bars should be explained (closed form formula, call to a library function, bootstrap, etc.)
        \item The assumptions made should be given (e.g., Normally distributed errors).
        \item It should be clear whether the error bar is the standard deviation or the standard error of the mean.
        \item It is OK to report 1-sigma error bars, but one should state it. The authors should preferably report a 2-sigma error bar than state that they have a 96\% CI, if the hypothesis of Normality of errors is not verified.
        \item For asymmetric distributions, the authors should be careful not to show in tables or figures symmetric error bars that would yield results that are out of range (e.g. negative error rates).
        \item If error bars are reported in tables or plots, The authors should explain in the text how they were calculated and reference the corresponding figures or tables in the text.
    \end{itemize}

\item {\bf Experiments compute resources}
    \item[] Question: For each experiment, does the paper provide sufficient information on the computer resources (type of compute workers, memory, time of execution) needed to reproduce the experiments?
    \item[] Answer: \answerNA{} % Replace by \answerYes{}, \answerNo{}, or \answerNA{}.
    \item[] Justification: \answerNA{}
    \item[] Guidelines:
    \begin{itemize}
        \item The answer NA means that the paper does not include experiments.
        \item The paper should indicate the type of compute workers CPU or GPU, internal cluster, or cloud provider, including relevant memory and storage.
        \item The paper should provide the amount of compute required for each of the individual experimental runs as well as estimate the total compute. 
        \item The paper should disclose whether the full research project required more compute than the experiments reported in the paper (e.g., preliminary or failed experiments that didn't make it into the paper). 
    \end{itemize}
    
\item {\bf Code of ethics}
    \item[] Question: Does the research conducted in the paper conform, in every respect, with the NeurIPS Code of Ethics \url{https://neurips.cc/public/EthicsGuidelines}?
    \item[] Answer: \answerYes{} % Replace by \answerYes{}, \answerNo{}, or \answerNA{}.
    \item[] Justification: The research conforms in every respect to the code of ethics, and we do not foresee any negative implications of our work.
    \item[] Guidelines:
    \begin{itemize}
        \item The answer NA means that the authors have not reviewed the NeurIPS Code of Ethics.
        \item If the authors answer No, they should explain the special circumstances that require a deviation from the Code of Ethics.
        \item The authors should make sure to preserve anonymity (e.g., if there is a special consideration due to laws or regulations in their jurisdiction).
    \end{itemize}

\item {\bf Broader impacts}
    \item[] Question: Does the paper discuss both potential positive societal impacts and negative societal impacts of the work performed?
    \item[] Answer: \answerYes{} % Replace by \answerYes{}, \answerNo{}, or \answerNA{}.
    \item[] Justification: Our work is completely theoretical; We do address the potential societal impacts of model collapse (e.g. in the introduction), but we do not expect any such impacts to be directly influenced by the current work.
    \item[] Guidelines:
    \begin{itemize}
        \item The answer NA means that there is no societal impact of the work performed.
        \item If the authors answer NA or No, they should explain why their work has no societal impact or why the paper does not address societal impact.
        \item Examples of negative societal impacts include potential malicious or unintended uses (e.g., disinformation, generating fake profiles, surveillance), fairness considerations (e.g., deployment of technologies that could make decisions that unfairly impact specific groups), privacy considerations, and security considerations.
        \item The conference expects that many papers will be foundational research and not tied to particular applications, let alone deployments. However, if there is a direct path to any negative applications, the authors should point it out. For example, it is legitimate to point out that an improvement in the quality of generative models could be used to generate deepfakes for disinformation. On the other hand, it is not needed to point out that a generic algorithm for optimizing neural networks could enable people to train models that generate Deepfakes faster.
        \item The authors should consider possible harms that could arise when the technology is being used as intended and functioning correctly, harms that could arise when the technology is being used as intended but gives incorrect results, and harms following from (intentional or unintentional) misuse of the technology.
        \item If there are negative societal impacts, the authors could also discuss possible mitigation strategies (e.g., gated release of models, providing defenses in addition to attacks, mechanisms for monitoring misuse, mechanisms to monitor how a system learns from feedback over time, improving the efficiency and accessibility of ML).
    \end{itemize}
    
\item {\bf Safeguards}
    \item[] Question: Does the paper describe safeguards that have been put in place for responsible release of data or models that have a high risk for misuse (e.g., pretrained language models, image generators, or scraped datasets)?
    \item[] Answer: \answerNA{} % Replace by \answerYes{}, \answerNo{}, or \answerNA{}.
    \item[] Justification: \answerNA{}
    \item[] Guidelines:
    \begin{itemize}
        \item The answer NA means that the paper poses no such risks.
        \item Released models that have a high risk for misuse or dual-use should be released with necessary safeguards to allow for controlled use of the model, for example by requiring that users adhere to usage guidelines or restrictions to access the model or implementing safety filters. 
        \item Datasets that have been scraped from the Internet could pose safety risks. The authors should describe how they avoided releasing unsafe images.
        \item We recognize that providing effective safeguards is challenging, and many papers do not require this, but we encourage authors to take this into account and make a best faith effort.
    \end{itemize}

\item {\bf Licenses for existing assets}
    \item[] Question: Are the creators or original owners of assets (e.g., code, data, models), used in the paper, properly credited and are the license and terms of use explicitly mentioned and properly respected?
    \item[] Answer: \answerNA{} % Replace by \answerYes{}, \answerNo{}, or \answerNA{}.
    \item[] Justification: \answerNA{}
    \item[] Guidelines:
    \begin{itemize}
        \item The answer NA means that the paper does not use existing assets.
        \item The authors should cite the original paper that produced the code package or dataset.
        \item The authors should state which version of the asset is used and, if possible, include a URL.
        \item The name of the license (e.g., CC-BY 4.0) should be included for each asset.
        \item For scraped data from a particular source (e.g., website), the copyright and terms of service of that source should be provided.
        \item If assets are released, the license, copyright information, and terms of use in the package should be provided. For popular datasets, \url{paperswithcode.com/datasets} has curated licenses for some datasets. Their licensing guide can help determine the license of a dataset.
        \item For existing datasets that are re-packaged, both the original license and the license of the derived asset (if it has changed) should be provided.
        \item If this information is not available online, the authors are encouraged to reach out to the asset's creators.
    \end{itemize}

\item {\bf New assets}
    \item[] Question: Are new assets introduced in the paper well documented and is the documentation provided alongside the assets?
    \item[] Answer: \answerNA{} % Replace by \answerYes{}, \answerNo{}, or \answerNA{}.
    \item[] Justification: \answerNA{}
    \item[] Guidelines:
    \begin{itemize}
        \item The answer NA means that the paper does not release new assets.
        \item Researchers should communicate the details of the dataset/code/model as part of their submissions via structured templates. This includes details about training, license, limitations, etc. 
        \item The paper should discuss whether and how consent was obtained from people whose asset is used.
        \item At submission time, remember to anonymize your assets (if applicable). You can either create an anonymized URL or include an anonymized zip file.
    \end{itemize}

\item {\bf Crowdsourcing and research with human subjects}
    \item[] Question: For crowdsourcing experiments and research with human subjects, does the paper include the full text of instructions given to participants and screenshots, if applicable, as well as details about compensation (if any)? 
    \item[] Answer: \answerNA{} % Replace by \answerYes{}, \answerNo{}, or \answerNA{}.
    \item[] Justification: \answerNA{}
    \item[] Guidelines:
    \begin{itemize}
        \item The answer NA means that the paper does not involve crowdsourcing nor research with human subjects.
        \item Including this information in the supplemental material is fine, but if the main contribution of the paper involves human subjects, then as much detail as possible should be included in the main paper. 
        \item According to the NeurIPS Code of Ethics, workers involved in data collection, curation, or other labor should be paid at least the minimum wage in the country of the data collector. 
    \end{itemize}

\item {\bf Institutional review board (IRB) approvals or equivalent for research with human subjects}
    \item[] Question: Does the paper describe potential risks incurred by study participants, whether such risks were disclosed to the subjects, and whether Institutional Review Board (IRB) approvals (or an equivalent approval/review based on the requirements of your country or institution) were obtained?
    \item[] Answer: \answerNA{} % Replace by \answerYes{}, \answerNo{}, or \answerNA{}.
    \item[] Justification: \answerNA{}
    \item[] Guidelines:
    \begin{itemize}
        \item The answer NA means that the paper does not involve crowdsourcing nor research with human subjects.
        \item Depending on the country in which research is conducted, IRB approval (or equivalent) may be required for any human subjects research. If you obtained IRB approval, you should clearly state this in the paper. 
        \item We recognize that the procedures for this may vary significantly between institutions and locations, and we expect authors to adhere to the NeurIPS Code of Ethics and the guidelines for their institution. 
        \item For initial submissions, do not include any information that would break anonymity (if applicable), such as the institution conducting the review.
    \end{itemize}

\item {\bf Declaration of LLM usage}
    \item[] Question: Does the paper describe the usage of LLMs if it is an important, original, or non-standard component of the core methods in this research? Note that if the LLM is used only for writing, editing, or formatting purposes and does not impact the core methodology, scientific rigorousness, or originality of the research, declaration is not required.
    %this research? 
    \item[] Answer: \answerNA{} % Replace by \answerYes{}, \answerNo{}, or \answerNA{}.
    \item[] Justification: \answerNA{}
    \item[] Guidelines:
    \begin{itemize}
        \item The answer NA means that the core method development in this research does not involve LLMs as any important, original, or non-standard components.
        \item Please refer to our LLM policy (\url{https://neurips.cc/Conferences/2025/LLM}) for what should or should not be described.
    \end{itemize}

\end{enumerate}

\end{document}